\theoremstyle{plain}
\newtheorem{theorem}{Theorem}[section]
\newtheorem{lemma}[theorem]{Lemma}
\newtheorem*{lemma*}{Lemma}
\newtheorem*{theorem*}{Theorem}
\newtheorem{corollary}[theorem]{Corollary}
\theoremstyle{definition}
\newtheorem{definition}[theorem]{Definition}
\theoremstyle{remark}
\newtheorem{fact}[theorem]{Fact}
\newcommand{\eps}{\varepsilon}
\newcommand{\opt}{\text{OPT}}
\newcommand{\cost}{\text{cost}}
\newcommand{\calS}{\mathcal{S}}
\def\DEBUG{true}
\title{On Generalization Bounds for Projective Clustering}
\author{
\textbf{Maria Sofia Bucarelli}$^1$~~~
\textbf{Matilde Fjeldsø Larsen$^{2}$}
\textbf{Chris Schwiegelshohn}$^2$ 
\textbf{Mads Bech Toftrup}$^2$  
\smallskip 
\\
$^1$Department of Computer, Control and Management Engineering   Antonio Ruberti, \\ Sapienza University of Rome, Italy\\$^2$Department of Computer Science, Aarhus University, Denmark
\smallskip 
\\
\texttt{mariasofia.bucarelli@uniroma1.it} \quad  \quad \texttt{201805091@post.au.dk}\\
\texttt{\{schwiegelshohn,toftrup\}@cs.au.dk}\\}
\begin{document}

\maketitle

\begin{abstract}
Given a set of points, clustering consists of finding a partition of a point set into $k$ clusters such that the center to which a point is assigned is as close as possible. Most commonly, centers are points themselves, which leads to the famous $k$-median and $k$-means objectives. One may also choose centers to be $j$ dimensional subspaces, which gives rise to subspace clustering. In this paper, we consider learning bounds for these problems. That is, given a set of $n$ samples $P$ drawn independently from some unknown, but fixed distribution $\mathcal{D}$, how quickly does a solution computed on $P$ converge to the optimal clustering of $\mathcal{D}$?
We give several near optimal results. In particular,
\begin{enumerate}
    \item For center-based objectives, we show a convergence rate of $\tilde{O}\left(\sqrt{{k}/{n}}\right)$. This matches the known optimal bounds of [Fefferman, Mitter, and Narayanan, Journal of the Mathematical Society 2016] and [Bartlett, Linder, and Lugosi, IEEE Trans. Inf. Theory 1998] for $k$-means and extends it to other important objectives such as $k$-median.
    \item For subspace clustering with $j$-dimensional subspaces, we show a convergence rate of $\tilde{O}\left(\sqrt{\nicefrac{kj^2}{n}}\right)$. These are the first provable bounds for most of these problems. For the specific case of projective clustering, which generalizes $k$-means, we show a convergence rate of $\Omega\left(\sqrt{\nicefrac{kj}{n}}\right)$ is necessary, thereby proving that the bounds from [Fefferman, Mitter, and Narayanan, Journal of the Mathematical Society 2016] are essentially optimal. 
\end{enumerate}
\end{abstract}

\section{Introduction}
Among the central questions in machine learning is, given a sample of $n$ points $P$ drawn from some unknown but fixed distribution $\mathcal{D}$, how well does a classifier trained on $P$ generalize to $\mathcal{D}$?
The probably most popular way to formalize this question is, given a loss function $L$ and optimal solutions $\mathcal{S}_P$ and $\mathcal{S}_{\mathcal{D}}$ for sample $P$ and distribution $\mathcal{D}$, respectively, how the empirical excess risk
$L(\mathcal{D},\mathcal{S}_P) - L(\mathcal{D},\mathcal{S}_{\mathcal{D}})$
decreases as a function of $n$. This paper focuses on loss functions associated with the clustering problem. Popular examples include $(k,z)$ clustering, which asks for a set of $k$ centers $\mathcal{S}\subset \mathbb{R}^d$ minimizing the cost $ \sum_{p\in P}\min_{s\in\mathcal{S}}\|p-s\|_2^z$ and more generally, $(k,j,z)$ subspace clustering which asks for a set of $k$ subspaces $\mathcal{U}:=\{U_1,U_2,\ldots U_k\}$ minimizing
$  \sum_{p\in P}\min_{U_i\in \mathcal{U}}\|(I-U_iU_i^T)p\|_2^z.$
Special cases include $(k,1)$ clustering, known as $k$-median, $(k,2)$ clustering known as $k$-means and $(k,j,2)$ clustering known as projective clustering.
Generally, there seems to be an interest in varying $z$, as letting $z$ tend towards $1$ tends to result in outlier-robust clusterings.
The problem is less widely explored for $z > 2$, although in particular for subspace approximation there is some recent work \cite{CohenP15,CSS23,WoodruffY23,WangW19}.
Higher powers give more emphasis on outliers. For example, centralised moments with respect to the three and four norms are skewness and kurtosis, respectively, and are extensively employed in statistics, see \citet{CSS21b} for previous work on clustering with these measures. 
Fitting a mixture model with respect to skewness minimizes asymmetry around the target center.
Studing the problems for $z\rightarrow \infty$ is very well motivated, as the $(1,\infty)$ clustering is equivalent to the minimum enclosing ball problem. Unfortunately, one often requires additional assumptions, as the minimum enclosing ball problem suffers from the curse of dimensionality \cite{AHV04}, is very prone to outliers \cite{ClarksonHW12,Ding20}.

Despite a huge interest and a substantial amount of research, so far optimal risk bounds $\Tilde{O}\left(\sqrt{{k}/{n}}\right)$\footnote{$\tilde{O}$ hides logarithmic terms, i.e. we consider $O\left(\sqrt{{k}/{n}}\cdot \text{polylog}(k,n)\right)= \tilde{O}\left(\sqrt{{k}/{n}}\right)$.} for the $k$-means problem have been established, see the seminal paper by Fefferman et al. \cite{FMN16} for the upper bound and Bartlett et al. \cite{BartlettLL98} for nearly matching lower bounds. For general $(k,z)$-clustering problems, the best known results prove a risk bound of $O\left(\sqrt{\nicefrac{kd}{n}}\right)$ \cite{BartlettLL98}. For $(k,j,2)$ clustering, the best known bounds of $\Tilde{O}\left(\sqrt{\nicefrac{kj}{n}}\right)$ are due to \citet{FMN16}.
Thus, the following question naturally arises:

\begin{mdframed}
    Is it possible to obtain optimal generalization bounds for all $(k,j,z)$-clustering objectives?
\end{mdframed}

We answer this question in the affirmative whenever $j$ and $z$ are constant, which seems to be the most relevant case in practise \cite{Vidal11}. Specifically, we show
\begin{itemize}
    \item The excess risk bound for $(k,z)$-clustering when given $n$ independent samples from an unknown fixed distribution $\mathcal{D}$ is bounded by $\tilde{O}\left(\sqrt{{k}/{n}}\right)$, matching the lower bound of \cite{BartlettLL98}.
    \item The excess risk bound for $(k,j,z)$-clustering when given $n$ independent samples from an unknown fixed distribution $\mathcal{D}$ is bounded by $\tilde{O}\left(\sqrt{\nicefrac{kj^2}{n}}\right)$.
    \item There exists a distribution such that the excess risk for the $(k,j,2)$-clustering problem is at least  $\Omega\left(\sqrt{\nicefrac{kj}{n}}\right)$, matching the upper bound of Fefferman et al. \cite{FMN16} up to polylog factors.
\end{itemize}

\subsection{Related work}
The most basic question one could answer is if the empirical estimation performed on $P$ is consistent, i.e. as $n\rightarrow \infty$, whether the excess risk tends to $0$. This was shown in a series of works by Pollard \cite{Pol81,Pollard82}, see also \citet{AW84}. Subsequent work then analyzed the convergence rate of the risk. The first works in this direction proved convergence rates of the order $\Tilde{O}(1/\sqrt{n})$ without giving dependencies on other parameters \cite{C94,Pol82}.
Linder et al. \cite{LLZ94} gave an upper bound of $O(d^{3/2}\sqrt{k/n})$. Linder \cite{L00} improved the upper bound to $O(d\sqrt{k/n})$. Bartlett et al. \cite{BLL98} showed an upper bound $O(\sqrt{kd/n})$ and gave a lower bound of $\Omega(\sqrt{k^{1-4/d}/n})$. Motivated by applications of clustering for high dimensional kernel spaces \cite{BJ05,CR18,CJHJ11,CJJ12,DGK04,FMN16,LHCS20,LLJDLW19,RR16,WGM19,YLLWM20,ZL19}, research subsequently turned its efforts towards minimizing the dependency on the dimension.
\citet{BDL08} presented an upper bound of $O(k/\sqrt{n})$, see also the work by Cl\'{e}men\c{c}con \cite{C11}. Fefferman et al. \cite{FMN16} gave a matching upper bound of the order $O(\sqrt{{k}/{n}})$, which was later recovered by using techniques from Foster and Rakhlin \cite{FR19} and Liu \cite{Liu21}.
Further improvements require additional assumptions of the distribution $\mathcal{D}$, see \citet{AGG05,L15,LiL21}.
For subspace clustering, there have only been results published for the case $z=2$ \cite{FMN16,Lauer20,Shawe-TaylorWCK05}, for which the state of the art provides a $\Tilde{O}\left(\sqrt{\nicefrac{kj}{n}}\right)$ risk bound due to \citet{FMN16}.
A highly related line of research originated with the study of coresets for compression. For Euclidean $(k,z)$ clustering, coresets with space bounds of $\Tilde{O}\left(\nicefrac{k}{\varepsilon^{2+z}}\right)$ have been established \cite{CSS21,Cohen-AddadLSS22}, which roughly corresponds to a error rate of $\Tilde{O}\left(\sqrt[2+z]{\nicefrac{k}{n}}\right)$ as a function of the size of the compression. For the specific case of $k$-median and $k$-means, coresets with space bounds of $\Tilde{O}\left(\nicefrac{k^{(2z+2)/(z+2)}}{\varepsilon^{2}}\right)$ are known \cite{Cohen-AddadLSSS22}, which  corresponds to a error rate of $\Tilde{O}\left(\sqrt{\nicefrac{k^{(2z+2)/(z+2)}}{n}}\right)$. Both results are optimal for certain ranges of $\varepsilon$ and $k$ \cite{HLW22} and while these bounds are worse than what we hope to achieve for generalization, many of the techniques such as terminal embeddings are relevant for both fields. For $(k,j,z)$ clustering, coresets are only known to exist under certain assumptions, where the provable size is $\Tilde{O}\left(\exp(k,j,\varepsilon^{-1})\right)$ \cite{FeldmanL11,FeldmanSS20}.

\section{Preliminaries}

We use $\|x\|_p:=\sqrt[p]{\sum |x_i|^p}$ to denote the $\ell_p$ norm of a vector $x$. For $p\rightarrow \infty$, we define the limiting norm $\|x\|_{\infty} =\max x_i$.
Further, we refer to the $d$-dimensional unit Euclidean ball by $B_2^d$, i.e. $x\in B_2^d$ is a vector in $\mathbb{R}^d$ and $\|x\|_2:= \sqrt{\sum_{i=1}^d x_i^2} \leq 1$. 
Let $U$ be a $d\times j$ orthogonal matrix, i.e., with columns that are pairwise orthogonal and have unit Euclidean norm. We say that $UU^T$ is the projection matrix associated with $U$. Let $z$ be a positive integer. 
Given any set $\mathcal{S}$ of $k$ points in  $B_2^d$ we denote the $(k,z)$-clustering cost for a point set $P$ with respect to solution $\mathcal{S}$ as
$$\text{cost}(P,\mathcal{S}):= \sum_{p\in P} \min_{s\in \mathcal{S}}  \|p-s\|_2^z.$$
Special cases include $k$-means ($z=2$) and $k$-median ($z=1$). 
Similarly, given a collection $\mathcal{U}$ of $k$ orthogonal matrices of rank at most $j$, we denote the $(k,j,z)$-clustering cost of a point set $P$ as 
$$\cost(P,\mathcal{U}):= \sum_{p\in P} \min_{U\in \mathcal{U}} \|(I-UU^T)p\|_2^z.$$ The specific case $(k,j,2)$ is often known as projective clustering in literature. The cost vector $v^{\mathcal{S},P}\in \mathbb{R}^{|P|}$, respectively $v^{\mathcal{U},P}\in \mathbb{R}^{|P|}$ has entries $v_p^{\mathcal{S}}=\min_{s\in \mathcal{S}} \|p-s\|_2^z$, respectively $v_p^{\mathcal{U}}=\min_{U\in \mathcal{U}} \|(I-UU^T)p\|_2^z$ for $p\in P$. 
We will omit $P$ from $v^{\calS,P}$ and $v^{\mathcal{U},P}$, if $P$ is clear from context. The overall cost is $\|v^{\calS}\|_1= \sum_{p\in P} \min_{s\in \mathcal{S}} \|p-s\|_2^z$ and  $\|v^{\mathcal{U}}\|_1= \sum_{p\in P} \min_{U\in \mathcal{U}} \|(I-UU^T)p\|_2^z$. 
The set of all cost vectors is denoted by $V$.

Let $\mathcal{D}$ be an unknown but fixed distribution on $B_2^d$ with probability density function $\mathbb{P}$. For any solution $\mathcal{S}$, respectively $\mathcal{U}$, we define $\cost(\mathcal{D},\mathcal{S}) :=\int_{p\in B_2^d} \min_{s\in \mathcal{S}}\|p-s\|^z\cdot \mathbb{P}[p]dp$ and $OPT:=\min_{\mathcal{S}} \cost(\mathcal{D},\mathcal{S})$ and respectively $\cost(\mathcal{D},\mathcal{U}) :=\int_{p\in B_2^d} \min_{U\in \mathcal{U}}\|(I-UU^T)p\|^z\cdot \mathbb{P}[p]dp$ and $OPT:=\min_{\mathcal{U}} \cost(\mathcal{D},\mathcal{U})$. Let $P$ be a set of $n$ points sampled independently from $\mathcal{D}$. We denote the cost of the empirical risk minimizer on $P$ by 
$OPT_P:= \frac{1}{n}\min_{\mathcal{S}} \|v^{\mathcal{S}}\|_1$, and respectively, $OPT_P:= \frac{1}{n}\min_{\mathcal{U}} \|v^{\mathcal{U}}\|_1.$
The excess risk of $P$ with respect to a set of cost vectors is denoted by 
\[ \mathcal{E}_{|P|}(V):= \mathbb{E}_P[OPT_P] - OPT.\]

 Finally, we use the notion of a net. Let $(V, dist)$ be a metric space, $\mathcal{N}(V,dist,\varepsilon)$ is an $\varepsilon$-net of 
 the set of vectors $V$, if for all $v\in V$ $ \exists \;  v'\in \mathcal{N}(V,dist,\varepsilon)$ such that $dist(v,v')\leq \varepsilon.$  
We will particularly focus on nets for cost vectors induced by $(k,z)$-clustering and $(k,j,z)$-clustering defined as follows, prior work has proposed similar nets for coresets and sublinear algorithms for $(k,z)$ clustering \cite{CSS21b}. 
\begin{definition}[Clustering Nets]
\label{def:nets}
 A set $\mathcal{N}_{\eps}$ of $|P|$-dimensional vectors is an $\eps$-clustering net if for every cost vector $v$ obtained from a solution $\mathcal{S}$ or $\mathcal{U}$, there exists a vector $v'\in \mathcal{N}_{\eps}$ with $\|v'-v\|_\infty \leq \eps$
\end{definition}
A slightly weaker condition as required by these nets requiring only $\|v'-v\|_2\leq \varepsilon\sqrt{n}$ would also be sufficient. Nevertheless, we are not able to show better bounds when relaxing the condition and having a point-wise guarantee may be of independent interest.

Another net frequently used in literature and indeed used here are nets of the Euclidean unit ball.
\begin{lemma}[Lemma 5.2 of~\cite{VershyninEK12}]
\label{lem:Euclideannets}
$|\mathcal{N}(B_2^d,\|.\|_2,\varepsilon)| \leq (1+2/\varepsilon)^d$.
\end{lemma}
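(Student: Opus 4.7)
The plan is to use the standard maximal packing argument together with a volume comparison. First I would construct an $\varepsilon$-net $\mathcal{N}$ as a maximal $\varepsilon$-separated subset of $B_2^d$: start with any point and greedily add points whose distance to every previously chosen point exceeds $\varepsilon$, stopping when no such point exists. By maximality, every $x \in B_2^d$ lies within distance $\varepsilon$ of some element of $\mathcal{N}$, since otherwise $x$ could be added, contradicting maximality. Hence $\mathcal{N}$ is indeed an $\varepsilon$-net, and it remains only to bound $|\mathcal{N}|$.

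Next I would use a volume argument. Since every pair of points in $\mathcal{N}$ is at distance strictly greater than $\varepsilon$, the open Euclidean balls $B(y, \varepsilon/2)$ for $y \in \mathcal{N}$ are pairwise disjoint. On the other hand, since each $y \in \mathcal{N} \subseteq B_2^d$ has $\|y\|_2 \le 1$, every such ball is contained in $B(0, 1+\varepsilon/2)$. Writing $\omega_d$ for the volume of the unit Euclidean ball in $\mathbb{R}^d$ and using that volumes scale as the $d$-th power of the radius, disjointness and containment give
\[
|\mathcal{N}| \cdot (\varepsilon/2)^d \, \omega_d \;\le\; (1 + \varepsilon/2)^d \, \omega_d.
\]
Cancelling $\omega_d$ and rearranging yields $|\mathcal{N}| \le (1 + 2/\varepsilon)^d$, as desired.

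There is no genuine obstacle here; the two subtle points are simply (i) making sure the maximal packing terminates (which follows because $B_2^d$ is compact and the $\varepsilon$-separation forces any packing to be finite, again by the same volume argument) and (ii) being careful about open versus closed balls when asserting disjointness of the $\varepsilon/2$-balls. Neither issue affects the final bound. Since the lemma is stated essentially as an abstract tool, I would present the proof at the level of sketch above rather than expanding the routine volume computations.
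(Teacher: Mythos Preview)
Your argument is correct and is exactly the standard maximal-packing plus volume-comparison proof. The paper itself does not prove this lemma at all; it is quoted verbatim from Vershynin's notes and used as a black box, so there is nothing to compare against beyond noting that your proof is the same one appearing in the cited reference.
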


Finally, we will frequently use the following triangle inequality extended to powers.
\begin{lemma}[Triangle Inequality for Powers (Lemma A.1 of \cite{MakarychevMR19})]
\label{lem:weaktri}
Let $a,b,c$ be an arbitrary set of points in a metric space with distance function $d$ and let $z$ be a positive integer. Then for any $\varepsilon>0$
\begin{align*}
d(a,b)^z \leq (1+\varepsilon)^{z-1} d(a,c)^z + \left(\frac{1+\varepsilon}{\varepsilon}\right)^{z-1} d(b,c)^z\\
\left\vert d(a,b)^z - d(a,c)^z\right\vert \leq \varepsilon \cdot d(a,c)^z + \left(\frac{2z+\varepsilon}{\varepsilon}\right)^{z-1} d(b,c)^z.
\end{align*}
\end{lemma}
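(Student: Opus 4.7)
The plan is to deduce both bounds from the ordinary metric triangle inequality $d(a,b) \le d(a,c) + d(b,c)$ combined with convexity of $t \mapsto t^z$ on $[0,\infty)$, which holds since $z$ is a positive integer. The first inequality is a weighted Jensen step on the sum inside the triangle inequality, and the second inequality follows by reapplying the first with a carefully rescaled internal parameter.

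For the first inequality, I would raise the triangle inequality to the power $z$ to obtain $d(a,b)^z \le (d(a,c) + d(b,c))^z$, and then decompose the right-hand side as a convex combination
\[ d(a,c) + d(b,c) \;=\; \lambda \cdot \frac{d(a,c)}{\lambda} \;+\; (1-\lambda)\cdot\frac{d(b,c)}{1-\lambda} \]
with $\lambda = 1/(1+\varepsilon) \in (0,1)$, so that $1-\lambda = \varepsilon/(1+\varepsilon)$. Jensen's inequality applied to the convex function $t \mapsto t^z$ then gives $(d(a,c)+d(b,c))^z \le \lambda^{1-z} d(a,c)^z + (1-\lambda)^{1-z} d(b,c)^z$, and substituting the value of $\lambda$ produces the two coefficients $(1+\varepsilon)^{z-1}$ and $((1+\varepsilon)/\varepsilon)^{z-1}$ claimed in the first line.

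For the second inequality, I would reapply the first inequality with an internal parameter $\delta$ chosen so that the multiplicative blow-up on $d(a,c)^z$ is tamed from $(1+\delta)^{z-1}$ down to $1+\varepsilon$. Taking $\delta := \varepsilon/(2z)$ makes $(1+\delta)/\delta = (2z+\varepsilon)/\varepsilon$ exactly, while $(1+\delta)^{z-1} \le e^{(z-1)\delta} \le e^{\varepsilon/2} \le 1+\varepsilon$ in the relevant parameter regime. The first inequality with this $\delta$ then yields $d(a,b)^z \le (1+\varepsilon)\,d(a,c)^z + ((2z+\varepsilon)/\varepsilon)^{z-1}\,d(b,c)^z$, which rearranges to the upper-direction bound $d(a,b)^z - d(a,c)^z \le \varepsilon\,d(a,c)^z + ((2z+\varepsilon)/\varepsilon)^{z-1}\,d(b,c)^z$. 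For the reverse direction, the same argument applied to the triangle inequality $d(a,c) \le d(a,b) + d(b,c)$ gives $d(a,c)^z \le (1+\varepsilon) d(a,b)^z + ((2z+\varepsilon)/\varepsilon)^{z-1} d(b,c)^z$; in the subcase $d(a,b) \le d(a,c)$ I would replace $(1+\varepsilon)d(a,b)^z$ on the right by $d(a,b)^z + \varepsilon\,d(a,c)^z$ and rearrange, and in the subcase $d(a,b) > d(a,c)$ the quantity $d(a,c)^z - d(a,b)^z$ is already nonpositive and so the bound is trivial.

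The main technical obstacle is the delicate choice of $\delta$ in the second inequality: naively plugging $\delta = \varepsilon$ into the first inequality yields a multiplicative factor $(1+\varepsilon)^{z-1}$ on $d(a,c)^z$ rather than the tight $1+\varepsilon$, which for large $z$ is far worse than the statement permits. Scaling $\delta$ down by a factor of order $z$ absorbs this blow-up, but it costs us the $(2z+\varepsilon)/\varepsilon$ factor in place of $(1+\varepsilon)/\varepsilon$; verifying the elementary inequality $(1+\varepsilon/(2z))^{z-1} \le 1+\varepsilon$ in the range of $\varepsilon$ where the statement is informative is the one calculation that has to be carried out carefully, and it is this balancing that dictates the specific constants in the lemma.
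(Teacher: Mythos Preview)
The paper does not give its own proof of this lemma: it is stated in the Preliminaries as a citation (``Lemma A.1 of \cite{MakarychevMR19}'') and then used as a black box throughout. So there is no in-paper argument to compare your proposal against.

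That said, your approach is the standard one and is essentially what the cited reference does: the first inequality is exactly the weighted-Jensen (or H\"older/Young) step on $(d(a,c)+d(b,c))^z$, and the second is obtained by reapplying the first with a parameter shrunk by a factor of order $z$. Your caveat about the range of $\varepsilon$ is apt: the estimate $(1+\varepsilon/(2z))^{z-1}\le e^{\varepsilon/2}\le 1+\varepsilon$ is only valid for $\varepsilon$ below a constant threshold (roughly $\varepsilon\lesssim 2.5$), so the argument as written does not literally cover ``any $\varepsilon>0$''. This is harmless for every application in the paper, where $\varepsilon$ is always small, and the large-$\varepsilon$ regime can be disposed of separately (the right-hand side grows without bound in $\varepsilon$ whenever $d(a,c)>0$, and the case $d(a,c)=0$ is trivial). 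If you want a clean single argument, you can instead bound $(1+\delta)^{z-1}\le 1+(z-1)\delta(1+\delta)^{z-2}\le\cdots$ more carefully, or simply note that the second inequality is monotone in $\varepsilon$ once it holds at some $\varepsilon_0$, but this is bookkeeping rather than a new idea.
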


\section{Outline and technical contribution}\label{sec:outline}

 In this section,  we endeavour to present  a complete and accessible overview of the key ideas  behind the theorems. 
Let $P$ be a set of $n$ points sampled independently from some unknown but fixed distribution $\mathcal{D}$.  To show that the excessive risk with respect to clustering objectives is in $\Tilde{O}(f(n))$ for some function $f$, it is sufficient to show two things.
First,  that for the optimal solution $\mathcal{U}_{\opt}$, the clustering cost estimated using $P$ is close to the true cost.
 Second, any solution that is more expensive than $\mathcal{U}_{\opt}$ does not become too cheap when evaluated on $P$.
Both conditions are satisfied if for any solution $\mathcal{U}$
$$\left\vert \frac{1}{n}\cost(P,\mathcal{U}) - \cost(\mathcal{D},\mathcal{U})\right\vert \in O(f(n)).$$


Showing $\mathbb{E}_P\left\vert \frac{1}{n}\cost(P,\mathcal{U}_\opt) - \cost(\mathcal{D},\mathcal{U}_\opt)\right\vert \in O(\sqrt{1/n})$
is typically a straightforward application of concentration bounds such as Chernoff's bound.
In fact, these concentration bounds show something even stronger. Given $t$ solutions $\mathcal{U}_1,\ldots \mathcal{U}_t$, we have
\begin{equation}
\label{eq:union}
  \mathbb{E}_P\sup_{\mathcal{U}_i}\left\vert \frac{1}{n}\cost(P,\mathcal{U}_i) - \cost(\mathcal{D},\mathcal{U}_i)\right\vert \in O\left(\sqrt{\frac{\log t}{n}}\right).  
\end{equation}
What remains is to bound the number of solutions $t$.

\paragraph{Clustering nets and dimension reduction for center based clustering}
Unfortunately, the total number of expensive clusterings in Euclidean space is infinite, making a straightforward application of \ref{eq:union} useless. Nets as per Definition \ref{def:nets} are now typically used to reduce the infinite number of solutions to a finite number. Specifically, one has to show that by preserving the costs of all solutions in the net, the cost of any other solution is also preserved. 
Using basic techniques from high dimensional computational geometry, it is readily possible to prove that a $\varepsilon$-net for $(k,j,z)$ clustering of size $\exp(k\cdot j\cdot d\cdot \log \varepsilon^{-1})$ exists, where $d$ is the dimension of the ambient space. Plugging this into Equation \ref{eq:union} and setting $\varepsilon^{-1} = n$ then yields a generalization bound of the order $O\left(\sqrt{\nicefrac{kjd \log n}{n}}\right)$.
Unfortunately, this leads to a dependency on $d$, which is suboptimal.
To improve the upper bounds, we take inspiration from coreset research. For $(k,z)$-clustering, a number of works have investigated dimension reduction techniques known as terminal embeddings, see \cite{BecchettiBC0S19,huang2020coresets}. Given a set of points $P\in \mathbb{R}^d$, a terminal embedding $f:\mathbb{R}^d\rightarrow \mathbb{R}^m$ guarantees $\|p-q\|_2 = (1\pm \varepsilon)\cdot \|f(p)-f(q)\|_2$ for any $p\in P$ and $q\in \mathbb{R}^d$. Terminal embeddings are very closely related to the Johnson-Lindenstrauss lemma, see \cite{BoutsidisZD10,CEMMP15,MakarychevMR19} for applications to clustering, but more powerful in key regard: only one of the points is required to be in $P$. The added guarantee extended to arbitrary $q\in \mathbb{R}^d$ due to terminal embeddings allows us to capture all possible solutions. There are also even simpler proofs for $k$-mean that avoid this machinery entirely, see \cite{FMN16,FR19,Liu21}. Unfortunately, these arguments are heavily reliant on properties of inner products and are difficult to extend to other values of $z$.
The terminal embedding technique may be readily adapted to $(k,z)$-clustering, though some care in the analysis must be made to avoid the worse dependencies on the sample size necessitated for the corset guarantee, described as follows.

\paragraph{Improving the union bound via chaining:} To illustrate the chaining technique, consider the simple application of the union bound for a terminal embedding with target dimension $m= \Theta(\varepsilon^{-2}\log n)$, see the main result of  \citet{NaN18}. Replacing the dependency on $d$ with an appropriately chosen parameters and plugging the resulting net $N_{\varepsilon}$ of size $\exp(k\varepsilon^{-2}\log n \log \varepsilon^{-1})$ yields a generalization bound of $O\left(\sqrt[4]{\nicefrac{k \log^2 n}{n}}\right)$ for $(k,z)$ clustering. 
We improve on this using a chaining analysis, see \cite{CSS21,Cohen-AddadLSS22} for its application to coresets for $(k,z)$ clustering and \cite{FMN16} for $(k,j,2)$ clusterings. 
Specifically, we use a nested sequence of nets $N_{1/2},N_{1/4},N_{1/8},\ldots , N_{2^{-2\log n}}$.  Note that for every solution $\mathcal{S}$, we may now write  $\cost(p,\mathcal{S})$ for any $p\in P$ as a telescoping sum
$$\cost(p,\mathcal{S}) = \sum_{h=0}^{\infty} \cost(p,\mathcal{S}_{2^{-(h+1)}}) - \cost(p,\mathcal{S}_{2^{-h}})$$
with $,\mathcal{S}_{2^{-h}}\in N_{h}$ and $\cost(p,\mathcal{S}_{1})$ being set to $0$.
We use this as follows. Suppose for some solution $\mathcal{S}$, we have solutions $\mathcal{S}_{2^{-h}}\in N_{2^{-h}}$ and $\mathcal{S}_{2^{-(h+1)}}\in N_{2^{-(h+1)}}$. Then $\left\vert \cost(p,\mathcal{S}_{2^{-h}}) - \cost(p,\mathcal{S}_{2^{-(h+1)}})\right\vert
\leq  O(2^{-h})\left\vert \cost(p,\mathcal{S}_{2^{-h}}) - \cost(p,\mathcal{S})\right\vert$ for all $p\in P$. Instead of applying the union bound for a small set of solutions, we apply the union bound along every pair of solutions appearing in the telescoping sum. Using arguments similar to Equation \ref{eq:union}, we then obtain
\begin{align*}
\nonumber
  &\mathbb{E}_P \sup_{\substack{\mathcal{S}_{2^{-h}}\times \mathcal{S}_{2^{-(h+1)}} \\ \in N_h\times N_{h+1}}}  \left\vert \frac{1}{n}\cost(P,\mathcal{S}_{2^{-h}}) - \frac{1}{n}\cost(P,\mathcal{S}_{2^{-(h+1)}})\right\vert \\
&= 2^{-h}\cdot \Tilde{O}\left(\sqrt{\frac{\log (|N_h|\cdot |N_{h+1}|)}{n}}\right)  
= 2^{-h}\cdot \Tilde{O}\left(\sqrt{\frac{ k \cdot 2^{2h} \cdot \text{polylog}(k/2^h)}{n}}\right) \in \Tilde{O}\left(\sqrt{\frac{ k }{n}}\right)
\end{align*}

 This is the desired risk bound for $(k,z)$ clustering. To complete the argument in a rigorous fashion, we must now merely combine the decomposition of $\cost(P,\mathcal{S})$ into the telescoping sum with the learning rate that we just derived.
Indeed, this already provides a simple way of obtaining a bound on the risk of the order $\tilde{O}\left(\sqrt{{k}/{n}}\right)$, which turns out to be optimal.
In summary, to apply the chaining technique successfully, the following two properties are sufficient: (i) the dependency on $\varepsilon$ in the net size can be at most $\exp(\tilde{O}(\varepsilon^{-2}))$, as the increase in net size is then met with a corresponding decrease between successive estimates along the chain and
(ii) the nets have to preserve the cost up to an additive $\varepsilon$ for \emph{every} sample point $p$.
The second property is captured by Definition \ref{def:nets}. Both properties impose restrictions on the dimension reductions that can be successfully integrated into the chaining analysis.

\paragraph{Dimension reduction for projective clustering:}
It turns out that extending this analysis $(k,j,z)$ clustering is a major obstacle. While the chaining method itself uses no particular properties of $(k,z)$ clustering, the terminal embeddings needed to obtain nets cannot be applied to subspaces. Indeed, terminal embeddings by the very nature of their guarantee, cannot be linear\footnote{Consider an embedding matrix $S\in \mathbb{R}^{d\times m}$. Clearly, there exists some vector $x\in \mathbb{R}^d$ that is in the kernel of $S$ whenever $m< d$, hence for any vector $p$, $\|p-(x+p)\|_2$ cannot be preserved.}, and hence a linear structure such as a subspace will not be preserved. At this stage, there are a number of initially promising candidates that can provide alternative dimension reduction methods. For example, the classic Johnson-Lindenstrauss lemma can be realized via a random embedding matrix and, moreover, preserves subspaces, see for example \cite{Sarlo2006,ClarkW09,CEMMP15}. Unfortunately, as remarked by \cite{huang2020coresets}, there is an inherent difficulty in applying Johnson-Lindenstrauss type embeddings even for $(k,z)$ clustering coresets and the same arguments also apply for generalization bounds. 

An alternative dimension reduction method based on principal component analysis was initially proposed by \cite{FeldmanSS20} for $(k,j,2)$, see also \cite{CEMMP15} and most notably \cite{SohlerW18} for a different variant that applies to arbitrary $(k,j,z)$ objectives. For $(k,j,2)$ clustering, it states that a dimension reduction on the first $O(D/\varepsilon)$ principal components preserves the projective cost of all subspaces of dimension $D$. Since $(k,j,2)$ clustering is a special case of a $k\cdot j$ dimensional projection, it implies that $O(kj/\varepsilon)$ dimensions are sufficient.
Given that these dimension reductions are based on PCA-type methods, they are linear and therefore seem promising initially. Unfortunately, this technique has serious drawbacks. It does not satisfy the requirements for Definition \ref{def:nets}, only preserving the cost on aggregate rather then per individual point, and thus cannot be combined with the chaining technique\footnote{PCA as well as the other potential alternative dimension reduction techniques also do not satisfy the relaxed definition that would be sufficient for the analysis to go through.}. Without the chaining technique, the best bound one can hope for is of the order $\tilde{O}\left(\sqrt[3]{{k^2j^2}/{n}}\right)$, which falls short of what we are aiming for.

Another important technique used to quantify optimal solutions of $(k,j,z)$ clustering initially proposed by \cite{ShyamalkumarV07} and subsequently explored by \cite{FeldmanMSW10,DeshpandeV07} and has frequently seen use in coreset literature \cite{FeldmanL11,huang2020coresets}. Succinctly, it states that a $(1+\varepsilon)$ approximate solution to the $(1,j,z)$ clustering problem of a point set $P$ is contained in a subspace spanned by $\tilde{O}(j^2/\varepsilon)$ input points of $P$.
While this result improves over PCA for large values of $k$, applying it only yields a learning rate of the order $O(\sqrt[3]{\nicefrac{kj^3}{n}})$. It turns out that this technique has the exact same limitations as PCA, namely that costs per point are not preserved, and thus only offers a different tradeoff in parameters.

\paragraph{Our new insight:}
Given the state of the art, designing a dimension reduction technique  that would enable the application of the chaining technique might seem hopeless, and indeed, we were not able to prove such.
The key insight that allows us to bypass these bottlenecks is to find a dimension reduction that applies not to all solutions $\mathcal{U}$, but only to a certain subset of them. 
Indeed, we show that for any point set $P$ contained in the unit ball and any subspace $\mathcal{U}$ of dimension $j$, there exists a subspace $S$ spanned by $O(j/\varepsilon^2)$ points of $P$ such that for every point $p$:
$|\cost(p,\mathcal{U}) - \cost(p_S,\mathcal{U}_S)|\leq \varepsilon.$ 
This is similar to the guarantee provided by \cite{ShyamalkumarV07} but stronger in that it (i) applies to arbitrary subspaces, which is required for the chaining analysis, and (ii) applies to each point of $P$ individually, rather than for the entire point set $P$ on aggregate.
We then augment the chaining analysis by applying a union bound over all ${|P|\choose j/\varepsilon^2}$ possible dimension reductions, thereby capturing all solutions $\mathcal{U}$. We are unaware of any previously successful attempts at integrating multiple dimension reductions within a chaining analysis and believe that the technique may be of independent interest.

\section{Useful results from learning theory}

Our goal is to bound the rate with which the empirical risk decreases for clustering problems. For a fixed set of $n$ points $P$ and a set of functions $F:P\rightarrow \mathbb{R}$, we define the Rademacher complexity  ($Rad_n$) and the Gaussian complexity ($G_n$)  wrt $F$  respectively as
$$Rad_n(F) = \frac{1}{n}\cdot \mathbb{E}_r \sup_{f\in F}\sum_{p\in P} f(p)\cdot r_p \quad \quad  G_n(F) = \frac{1}{n}\cdot \mathbb{E}_g \sup_{f\in F}\sum_{p\in P} f(p)\cdot  g_p$$
where $r_p$ are independent random variables following the Rademacher distribution, whereas $g_p$ are independent Gaussian random variables.
In our case, we can think of $f$ as being associated to a solution $\mathcal{S}$ (respectively a solution $\mathcal{U}$) and $f(p)=\text{cost}(p,\mathcal{S})= \min_{s\in \mathcal{S}} \|p-s\|_2^z$ (respectively $f(p)=\text{cost}(p,\mathcal{U}) = \min_{U\in \mathcal{U}} \|(I-UU^T)p\|_2^z $). Since we associate every $f$ with a cost vector $v^{\calS}$, we will use $Rad_n(F)$ and $Rad_n(V)$ as well as $G_n(F)$ and $G_n(V)$ interchangeably.
The following theorem is due to Bartlett and Mendelson. \cite{BartlettM02}.
\begin{theorem}[Simplified variant of Theorem 8 of \citet{BartlettM02}]
\label{thm:radetorisk}
    Consider a loss function $L : A\rightarrow [0,1]$. Let $F$ be a class of functions mapping from $X$ to $A$ and let $(X_i)_{i=1}^n$ be independent samples from $\mathcal{D}$. Then, for any integer $n$ and any $\delta>0$, with probability at least $1-\delta$ over samples of length $n$, denoting by $\hat{\mathbb{E}}_n$ the empirical risk, every $f \in F$ satisfies $$\mathbb{E}L(f(X)) \leq \hat{\mathbb{E}}_n L(f(X))+Rad_n(F)+ \sqrt{\frac{8\ln 2/\delta}{n}}.$$
\end{theorem}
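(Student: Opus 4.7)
The plan is to follow the classical symmetrization-plus-concentration template that underlies all Rademacher complexity generalization bounds. Define the one-sided uniform deviation
\[
\Phi(X_1,\ldots,X_n) \;=\; \sup_{f\in F}\Bigl(\mathbb{E}\,L(f(X)) \;-\; \tfrac{1}{n}\sum_{i=1}^{n} L(f(X_i))\Bigr).
\]
The theorem amounts to an upper tail bound for $\Phi$. First I would observe that, because $L$ maps into $[0,1]$, replacing a single sample $X_i$ by another value changes $\Phi$ by at most $1/n$. This is the bounded differences property, so McDiarmid's inequality gives, with probability at least $1-\delta/2$,
\[
\Phi \;\le\; \mathbb{E}[\Phi] \;+\; \sqrt{\tfrac{2\ln(2/\delta)}{n}}.
\]
Hence it suffices to upper bound $\mathbb{E}[\Phi]$ by $Rad_n(F)$ (plus, after accounting for the second concentration step below, the extra square-root term absorbed into the $\sqrt{8\ln(2/\delta)/n}$ in the statement).

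Next I would carry out the standard symmetrization. Introduce an independent ghost sample $X_1',\ldots,X_n'\sim\mathcal{D}$, so that $\mathbb{E}[L(f(X))] = \mathbb{E}_{X'}\bigl[\tfrac{1}{n}\sum_i L(f(X_i'))\bigr]$. By Jensen's inequality, pulling $\mathbb{E}_{X'}$ outside the supremum,
\[
\mathbb{E}[\Phi] \;\le\; \mathbb{E}_{X,X'}\sup_{f\in F}\,\tfrac{1}{n}\sum_{i=1}^{n}\bigl(L(f(X_i')) - L(f(X_i))\bigr).
\]
Since each pair $(X_i,X_i')$ is exchangeable, inserting independent Rademacher signs $r_i\in\{\pm 1\}$ leaves the joint distribution invariant, and the triangle inequality then decouples the two samples, yielding $\mathbb{E}[\Phi] \le 2\,\mathbb{E}\bigl[Rad_n(L\circ F)\bigr]$. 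In the simplified setting of the theorem the loss is effectively applied pointwise to the predictions so that, either by treating $L$ as the identity (as will be the case for our clustering application, where $f(p)$ is already the cost) or by invoking Talagrand's contraction lemma when $L$ is $1$-Lipschitz, this reduces to the plain Rademacher complexity $Rad_n(F)$ of the function class itself.

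The last step is to turn the expected Rademacher complexity into the empirical one that appears in the statement. This is again a bounded-differences argument: swapping one sample $X_i$ shifts $Rad_n(F)$ by at most $1/n$ (using $|f|\le 1$ inherited from $L$), so a second application of McDiarmid controls $|Rad_n(F) - \mathbb{E}[Rad_n(F)]|$ by an additional $\sqrt{\ln(2/\delta)/(2n)}$ with probability at least $1-\delta/2$. A union bound over the two deviation events and a loose combination of the $\sqrt{\ln(2/\delta)/n}$-type terms into the single constant $\sqrt{8\ln(2/\delta)/n}$ yields the stated inequality.

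The only non-mechanical step is the symmetrization, and even there the main subtlety is choosing whether to pay the contraction factor for $L$ or to work directly with the loss class $L\circ F$; for our downstream use with clustering, $f$ is itself the cost and $L$ is the identity, so no contraction is required. Everything else -- bounded-differences concentration twice and the union bound -- is routine bookkeeping of constants, which is why I would not reproduce the calculations explicitly here.
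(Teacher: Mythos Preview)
The paper does not supply its own proof of this theorem: it is quoted as a black-box result from \citet{BartlettM02} and then immediately used to reduce the excess-risk question to bounding $Rad_n$. So there is nothing in the paper to compare your argument against.

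That said, your sketch is exactly the standard McDiarmid-plus-symmetrization proof that underlies the Bartlett--Mendelson theorem, and each of the three steps (bounded-differences concentration of $\Phi$, ghost-sample symmetrization, bounded-differences concentration of the empirical Rademacher complexity) is correct. One small point worth flagging: with the paper's definition $Rad_n(F)=\tfrac{1}{n}\mathbb{E}_r\sup_f\sum_p f(p)r_p$ (no factor $2$ in front), the symmetrization step actually yields $\mathbb{E}[\Phi]\le 2\,\mathbb{E}[Rad_n(L\circ F)]$, not $\mathbb{E}[Rad_n(F)]$. The stated inequality therefore hides either a convention mismatch with the original Bartlett--Mendelson definition (which builds the $2$ into $R_n$) or an innocuous constant absorbed elsewhere; for the paper's $\tilde O(\cdot)$ conclusions this is irrelevant, but if you are tracking constants you should keep the $2$.
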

Thus, in order to bound the excess risk, Theorem \ref{thm:radetorisk} shows that it is sufficient to bound the Rademacher complexity.
It is well known (see, for example, B.3 of \citet{RudraW14}) that $ Rad_n(V) \leq \sqrt{2\pi} G_n(V).$
Thus we can alternatively bound the Gaussian complexity, which is sometimes more convenient.
Note that if $V$ is the set of all cost vectors, clustering nets are mere $\mathcal{N}(V,\|.\|_{\infty},\varepsilon)$.
Using these nets, we can bound the Rademacher and Gaussian complexity. Indeed the following lemma holds. Proofs of similar statements are commonly found in works on controlling Gaussian processes, see \cite{ledoux1991probability,talagrand2005generic}. For the convenience of the reader, we give a proof in the appendix.
\begin{lemma}
\label{lem:Gauss}
Let $\mathcal{D}$ be a distribution over $B_2^d$ and let $P$ a set of $n$ points sampled from $\mathcal{D}$. Suppose that for a set of $n$-dimensional vector $V$, we have an absolute constant $C,\gamma>0$ such that $\log |\mathcal{N}(V,\|.\|_{\infty},\varepsilon)| \in  O (\varepsilon^{-2} \log^{\gamma}(n\varepsilon^{-1}) C). $ Then 
\begin{equation*} 
G_n(V) \in O\left(\sqrt{\frac{C\log^{\gamma+2}n}{n}}\right).\end{equation*} 
\end{lemma}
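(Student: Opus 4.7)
The natural approach is a Dudley-type chaining argument, exploiting the hypothesized $\ell_\infty$-net bound in exactly the way sketched in Section~\ref{sec:outline}. Fix dyadic scales $\varepsilon_h = 2^{-h}$ for $h = 0, 1, \dots, h^*$ with terminal level $h^* = \lceil 2 \log_2 n \rceil$, and let $N_h := \mathcal{N}(V, \|\cdot\|_\infty, \varepsilon_h)$. For each $v \in V$ choose a nearest net point $v_h \in N_h$, so that $\|v - v_h\|_\infty \leq \varepsilon_h$, and telescope
\[
\sum_{p} g_p v_p \;=\; \sum_{p} g_p (v_0)_p \;+\; \sum_{h=1}^{h^*} \sum_p g_p (v_h - v_{h-1})_p \;+\; \sum_p g_p (v - v_{h^*})_p.
\]

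\paragraph{Bounding each chain increment.}
For any fixed pair $(v_{h-1}, v_h) \in N_{h-1} \times N_h$ arising in the chain, the triangle inequality gives $\|v_h - v_{h-1}\|_\infty \leq \varepsilon_h + \varepsilon_{h-1} \leq 2\varepsilon_{h-1}$, hence $\|v_h - v_{h-1}\|_2 \leq 2\varepsilon_{h-1}\sqrt{n}$. Thus $\sum_p g_p (v_h - v_{h-1})_p$ is a centered Gaussian of standard deviation at most $2\varepsilon_{h-1}\sqrt{n}$, and the supremum is taken over at most $|N_{h-1}|\,|N_h|$ such pairs. By the standard maximal inequality for Gaussians,
\[
\mathbb{E}\,\sup_{v}\Bigl|\sum_p g_p (v_h - v_{h-1})_p\Bigr| \;\leq\; O\bigl(\varepsilon_{h-1}\sqrt{n}\bigr) \cdot \sqrt{\log(|N_{h-1}|\,|N_h|)}.
\]
Plugging in the hypothesis $\log|N_h| \in O(2^{2h} C \log^\gamma(n 2^h))$ and $\varepsilon_{h-1} = 2^{-(h-1)}$, the factors of $2^{h}$ cancel and this contribution reduces to $O\bigl(\sqrt{nC}\cdot \log^{\gamma/2}(n 2^h)\bigr)$.

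\paragraph{Summing the chain.}
Summing over $h = 1, \dots, h^*$ and using $h^* = O(\log n)$ together with $n 2^{h^*} \leq n^3$, the bound becomes
\[
\sum_{h=1}^{h^*} O\bigl(\sqrt{nC}\cdot \log^{\gamma/2}(n 2^h)\bigr) \;\leq\; O\bigl(h^* \cdot \sqrt{nC}\cdot \log^{\gamma/2} n\bigr) \;=\; O\bigl(\sqrt{nC}\cdot \log^{\gamma/2 + 1} n\bigr).
\]
After dividing by $n$ this yields $G_n(V) \in O\bigl(\sqrt{C \log^{\gamma+2} n / n}\bigr)$, matching the target bound.

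\paragraph{Handling the tails and the main obstacle.}
The base term $\sup_{v_0 \in N_0}|\sum_p g_p (v_0)_p|$ is a maximum over $|N_0| = O(1)$ Gaussians each of standard deviation $O(\sqrt{n})$, contributing only $O(1/\sqrt{n})$ to $G_n(V)$. For the residual $v - v_{h^*}$ we have $\|v - v_{h^*}\|_2 \leq \sqrt{n}\,\varepsilon_{h^*} \leq 1/\sqrt{n}$, so Cauchy--Schwarz gives $|\sum_p g_p (v - v_{h^*})_p| \leq \|g\|_2/\sqrt{n}$, and $\mathbb{E}\|g\|_2 \leq \sqrt{n}$ yields a contribution of $O(1/n)$, which is dominated. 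The only real subtlety, and the part that makes the plan work, is the arithmetic in the previous paragraph: the net-size growth rate $\varepsilon^{-2}$ in the exponent is precisely what allows each scale's Gaussian standard deviation $\varepsilon_{h-1}\sqrt{n}$ to absorb the $\sqrt{\log|N_h|}$ factor, so that only the residual polylogarithmic terms survive the summation. Any asymptotically worse dependence on $\varepsilon$ in the net bound would break this cancellation, which is why the same strategy is what was invoked informally in Section~\ref{sec:outline}.
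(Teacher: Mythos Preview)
Your proposal is correct and follows essentially the same route as the paper's proof: a Dudley-type chaining argument with dyadic $\ell_\infty$-nets, bounding each increment via the Gaussian maximal inequality so that the $\varepsilon_h^{-1}$ in $\sqrt{\log|N_h|}$ cancels the $\varepsilon_h$ in the standard deviation, and handling the residual beyond the terminal scale by Cauchy--Schwarz against $\|g\|_2$. The only cosmetic differences are that the paper sets $v^0=0$ (absorbing the base term into the first link) and truncates at $h^*=\log n$ rather than $2\log n$, neither of which affects the argument.
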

The specific types of nets used in our study and the size bounds for those nets will be the key to obtaining the desired upper bounds and will be detailed in the next section.

\section{Generalization bounds for center-based clustering  and subspace clustering\label{sec:center-based-clustering}}

We start by giving our generalization bounds for center based clustering and subspace clustering problems.
For subspace clustering problems, we first state the result for general $(k,j,z)$ clustering. An improvement for the special case $z=2$ will be given later.

 \begin{theorem}\label{thm:unified_reuslts}
Let $\mathcal{D}$ be a distribution over $B_2^d$ and let $P$ be a set of $n$ points sampled from $\mathcal{D}$. For any set of $k$ points $\mathcal{S}\subset B_2^d$, we denote by $v^{\mathcal{S}}$ the $n$-dimensional cost vector of $P$ in solution $\mathcal{S}$ with respect to the $(k,z)$-clustering objective.
Moreover we denote by $v^{\mathcal{U}}$ the $n$-dimensional cost vector of $P$ in solution $\mathcal{U}$ with respect to the $(k,j,z)$-clustering objective. Let $V_z$ be the union of all cost vectors of $P$ for the center-based clustering and $V_{j,z}$ the union of all cost vectors for subspace clustering. Then with probability at least $1-\delta$
 \begin{align}
   &\mathcal{E}_n(V_z) \in O\left(\sqrt{\frac{k\cdot \log^4 n}{n}} + \sqrt{\frac{\log 1/\delta}{n}}\right) \label{thm:main} \\
   &\mathcal{E}_n(V_{j,z}) \in O\Bigg(\sqrt{\frac{k\cdot j^2\cdot \log jn \cdot log^3 n}{n}}+\sqrt{\frac{\log 1/\delta}{n}}\Bigg). \label{thm:mainsubspace}
 \end{align}
 \end{theorem}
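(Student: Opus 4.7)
The plan is to invoke Theorem~\ref{thm:radetorisk} to reduce the high-probability excess-risk bound to a bound on $Rad_n(V_z)$, respectively $Rad_n(V_{j,z})$; via the standard inequality $Rad_n(V) \leq \sqrt{2\pi}\, G_n(V)$, this further reduces to bounding the Gaussian complexity. Lemma~\ref{lem:Gauss} then tells me that it suffices to exhibit an $\varepsilon$-clustering net in the sense of Definition~\ref{def:nets} whose log-cardinality is of the form $O(\varepsilon^{-2}\log^{\gamma}(n\varepsilon^{-1})\cdot C)$ for $\gamma=O(1)$ and $C\in\{k,\,kj^2\}$. Both claimed bounds then come from chaining these three ingredients, with the additive $\sqrt{\log(1/\delta)/n}$ term in the theorem coming directly from the concentration step of Theorem~\ref{thm:radetorisk}.

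For the center-based bound~\eqref{thm:main}, I would apply a terminal embedding $f:\mathbb{R}^d\to\mathbb{R}^m$ of target dimension $m=O(\varepsilon^{-2}\log n)$, which preserves $\|p-q\|_2$ to within a $(1\pm\varepsilon)$ factor for every $p\in P$ and every $q\in\mathbb{R}^d$. In the image space I would take an $\varepsilon$-net of the embedded unit ball via Lemma~\ref{lem:Euclideannets}, of cardinality $(1+2/\varepsilon)^m$, and enumerate all $k$-subsets of it as candidate centers, giving log-cardinality $O(k\varepsilon^{-2}\log n\,\log\varepsilon^{-1})$. Lemma~\ref{lem:weaktri} promotes an $\varepsilon$-approximation at the center level to an $O_z(\varepsilon)$-approximation at the per-point cost level (after rescaling $\varepsilon$ by a $z$-dependent constant, using that both points and centers lie in $B_2^d$), yielding a bona fide $\varepsilon$-clustering net. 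Plugging $C=k$, $\gamma=2$ into Lemma~\ref{lem:Gauss} yields $G_n(V_z)\in O(\sqrt{k\log^4 n/n})$, which Theorem~\ref{thm:radetorisk} turns into~\eqref{thm:main}.

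For the subspace bound~\eqref{thm:mainsubspace}, I would use the new geometric lemma advertised in Section~\ref{sec:outline}: for every $j$-dimensional subspace $U$ and every $P\subseteq B_2^d$, there is a subspace $S$ spanned by $t=O(j/\varepsilon^2)$ points of $P$ such that $|\cost(p,U)-\cost(p_S,U_S)|\leq\varepsilon$ holds for every $p\in P$. Applied to each of the $k$ subspaces of a candidate solution $\mathcal{U}$, this yields at most $\binom{n}{t}^k = n^{O(kj/\varepsilon^2)}$ choices of spanning sets; within each fixed span of dimension $t$, the Grassmannian of $j$-dimensional subspaces admits a standard $\varepsilon$-net of size $(C'/\varepsilon)^{O(jt)}=(C'/\varepsilon)^{O(j^2/\varepsilon^2)}$. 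Combining these across the $k$ factors and then translating to the per-point cost level via Lemma~\ref{lem:weaktri} delivers an $\varepsilon$-clustering net with $\log|\mathcal{N}_\varepsilon|\in O(kj^2\varepsilon^{-2}\log(nj/\varepsilon))$. Setting $C=kj^2$ and $\gamma=O(1)$ in Lemma~\ref{lem:Gauss} produces the target $\tilde{O}(\sqrt{kj^2/n})$ Gaussian bound with exactly the polylog exponents claimed in~\eqref{thm:mainsubspace}, and Theorem~\ref{thm:radetorisk} concludes.

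The main obstacle is establishing the per-point, per-solution dimension reduction underlying the subspace argument. Classical Shyamalkumar--Varadarajan constructions produce only an aggregate (sum-of-distances) guarantee, and only for a near-optimal $(1,j,z)$-subspace of a fixed point set; I need instead (i) \emph{every} $j$-dimensional $U$, because chaining must range over all candidate solutions, and (ii) a guarantee for every individual $p\in P$, which is the $\ell_\infty$ condition forced by Definition~\ref{def:nets}. I expect to prove this by an iterative greedy scheme: starting from $S=\emptyset$, I would repeatedly add to $S$ any point $p\in P$ whose residual with respect to $U+\spa(S)$ is $\Omega(\varepsilon)$, and bound the number of insertions by a Pythagorean/potential argument exploiting $P\subseteq B_2^d$; the passage from vector-level to cost-level approximation then follows routinely from Lemma~\ref{lem:weaktri}. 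Once this geometric lemma is in place, the rest of the proof (net $\to$ Gaussian $\to$ Rademacher $\to$ risk) is modular and proceeds along the pipeline already set up in the paper.
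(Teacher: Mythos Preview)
Your proposal matches the paper's proof essentially step for step: reduce the excess risk to the Gaussian complexity via Theorem~\ref{thm:radetorisk} and $Rad_n\le\sqrt{2\pi}\,G_n$, then invoke Lemma~\ref{lem:Gauss} with clustering nets of log-size $O(k\varepsilon^{-2}\log n\log\varepsilon^{-1})$ built from terminal embeddings plus a ball net for $(k,z)$, and of log-size $O(kj\varepsilon^{-2}(\log n+j\log(j/\varepsilon))\log\varepsilon^{-1})$ built from the per-point, per-subspace dimension reduction (enumerated over $\binom{n}{O(j/\varepsilon^2)}$ spanning sets) followed by a net of $j$-subspaces inside each reduced span for $(k,j,z)$. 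The only point to tighten is your greedy step: the paper's criterion for adding $p$ is not that $p$ has large residual with respect to $U+\spa(S)$, but rather that the part of $p$ orthogonal to the current span has a large component along $U$, i.e., $\|U^T(I-\Pi_S)p\|>\varepsilon\,\|(I-\Pi_S)p\|$; the potential $\|U^T\Pi_S\|_F^2\le j$ then increases by at least $\varepsilon^2$ per insertion, which is exactly the Pythagorean argument you anticipated and yields the $O(j/\varepsilon^2)$ bound.
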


Following Theorem \ref{thm:radetorisk}, it is sufficient to bound the Rademacher complexity in order to bound the excess risk. The Rademacher complexity is, up to lower order terms, equal to the Gaussian complexity, which, following Lemma \ref{lem:Gauss} may be bounded by obtaining small nets with respect to the $\|.\|_{\infty}$ norm. 
We believe that the results on the bounds of the  nets,
may be of independent interest and we'll state these results in the following Lemma.

\begin{lemma}
\label{lem:netsizes}
Let $\mathcal{D}$ be a distribution over $B_2^d$ and let $P$ a set of $n$ points sampled from $\mathcal{D}$, let $V_z$ be defined as in Theorem \ref{thm:unified_reuslts} let $V_{j,z}$ be defined as in Theorem \ref{thm:unified_reuslts}. Then
\begin{align}
&|\mathcal{N}(V_z,\|.\|_{\infty},\varepsilon)| \leq \exp(O(1)  z^3 \cdot k \cdot \varepsilon^{-2}\log n \cdot( \log (z) +  \log(\varepsilon^{-1}))) \label{eq_ex_lem_netsize}\\
&|\mathcal{N}(V_{j,z},\|.\|_{\infty},\varepsilon)| \leq \exp(O(1) (3z)^{z+2}\cdot k \cdot j\cdot \varepsilon^{-2} (\log n+j\log(j\varepsilon^{-1}))\log \varepsilon^{-1}). \label{eq_lem_subspacenetsize}
\end{align}
\end{lemma}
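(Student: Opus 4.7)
The overall approach is to prove both bounds by the same template: exhibit a small collection of ``canonical'' solutions such that every admissible solution has a canonical representative whose cost vector is within $\varepsilon$ of it in $\|\cdot\|_\infty$, and then bound the cardinality of the canonical collection. The conversion between distance-level approximation, which is what the dimension-reduction tools naturally provide, and cost-level $\ell_\infty$ approximation, which is what Definition~\ref{def:nets} demands, is done via Lemma~\ref{lem:weaktri}. This is the source of the $z$-dependent prefactors: requiring cost vectors to agree up to $\varepsilon$ typically forces distances to be preserved up to $\varepsilon/(3z)^{O(z)}$.

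For the center-based bound \eqref{eq_ex_lem_netsize}, I would first apply a Narayanan--Nelson terminal embedding with target dimension $m = O((z/\varepsilon)^2 \log n)$, chosen so that for every sample $p \in P$ and every candidate center $s \in B_2^d$ the distance $\|p-s\|_2$ is preserved up to factor $1\pm \varepsilon/z^{O(1)}$. The image of $B_2^d$ lies inside a bounded ball in $\mathbb{R}^m$, on which Lemma~\ref{lem:Euclideannets} places an $\varepsilon/z^{O(1)}$-net of size $\exp(O(m\log(z/\varepsilon)))$. A $k$-center solution is a $k$-tuple, so the $k$-fold Cartesian product yields a net for solutions of size $\exp(O(km\log(z/\varepsilon)))$. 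Applying Lemma~\ref{lem:weaktri} then converts this into an $\varepsilon$-clustering net in the sense of Definition~\ref{def:nets}, and plugging in $m$ recovers the claimed $z^3\cdot k\cdot \varepsilon^{-2}\log n\cdot(\log z+\log\varepsilon^{-1})$ bound.

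For the subspace bound \eqref{eq_lem_subspacenetsize}, terminal embeddings are unavailable since they are necessarily nonlinear and destroy subspace structure, so I would use the structural insight flagged in Section~\ref{sec:outline}: for every $j$-dimensional subspace $U$ there is a subspace $S=S(U)\subseteq \mathbb{R}^d$ spanned by only $O(j/\varepsilon^2)$ points of $P$ such that, projecting both $p$ and $U$ onto $S$, the per-point discrepancy $|\cost(p,U)-\cost(p_S,U_S)|$ is at most $\varepsilon/(3z)^{O(z)}$ for every $p\in P$. A union bound over the $\binom{n}{O(j/\varepsilon^2)}$ choices of spanning tuples contributes $O((j/\varepsilon^2)\log n)$ to the log-net-size. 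For each fixed $S$, I would place a Grassmannian net on the $j$-dimensional subspaces of $S$ via an orthonormal-basis parameterization and Lemma~\ref{lem:Euclideannets}, giving $\exp(O(j\cdot j/\varepsilon^2\cdot \log(j/\varepsilon)))$ candidate subspaces. Taking $k$-fold products for the $k$ subspaces of $\mathcal{U}$ and invoking Lemma~\ref{lem:weaktri} once more to pass from subspace-proximity to $\varepsilon$-closeness of cost vectors yields the bound, with the $(3z)^{z+2}$ prefactor arising from the triangle-inequality overhead.

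The principal obstacle lies in the subspace case, specifically in establishing the structural claim that $O(j/\varepsilon^2)$ points of $P$ suffice to span a subspace giving \emph{per-point} cost preservation for an arbitrary target subspace. Existing technology, namely PCA-based projections and Shyamalkumar--Varadarajan-style sampling, only guarantees preservation in aggregate over $P$, which is incompatible with the $\|\cdot\|_\infty$ requirement of Definition~\ref{def:nets} that the chaining pipeline ultimately relies on. A secondary technical burden is shepherding the $z$-dependent constants from Lemma~\ref{lem:weaktri} through two layers of approximation (the dimension-reduction step and the Grassmannian-net step) without incurring extra factors of $n$ or $\varepsilon^{-1}$ in the exponent.
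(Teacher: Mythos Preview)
Your plan is essentially the paper's proof. For \eqref{eq_ex_lem_netsize} the paper does exactly what you propose: a terminal embedding into dimension $m=O(z^2\varepsilon^{-2}\log n)$, then an $\ell_2$-net on centers in $B_2^m$ (Lemma~\ref{lem:dimCover}), $k$-fold product, and Lemma~\ref{lem:weaktri} to pass from distances to $z$th powers. For \eqref{eq_lem_subspacenetsize} the paper likewise enumerates over $\binom{n}{O(j/\varepsilon^2)}$ spanning subsets of $P$ and, within each resulting $O(j/\varepsilon^2)$-dimensional space, nets the $j$-dimensional subspaces.

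The obstacle you flag is precisely the new ingredient, and the paper dispatches it with a short greedy potential argument (Lemma~\ref{lem:subspacegood-center}): start with $M=\emptyset$; while some $p\in P$ satisfies $\|U^T(I-\Pi_M)p\|>\varepsilon\|(I-\Pi_M)p\|$, add $p$ to $M$. Each such addition increases the potential $\|U^T\Pi_M\|_F^2$ by at least $\varepsilon^2$ (Pythagoras, since the new direction $(I-\Pi_M)p/\|(I-\Pi_M)p\|$ is orthogonal to the span of $M$), and the potential is capped at $\|U^T\|_F^2=j$, so the process halts after at most $j/\varepsilon^2$ rounds with $\|U^T(I-\Pi_M)p\|\le \varepsilon\|(I-\Pi_M)p\|$ for every $p\in P$. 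This is the pointwise guarantee you need. One minor divergence from your sketch: the paper does not literally compute $\cost(p_S,U_S)$ after projecting, but instead uses the decomposition
\[
\|(I-UU^T)p\|^2 = \|\Pi_M p\|^2 - \|U^T\Pi_M p\|^2 + \|(I-\Pi_M)p\|^2 - \|UU^T(I-\Pi_M)p\|^2 + 2p^T\Pi_M UU^T(I-\Pi_M)p,
\]
observes that the last two terms are $O(\varepsilon)$ pointwise by the lemma just proved, and then nets only the projective quantity $\|U^T\Pi_M p\|$ inside the low-dimensional space. The surrogate cost vector is $\big|\|\Pi_M p\|^2 - \|U'^T\Pi_M p\|^2 + \|(I-\Pi_M)p\|^2\big|^{z/2}$, and Corollary~\ref{cor:costbound} (a packaged form of Lemma~\ref{lem:weaktri}) converts the squared-distance error into the $\varepsilon$-bound on $z$th powers. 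Your Grassmannian-net formulation would also work, but the paper's explicit decomposition makes the bookkeeping of the cross term $2p^T\Pi_M UU^T(I-\Pi_M)p$ transparent.
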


Combining Lemma \ref{lem:netsizes} with Lemma \ref{lem:Gauss} now yields the immediate bound on the Rademacher and Gaussian complexity.

We will first give the bound for center-based clustering and subsequently turn our attention to the more involved subspace-based construction.

\subsection{Center-Based Clustering}
Following the discussion from Section \ref{sec:outline}, we use terminal embeddings to prove the part of Lemma \ref{lem:netsizes} pertaining to $(k,z)$ clustering.

\begin{lemma}\label{lem:dimCover}
Let $P\subset B_2^d$ be a set of points. Let $V$ be the set of all cost vectors of $P$ for $(k,z)$-clustering. Then there exists an $\varepsilon$-clustering net of size
$$|\mathcal{N}(V,\|.\|_{\infty},\varepsilon)| \leq \exp(O(1)\cdot z\cdot k\cdot d \cdot \log (z\varepsilon^{-1})).$$
\end{lemma}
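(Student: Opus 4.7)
The plan is to build the net by fixing an $\varepsilon'$-net of $B_2^d$ (with $\varepsilon'$ chosen below in terms of $\varepsilon$ and $z$), taking its $k$-fold product as the set of candidate solutions, and then reading off the induced set of cost vectors. Quantitatively, Lemma~\ref{lem:Euclideannets} gives a Euclidean net $\mathcal{N}'$ of $B_2^d$ of size $(1+2/\varepsilon')^d$, so the number of candidate solutions is at most $(1+2/\varepsilon')^{dk}$, which is an upper bound on the size of the resulting cost-vector net.

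Next, I would argue that this collection is in fact an $\varepsilon$-clustering net. Given an arbitrary solution $\mathcal{S}=\{s_1,\ldots,s_k\}$, replace every $s_i$ by its nearest point $\tilde s_i\in\mathcal{N}'$ to obtain $\tilde{\mathcal{S}}$; by construction $\|s_i-\tilde s_i\|_2\le\varepsilon'$. Fix any $p\in P$, let $i^\star$ be the center of $\mathcal{S}$ closest to $p$ and $j^\star$ the center of $\tilde{\mathcal{S}}$ closest to $p$. Using the two inequalities
\begin{align*}
\cost(p,\tilde{\mathcal{S}})-\cost(p,\mathcal{S}) &\le \|p-\tilde s_{i^\star}\|_2^z-\|p-s_{i^\star}\|_2^z,\\
\cost(p,\mathcal{S})-\cost(p,\tilde{\mathcal{S}}) &\le \|p-s_{j^\star}\|_2^z-\|p-\tilde s_{j^\star}\|_2^z,
\end{align*}
it suffices to control $|\|p-s\|_2^z-\|p-\tilde s\|_2^z|$ whenever $s\in B_2^d$ and $\|s-\tilde s\|_2\le\varepsilon'$.

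Because $p,s,\tilde s\in B_2^d$, both distances lie in $[0,3]$, and the mean value theorem applied to $x\mapsto x^z$ (or, more formally, the second inequality of Lemma~\ref{lem:weaktri} with a suitably small slack parameter) yields
\[
\bigl|\|p-s\|_2^z-\|p-\tilde s\|_2^z\bigr|\ \le\ z\cdot 3^{z-1}\cdot\varepsilon'.
\]
Setting $\varepsilon'=\varepsilon/(z\cdot 3^{z-1})$ therefore forces $\|v^{\mathcal{S}}-v^{\tilde{\mathcal{S}}}\|_\infty\le\varepsilon$, establishing the net property. Substituting this $\varepsilon'$ into $(1+2/\varepsilon')^{dk}$ and using $\log(1+2/\varepsilon')=O(z+\log(1/\varepsilon))=O(z\log(z/\varepsilon))$ gives the claimed bound $\exp(O(1)\cdot z\cdot k\cdot d\cdot\log(z\varepsilon^{-1}))$.

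The only delicate step is the third one: the Lipschitz constant of $x\mapsto x^z$ on the relevant interval is $z\cdot 3^{z-1}$, which forces $\varepsilon'$ to be exponentially small in $z$ and is precisely where the factor of $z$ in the exponent of the net size originates. Everything else is a textbook covering-number product argument.
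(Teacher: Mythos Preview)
Your argument is correct and follows the same strategy as the paper: take an $\varepsilon'$-net of $B_2^d$ via Lemma~\ref{lem:Euclideannets}, form its $k$-fold product as the set of candidate solutions, and verify that the induced cost vectors are an $\varepsilon$-clustering net. The only cosmetic difference is in how the pointwise error $|\|p-s\|^z-\|p-s'\|^z|$ is controlled: the paper invokes Lemma~\ref{lem:weaktri} with two auxiliary parameters (arriving at a net radius $\delta=\Theta(\varepsilon^2/(6z)^z)$), whereas you use the mean-value theorem directly to get $\varepsilon'=\varepsilon/(z\cdot 3^{z-1})$; both choices land inside the stated bound $\exp(O(1)\cdot z\cdot k\cdot d\cdot\log(z\varepsilon^{-1}))$.
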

\begin{proof}
We start by proving the bound for $k=1$.
Suppose we are given a net $\mathcal{N}(B_2^d,\|.\|_2,\delta)$, for a $\delta$ to be determined later. Consider a candidate solution $\{s\}$ with cost vector $v^{\{s\}}\in V$. Let $s'$ be the point in $\in \mathcal{N}(B_2^d,\|.\|_2,\delta)$ of such that $\|s-s'\|\leq \delta$, if $s'$ is not unique any one will be sufficient. Let $v^{\mathcal{S}'}$ be the cost vector of $\mathcal{S}'$. 
The number of distinct solutions $\mathcal{S}'$ are $|\mathcal{N}(B_2^d,\|.\|_2,\delta)| = \exp(O(1)\cdot d \cdot \log \delta^{-1})$ due to Lemma \ref{lem:Euclideannets}.

What is left to show is that all solutions constructed in this way satisfy the guarantee of $\mathcal{N}(V,\|.\|_{\infty},\delta)$, for an appropriately chosen $\delta$. 
We have for any $p\in P$ and any non-negative integer $z$ due to Lemma \ref{lem:weaktri}
\begin{eqnarray*}
\left\vert\|p-s\|^z - \|p-s'\|^z \right\vert & \leq & \alpha\cdot \|p-s\|^z + \left(\frac{2z+\alpha}{ \alpha}\right)^{z-1}\|s-s'\|^z \\
&\leq &\alpha\cdot \|p-s\|^z + (3z)^z \left(\frac{\delta}{\alpha}\right)^{z-1} \cdot \delta
\end{eqnarray*}
We set $\alpha = \frac{1}{2\cdot 2^z}\varepsilon$ and $\delta = \alpha \cdot \frac{1}{2(3z)^z} \varepsilon = \frac{1}{4(6z)^z} \varepsilon^2$.
Then the term above is upper bounded by at most $\varepsilon$ as $\|p-s\|\leq 2$.
Now since $\left\vert\|p-s\|^z - \|p-s'\|^z \right\vert \leq \varepsilon$ for all $s\in B_2^d$  also implies $\left\vert \min_{s\in\mathcal{S}}\|p-s\|^z - \min_{s'\in \mathcal{S}'}\|p-s'\|^z \right\vert \leq \varepsilon$, we have proven our desired approximation.

To conclude, observe that by our choice of $\delta$, the overall net $N$ has size at most 
$\exp(O(1)\cdot z\cdot d\cdot \log(z\varepsilon^{-1})).$

To extend this proof to $k$-centers, observe that any solution consisting of $k$ centers can be obtained by selecting $k$ points from $B_2^d$, rather than one. 
This raises the net size of the single cluster case by a power of $k$.
\end{proof}

We now show that Lemma \ref{lem:dimCover} combined with terminal embeddings yields the desired net.
\begin{lemma*}[Equation \ref{eq_ex_lem_netsize} in Lemma \ref{lem:netsizes}]
Let $\mathcal{D}$ be a distribution over $B_2^d$ and let $P$ a set of $n$ points sampled from $\mathcal{D}$ and let $V$ be defined as in Theorem \ref{thm:main}. Then
\begin{equation*}
|\mathcal{N}(V,\|.\|_{\infty},\varepsilon)| \leq  \exp(O(1)  z^3 \cdot k \cdot \varepsilon^{-2}\log n \cdot( \log (z) +  \log(\varepsilon^{-1}))).
\end{equation*}
\end{lemma*}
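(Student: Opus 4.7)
The plan is to combine the net construction of Lemma \ref{lem:dimCover} with a terminal embedding in order to replace the ambient-dimension dependence $d$ by $O(\varepsilon^{-2}\log n)$, up to additional polynomial factors in $z$. Concretely, I invoke the terminal embedding of Narayanan and Nelson~\cite{NaN18}: for the sample $P \subset B_2^d$ of size $n$ and a distortion parameter $\varepsilon'$ to be chosen, this produces a (not necessarily linear) map $f: \mathbb{R}^d \to \mathbb{R}^m$ with target dimension $m = O(\varepsilon'^{-2}\log n)$ such that
\[
\|f(p)-f(q)\|_2 \in (1\pm\varepsilon')\,\|p-q\|_2
\]
holds for every $p \in P$ and every $q \in \mathbb{R}^d$ simultaneously.

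I now choose $\varepsilon'$ as a small multiple of $\varepsilon/z$ so that Lemma \ref{lem:weaktri} translates the multiplicative $(1\pm\varepsilon')$ distortion into a per-coordinate additive error of at most $\varepsilon/2$ on $z$-th power costs; using the bound $\|p-s\|^z \leq 2^z$ for $p, s\in B_2^d$, the resulting target dimension is $m = O(z^2\varepsilon^{-2}\log n)$. For any candidate solution $\mathcal{S}\subset B_2^d$, the embedded centers $f(\mathcal{S})$ lie in the ball of radius $1+\varepsilon'$ in $\mathbb{R}^m$, and the choice of $\varepsilon'$ guarantees
\[
\bigl|\min_{s\in\mathcal{S}}\|p-s\|^z \;-\; \min_{s'\in f(\mathcal{S})}\|f(p)-s'\|^z\bigr| \;\leq\; \varepsilon/2 \quad\text{for every } p\in P.
\]
Consequently, building a clustering net of cost vectors in the embedded instance up to precision $\varepsilon/2$ pulls back via this inequality to a clustering net for $V$ up to precision $\varepsilon$.

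To build the embedded net, I apply Lemma \ref{lem:dimCover} to $f(P)$ in $\mathbb{R}^m$ with parameter $\varepsilon/2$, obtaining a net of size $\exp(O(zkm\log(z\varepsilon^{-1})))$. Substituting $m = O(z^2\varepsilon^{-2}\log n)$ yields the claimed bound
\[
\exp\bigl(O(1)\cdot z^3 \cdot k\cdot \varepsilon^{-2}\cdot \log n\cdot \bigl(\log z + \log\varepsilon^{-1}\bigr)\bigr).
\]

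The main technical subtlety is the careful tracking of the $z$-dependence through the two stages: the terminal embedding must be chosen with distortion $O(\varepsilon/z)$ in order to keep the additive error on $z$-th power costs under control once raised to the $z$-th power, and it is precisely this shrinkage of $\varepsilon'$ that produces the factor $z^2$ in the target dimension and thus the extra $z^2$ (for a total of $z^3$) appearing in the exponent of the final net-size bound. A minor secondary point is that Lemma \ref{lem:dimCover} is stated for centers in the unit ball, whereas $f(\mathcal{S})$ lives in a ball of radius $1+\varepsilon'$; this is resolved by a trivial rescaling whose cost is absorbed into the $O(1)$ constant, and centers lying far outside this ball can be ignored since the corresponding cost vectors are strictly dominated.
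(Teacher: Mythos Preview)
Your proposal is correct and follows essentially the same approach as the paper: apply the terminal embedding of \cite{NaN18} with distortion $O(\varepsilon/z)$ to reduce the ambient dimension to $m=O(z^2\varepsilon^{-2}\log n)$, then invoke Lemma~\ref{lem:dimCover} in the embedded space and multiply the two contributions. You are, if anything, slightly more explicit than the paper in tracking how the $z$-dependence enters (the paper relegates this to a footnote citing Lemma~\ref{lem:weaktri}) and in handling the minor rescaling issue for the embedded centers.
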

\begin{proof}
    Let $f:\mathbb{R}^d\rightarrow \mathbb{R}^m$ be a terminal embedding, that is $f$ is such that $m\in O(z^2\cdot \varepsilon^{-2}\log |P|)$\footnote{The dependency on $z$ is easily derived via a straightforward application of Lemma \ref{lem:weaktri}.} and for all $p\in P$ and $q\in \mathbb{R}^d$
    $$\|p-q\|^z = (1\pm\varepsilon)\|f(p)-f(q)\|^z,$$
    as given by \cite{NaN18}. 
    Therefore, for any candidate solution $\mathcal{S}$, we also have
        \[\cost(p,\mathcal{S}) = (1\pm 2\varepsilon)\cost(f(p),f(\mathcal{S})).\] 
        In other words, the set of cost vectors in the image of $f$ is the desired $O(\varepsilon)$-net for the true set of cost vectors. Hence an $\varepsilon$-net for the cost vectors induced by solutions in the image of $f$ is also an $O(\varepsilon)$-net for the set of cost vectors.
    We thus may apply Lemma \ref{lem:dimCover} for all cost vectors induced by solutions in the image of $f$.  After rescaling $\varepsilon$ by constant factors, the overall net size is therefore
    $  \exp(O(1)  z^3 \cdot k \cdot \varepsilon^{-2}\log n \cdot( \log (z) +  \log(\varepsilon^{-1})))$

\end{proof}

\subsection{Subspace Clustering}
Unfortunately, the terminal embedding technique is not admissible for obtaining nets for subspace clustering as clarified in Section \ref{sec:outline}.  Thus, we use an entirely different approach. 
We show the existence of a collection of dimension reducing maps with subspace preserving properties.
Fortunately, the number of dimension reducing maps is small. Our desired net sizes then follow by enumerating over all of these dimension reducing maps, and for the candidate solutions covered by each such dimension reducing map, we can find an efficient net.
First, we introduce a slightly different, but closely related notion to $(1,j,z)$-nets.

\begin{definition}[Projective Nets]
Let $P\subset B_2^d$ be a set of points, and let $z$ be a positive integer. For a $d\times j$ matrix $S$ with columns that have at most unit norm and any point $p\in P$, define the projective cost as $\text{cost}_{proj}(p,S)=\|S^Tp\|_2$.
Let $V$ be the set of all projective cost vectors induced by such matrix $S$.
We call a $\mathcal{N}(V,\|.\|_{\infty},\varepsilon)$ a $(\varepsilon,j)$-projective net of $P$.
\end{definition}

On a high level, the proof largely relies  on the following  decomposition.
Let $U$ be a candidate subspace and let $\Pi$ be a projection matrix used to approximate $\|(I-UU^T)p\|_2^z$ We have

\begin{equation}
\label{eq:netsize2}
\|\hspace{-0.15mm}(I\hspace{-0.15mm}-UU^T\hspace{-0.15mm})p\|^2 \hspace{-0.8mm} = \hspace{-0.8mm} \underbrace{\|\Pi p\|^2}_{(1)} - \hspace{-0.8mm} \underbrace{\|U^T \hspace{-0.1mm} \Pi p\|^2}_{(2)} \hspace{-0.8mm} 
+ \hspace{-0.8mm} \underbrace{\|(I-\Pi)p\|^2}_{(3)}  
\hspace{-0.8mm}- \hspace{-0.8mm} \underbrace{\|UU^T\hspace{-0.2mm}(I-\Pi)p\|^2}_{(4)} 
\hspace{-0.8mm} +\hspace{-0.8mm} \underbrace{2p^T\Pi  UU^T\hspace{-0.2mm}(\hspace{-0.15mm}I-\Pi)p}_{(5)}
\end{equation}
Here, we wish to select $\Pi$ such that $\|U^T(I-\Pi)p\|_2$ is small for all $p\in P$. Note that this implies that the terms  $2p^T\Pi UU^T(I-\Pi)p$ and $\|UU^T(I-\Pi)p\|^2$ are small.
For the term (2), we merely have to show that projective nets exist. If the number of $\Pi$ is small, we can further construct good nets for the terms (1) and (3) . 
We start by giving a bound for the projective nets. Our first Lemma \ref{lem:subspacedimCover} shows that if the points lie in a sufficiently low-dimensional space, such a net can be obtained by constructing a net  $\mathcal{N}(B_2^d,\|.\|_2,\varepsilon')$ for a sufficiently small $\varepsilon'$.

\begin{lemma} \label{lem:subspacedimCover}
Let $P\subset B_2^d$ be a set of points and let $z$ be a positive integer. Then there exists an $(\varepsilon,j)$-projective net of size
$|\mathcal{N}(V,\|.\|_{\infty},\varepsilon)| \leq \exp(O(1)\cdot   d \cdot j \cdot \log (j\varepsilon^{-1})).$
\end{lemma}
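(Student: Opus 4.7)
The plan is to construct the projective net by taking each of the $j$ columns of the candidate matrix $S$ from an appropriately fine Euclidean net of $B_2^d$, and then invoke Lemma \ref{lem:Euclideannets} to count. The note that the projective cost depends on $S$ only through its columns $s_1,\dots,s_j$, each of which lies in $B_2^d$, so it is very natural to discretize column-by-column.

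Concretely, I would fix $\delta = \varepsilon/\sqrt{j}$ and let $\mathcal{N}_0 := \mathcal{N}(B_2^d,\|\cdot\|_2,\delta)$. Define the candidate net of matrices to be
\[
\mathcal{M} := \{ S' \in \mathbb{R}^{d\times j} : \text{each column of } S' \text{ lies in } \mathcal{N}_0\},
\]
and let $V' := \{ v^{S'} : S' \in \mathcal{M}\}$ be the corresponding set of projective cost vectors. For any candidate $S$ with columns $s_1,\dots,s_j \in B_2^d$, pick $S'\in \mathcal{M}$ whose $i$-th column $s_i'$ satisfies $\|s_i - s_i'\|_2 \le \delta$.

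The key estimate uses the reverse triangle inequality in $\ell_2$ together with the fact that $p\in B_2^d$:
\[
\bigl|\|S^T p\|_2 - \|(S')^T p\|_2\bigr| \;\le\; \|(S-S')^T p\|_2 \;=\; \sqrt{\sum_{i=1}^{j}\bigl((s_i-s_i')^T p\bigr)^2} \;\le\; \sqrt{\sum_{i=1}^{j}\delta^2\|p\|_2^2} \;\le\; \delta\sqrt{j} \;=\; \varepsilon.
\]
Taking the worst $p\in P$ shows $\|v^{S} - v^{S'}\|_\infty \le \varepsilon$, so $V'$ is a valid $(\varepsilon,j)$-projective net. For the cardinality, Lemma \ref{lem:Euclideannets} gives $|\mathcal{N}_0| \le (1+2/\delta)^d = (1 + 2\sqrt{j}/\varepsilon)^d$, and taking the $j$-fold product over columns yields
\[
|V'| \;\le\; |\mathcal{N}_0|^j \;\le\; (1 + 2\sqrt{j}/\varepsilon)^{dj} \;=\; \exp\bigl(O(1)\cdot d\cdot j\cdot \log(j\varepsilon^{-1})\bigr),
\]
as required.

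There is no real obstacle here since the projective cost $\|S^T p\|_2$ is $1$-Lipschitz as a function of $S$ in operator norm on $B_2^d$, and $\|\cdot\|_{op}\le \|\cdot\|_F \le \sqrt{j}\cdot\max_i\|s_i-s_i'\|_2$ is exactly what we use. The only subtlety is choosing the per-column resolution $\delta$ with the $\sqrt{j}$ factor so that the aggregate error stays below $\varepsilon$; this is what produces the $\log(j\varepsilon^{-1})$ (rather than $\log \varepsilon^{-1}$) in the exponent of the final bound. Note the proof does not use $z$, which is consistent with the statement (the exponent $z$ only enters later when this projective net is combined with the decomposition in Equation \ref{eq:netsize2}).
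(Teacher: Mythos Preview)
Your proof is correct. The column-by-column discretization with resolution $\delta=\varepsilon/\sqrt{j}$, the reverse triangle inequality bound $\bigl|\|S^Tp\|_2-\|(S')^Tp\|_2\bigr|\le\|(S-S')^Tp\|_2\le\delta\sqrt{j}$, and the count via Lemma~\ref{lem:Euclideannets} are all clean and yield exactly the stated bound.

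Regarding the comparison with the paper: the proof printed under Lemma~\ref{lem:subspacedimCover} in the paper is evidently a copy-paste error---it is word-for-word identical to the proof of the subsequent Lemma~\ref{lem:subspacegood-center} (the iterative construction of $M_t$ with the invariant $\|U^T\Pi_t\|_F^2\ge\varepsilon^2 t$), and does not actually establish the existence of a projective net at all. However, the sentence immediately preceding the lemma in the paper states that ``such a net can be obtained by constructing a net $\mathcal{N}(B_2^d,\|.\|_2,\varepsilon')$ for a sufficiently small $\varepsilon'$,'' which is precisely your argument. So your approach is almost certainly the one the authors intended, and your write-up fills in what the paper accidentally omitted.
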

\begin{proof}
Initially, let $M = \emptyset$.
We add points to $M$ in rounds and denote by $M_t$ the set after $t$ rounds. Furthermore, let $\Pi_t$ be the projection matrix onto the subspace spanned by $M_t$ at round $t$. If there is a $p \in P$ in round $t$ such that 
\begin{equation}\label{eq:subspaceideal}
\|U^T (I-\Pi_t) p\| > \eps \|(I-\Pi_t)p\| 
\end{equation}
then we let $M_{t+1} = M_t \cup \{p\}$. Our goal is to show that after $T\in O(j\varepsilon^{-2})$ many rounds, we have $\|U^T(I-\Pi_{T})p\| \leq \varepsilon\cdot \|(I-\Pi_T)p\|.$
We show this by proving inductively
\begin{equation*}
    \label{eq:inv}
    \|U^T\Pi_{t}\|_F^2 \geq \varepsilon^2\cdot t.
\end{equation*}
For the base case $t=0$, this is trivially true. Thus suppose we add a point $p$ in iteration $t+1$. Reformulating Equation \ref{eq:subspaceideal}, we have $ \frac{\|U^T(I-\Pi_t)p\|}{\|(I-\Pi_t)p\|} > \varepsilon.$
By the Pythagorean theorem, we therefore have
$$\|U^T\Pi_{t+1}\|_F^2 = \|U^T\Pi_{t}\|_F^2 + \frac{\|U^T(I-\Pi_{t})p\|^2}{\|(I-\Pi_t)p\|^2} \geq \varepsilon^{2} \cdot t + \varepsilon^{2} \geq \varepsilon^{2}\cdot (t+1).$$
Now since  $\Pi_t$ is a projection and since $U$ has  j orthonormal columns $j \geq ||U^T ||^2_F \geq || U^T \Pi_t||^2_F $.
If $T \geq \varepsilon^{-2}  j$, we obtain $ || U^T \Pi_{T} ||^2_F \geq j$. This implies that $U$ is contained in the space spanned by $M_T$. Conversely, $U$ must also be orthogonal to the kernel of $M_T$ that is $U(I-\Pi_T)=0$. Therefore after at most $ \varepsilon^{-2}  j $ rounds, we have $\|U^T(I-\Pi_{T})p\| \leq \varepsilon\cdot \|(I-\Pi_T)p\|.$
\end{proof}

To reduce the dependency on the dimension, we now use the following lemma. Essentially, it shows that in order to retain the properties of $U$, we can find a projection matrix $\Pi$ of rank at most $O(j\varepsilon^{-2})$.

\begin{lemma}\label{lem:subspacegood-center}
Let $P \subseteq B_2^d$. For any orthogonal matrix $U \in \mathbb{R}^{j\times d}$, there exists $M \subseteq P$, with $|M| \in O(j\cdot \varepsilon^{-2})$, such that $ \forall p \in P, \|U^T (I-\Pi_M) p\| \leq \eps \cdot \|(I-\Pi_M)p\|. $
\end{lemma}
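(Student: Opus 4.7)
The plan is a greedy construction coupled with a Frobenius-norm potential argument; this is essentially the same mechanism already used in the proof of Lemma~\ref{lem:subspacedimCover}, now repackaged so as to make both the size bound on $|M|$ and the fact that $M \subseteq P$ explicit in the statement.

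Initialize $M_0 = \emptyset$. At round $t+1$, if some $p \in P$ satisfies $\|U^T(I-\Pi_{M_t})p\| > \varepsilon\|(I-\Pi_{M_t})p\|$, add such a $p$ to form $M_{t+1} := M_t \cup \{p\}$; otherwise stop and output $M := M_t$. The stopping criterion is precisely the conclusion of the lemma, so everything reduces to bounding the number of rounds by $O(j/\varepsilon^2)$.

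To that end, I would track the potential $\Phi_t := \|U^T\Pi_{M_t}\|_F^2$. When a new point $p$ is added, set $u := (I-\Pi_{M_t})p / \|(I-\Pi_{M_t})p\|$, a unit vector orthogonal to $\spa(M_t)$; then $\Pi_{M_{t+1}} = \Pi_{M_t} + uu^T$. Since $uu^T$ is orthogonal to $\Pi_{M_t}$ in the trace inner product, the Pythagorean identity for the Frobenius norm gives $\Phi_{t+1} = \Phi_t + \|U^T u\|_2^2$. Rewriting the violation condition as $\|U^T u\|_2 = \|U^T(I-\Pi_{M_t})p\|/\|(I-\Pi_{M_t})p\| > \varepsilon$ yields $\Phi_{t+1} > \Phi_t + \varepsilon^2$, so inductively $\Phi_t > \varepsilon^2 t$ starting from $\Phi_0 = 0$.

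On the other hand, because $\Pi_{M_t}$ is an orthogonal projection we have $\|U^T \Pi_{M_t}\|_F \leq \|U^T\|_F = \sqrt{j}$, hence $\Phi_t \leq j$ always. Combining the two bounds gives $t < j/\varepsilon^2$, so the process terminates in $O(j/\varepsilon^2)$ rounds and returns $M \subseteq P$ of the claimed size. The only step needing any care is the identity $\Pi_{M_{t+1}} = \Pi_{M_t} + uu^T$, which follows because $u$ is the normalized component of $p$ orthogonal to $\spa(M_t)$; I do not expect this to be a genuine obstacle, since everything beyond it is just Pythagoras plus an inductive count.
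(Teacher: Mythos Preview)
Your proposal is correct and follows essentially the same greedy-plus-potential argument as the paper: both add a violating point $p$ each round, both track $\|U^T\Pi_{M_t}\|_F^2$ and show it increases by more than $\varepsilon^2$ per round, and both terminate via the ceiling $\|U^T\Pi_{M_t}\|_F^2 \leq j$. Your explicit identification of $u = (I-\Pi_{M_t})p/\|(I-\Pi_{M_t})p\|$ and the rank-one update $\Pi_{M_{t+1}} = \Pi_{M_t} + uu^T$ is a slightly cleaner way of phrasing the Pythagorean step the paper invokes, but the underlying mechanism is identical.
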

\begin{proof}
Initially, let $M = \emptyset$.
We add points to $M$ in rounds and denote by $M_t$ the set after $t$ rounds. Furthermore, let $\Pi_t$ be the projection matrix onto the subspace spanned by $M_t$ at round $t$. If there is a $p \in P$ in round $t$ such that 
\begin{equation}\label{eq:subspaceideal}
\|U^T (I-\Pi_t) p\| > \eps \|(I-\Pi_t)p\| 
\end{equation}
then we let $M_{t+1} = M_t \cup \{p\}$. Our goal is to show that after $T\in O(j\varepsilon^{-2})$ many rounds, we have $\|U^T(I-\Pi_{T})p\| \leq \varepsilon\cdot \|(I-\Pi_T)p\|.$
We show this by proving inductively
\begin{equation*}
    \label{eq:inv}
    \|U^T\Pi_{t}\|_F^2 \geq \varepsilon^2\cdot t.
\end{equation*}
For the base case $t=0$, this is trivially true. Thus suppose we add a point $p$ in iteration $t+1$. Reformulating Equation \ref{eq:subspaceideal}, we have $ \frac{\|U^T(I-\Pi_t)p\|}{\|(I-\Pi_t)p\|} > \varepsilon.$
By the Pythagorean theorem, we therefore have
$$\|U^T\Pi_{t+1}\|_F^2 = \|U^T\Pi_{t}\|_F^2 + \frac{\|U^T(I-\Pi_{t})p\|^2}{\|(I-\Pi_t)p\|^2} \geq \varepsilon^{2} \cdot t + \varepsilon^{2} \geq \varepsilon^{2}\cdot (t+1).$$
Now since  $\Pi_t$ is a projection and since $U$ has  j orthonormal columns $j \geq ||U^T ||^2_F \geq || U^T \Pi_t||^2_F $.
If $T \geq \varepsilon^{-2}  j$, we obtain $ || U^T \Pi_{T} ||^2_F \geq j$. This implies that $U$ is contained in the space spanned by $M_T$. Conversely, $U$ must also be orthogonal to the kernel of $M_T$ that is $U(I-\Pi_T)=0$. Therefore after at most $ \varepsilon^{-2}  j $ rounds, we have $\|U^T(I-\Pi_{T})p\| \leq \varepsilon\cdot \|(I-\Pi_T)p\|.$
\end{proof}

We now use this lemma as follows. We can efficiently enumerate over all candidate $\Pi$, as Lemma \ref{lem:subspacegood-center} guarantees us that we only have to consider ${n\choose j\cdot \varepsilon^{-2}}\leq \exp(j\cdot\varepsilon^{-2}\log n)$ many different $M$ inducing projection matrices. This immediately gives us $0$-nets for the terms (1) and (3). For each $\Pi$, we then apply Lemma \ref{lem:subspacedimCover}, which gives us a net for term (2). 
Finally, by choice of $\Pi$, we can show that terms (4) and (5) are negligible.
We first give two basic inequalities before given the proof of the net size.

\begin{lemma} 
\label{lem:costbound}
Let $a,b$ be numbers in $[0,2]$ and let $\varepsilon>0$. Suppose
$a^2 = b^2 \pm  \varepsilon\cdot b.$
Then
$$|a-b| \leq \varepsilon.$$
Moreover, for any non-negative integer $z$, we have
$$|a^z-b^z| \leq 2\cdot (3z)^z \cdot \varepsilon.$$
\end{lemma}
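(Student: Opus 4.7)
The plan is to derive both inequalities via the elementary factorization of a difference of like powers, reducing the second bound to the first.

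For the first inequality, I would start from $a^2 - b^2 = (a-b)(a+b)$. If $b = 0$, the hypothesis $a^2 = b^2 \pm \varepsilon b$ collapses to $a^2 = 0$, hence $a = 0$ and $|a-b| = 0 \leq \varepsilon$. If $b > 0$, then $a + b \geq b > 0$, so
\[
|a-b| \;=\; \frac{|a^2-b^2|}{a+b} \;\leq\; \frac{\varepsilon\, b}{a+b} \;\leq\; \varepsilon,
\]
which is exactly what is claimed. Note that this step does not use the upper bound $a,b \leq 2$; only the nonnegativity is relevant.

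For the moreover part, I would combine the bound $|a-b|\leq\varepsilon$ just established with the telescoping identity
\[
a^z - b^z \;=\; (a-b)\sum_{i=0}^{z-1} a^i\, b^{z-1-i}.
\]
Since $a,b \in [0,2]$, each of the $z$ summands is at most $2^{z-1}$, so
\[
|a^z - b^z| \;\leq\; |a-b|\cdot z\cdot 2^{z-1} \;\leq\; \varepsilon\cdot z\cdot 2^{z-1}.
\]
The (very loose) comparison $z\cdot 2^{z-1}\leq 2\cdot(3z)^z$, which is trivial for every non-negative integer $z$, then yields the stated constant $2(3z)^z$.

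The argument is essentially a two-line computation, so there is no substantive obstacle. The only subtlety is the degenerate case $b=0$ in the first inequality, which must be handled separately to avoid dividing by zero. An alternative route for the second inequality would be to invoke Lemma~\ref{lem:weaktri} with the triple $(a,b,0)$, but the factorization approach is both sharper and more directly matched to the form of the desired bound.
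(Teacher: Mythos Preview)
Your argument is correct. The first inequality is handled exactly as in the paper (factorization $a^2-b^2=(a-b)(a+b)$ and division by $a+b\geq b$), with the added benefit that you treat the degenerate case $b=0$ explicitly.

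For the second inequality you take a different, more elementary route than the paper. The paper invokes Lemma~\ref{lem:weaktri} (the triangle inequality for powers), obtaining
\[
|a^z-b^z|\;\leq\;\varepsilon\cdot\max(a,b)^z+\Bigl(\tfrac{2z+\varepsilon}{\varepsilon}\Bigr)^{z-1}|a-b|^z
\;\leq\;\varepsilon\cdot 2^z+(3z)^{z-1}\varepsilon
\;\leq\;2(3z)^z\varepsilon,
\]
whereas you use the factorization $a^z-b^z=(a-b)\sum_{i=0}^{z-1}a^i b^{z-1-i}$ together with $a,b\leq 2$ to get the sharper intermediate bound $|a^z-b^z|\leq z\,2^{z-1}\varepsilon$, and only then relax to the stated constant. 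Your approach is self-contained and yields a tighter constant along the way; the paper's approach has the advantage of reusing a lemma already in play elsewhere, and it is the route you yourself flag as an alternative at the end of your write-up.
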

\begin{proof}
    For the first part of the lemma, we observe
    $$|a^2-b^2| = |a-b|\cdot (a+b) \leq  \varepsilon\cdot b$$
    which implies
    
    $$|a-b| \leq  \varepsilon.$$
    For the second part,  Lemma \ref{lem:weaktri} implies \newline
$\displaystyle|a^z-b^z| \leq  \varepsilon \cdot \max(a,b)^z + \left(\frac{2z+\varepsilon}{\varepsilon}\right)^{z-1}\cdot |a-b|^z \leq  \varepsilon \cdot 2^z + \left(\frac{3z+\varepsilon}{\varepsilon}\right)^{z-1}\cdot \varepsilon^z \leq  2(3z)^z \varepsilon.$
\end{proof}
This lemma now immediately implies the following corollary by rescaling $\varepsilon$.
\begin{corollary}
\label{cor:costbound}
Let $a,b$ be numbers in $[0,2]$ and let $\varepsilon>0$. Suppose $a^2 = b^2 \pm \frac{1}{4\cdot (3z)^z}\max(\varepsilon\cdot b,\varepsilon^2).$ Then for any non-negative integer $z$, we have
$$|a^z-b^z| \leq \varepsilon.$$
\end{corollary}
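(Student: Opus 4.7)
}

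The plan is to reduce the statement to Lemma \ref{lem:costbound} by rescaling $\varepsilon$, splitting into two cases based on the value of the $\max$ in the hypothesis. Write $\delta := \tfrac{1}{4(3z)^z}\max(\varepsilon b,\varepsilon^2)$ so the hypothesis is $|a^2-b^2|\le \delta$.

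First, in the regime $b \ge \varepsilon$, we have $\max(\varepsilon b,\varepsilon^2)=\varepsilon b$, so the hypothesis reads $a^2 = b^2 \pm \varepsilon' b$ with $\varepsilon' = \varepsilon/(4(3z)^z)$. Applying the second bound in Lemma \ref{lem:costbound} with this rescaled value gives $|a^z-b^z|\le 2(3z)^z \varepsilon' = \varepsilon/2 \le \varepsilon$, which is exactly the desired conclusion. This is the easy case; the factor of $4$ (rather than $2$) in the corollary's denominator provides enough slack that the rescaling goes through cleanly.

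The main obstacle is the second regime $b<\varepsilon$, where Lemma \ref{lem:costbound} does not apply directly because the perturbation bound $\varepsilon' b$ in its hypothesis is proportional to $b$. In that regime the hypothesis is $|a^2-b^2|\le \varepsilon^2/(4(3z)^z)$, and both $a$ and $b$ are small: $b<\varepsilon$ and $a \le \sqrt{b^2+\varepsilon^2/(4(3z)^z)} \le \sqrt{2}\,\varepsilon$. I plan to bypass Lemma \ref{lem:costbound} here by arguing directly. From $|a^2-b^2|=|a-b|(a+b)$ one obtains $|a-b|\le \sqrt{|a^2-b^2|} \le \varepsilon/(2(3z)^{z/2})$. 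Combining this with the elementary factorization
\[
|a^z-b^z|\;\le\; z\cdot\max(a,b)^{z-1}\cdot|a-b|,
\]
the right-hand side is at most $z\cdot (\sqrt{2}\,\varepsilon)^{z-1}\cdot \varepsilon/(2(3z)^{z/2})$, and a short arithmetic check (using $z/z^{z/2}\le 1$ for $z\ge 1$ and $\varepsilon\le 1$, which is the regime of interest) shows that this expression is bounded by $\varepsilon$.

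Combining the two regimes gives $|a^z-b^z|\le \varepsilon$ as claimed. The technically delicate step is the small-$b$ case, where the bound must be obtained through the two-sided inequality $|a-b|\le\sqrt{|a^2-b^2|}$ rather than Lemma \ref{lem:costbound}; the remainder is a routine rescaling.
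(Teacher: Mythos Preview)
Your plan is correct and matches the paper's intent: the paper's entire proof is the one-line remark ``this lemma now immediately implies the following corollary by rescaling $\varepsilon$,'' and your Case~1 ($b\ge\varepsilon$) is exactly that rescaling. Your separate treatment of the small-$b$ regime via $|a-b|\le\sqrt{|a^2-b^2|}$ and the mean-value bound $|a^z-b^z|\le z\max(a,b)^{z-1}|a-b|$ is a valid and arguably necessary elaboration, since Lemma~\ref{lem:costbound} as stated does not directly apply when $b<\varepsilon$.

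One small imprecision: you restrict to $\varepsilon\le 1$ in the second case, but the corollary is stated for all $\varepsilon>0$. This is easily repaired: instead of bounding $\max(a,b)\le\sqrt{2}\,\varepsilon$, use the trivial bound $\max(a,b)\le 2$ (which holds by hypothesis). Then
\[
|a^z-b^z|\;\le\; z\cdot 2^{z-1}\cdot\frac{\varepsilon}{2(3z)^{z/2}}\;=\;\frac{z\cdot 2^{z-2}}{(3z)^{z/2}}\,\varepsilon,
\]
and one checks directly that $z\cdot 2^{z-2}\le(3z)^{z/2}$ for every integer $z\ge 1$, giving $|a^z-b^z|\le\varepsilon$ without any restriction on $\varepsilon$.
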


\begin{lemma*}[ Equation \ref{eq_lem_subspacenetsize} in Lemma \ref{lem:netsizes}]
Let $\mathcal{D}$ be a distribution over $B_2^d$ and let $P$ a set of $n$ points sampled from $\mathcal{D}$ and let $V_{j,z}$ be defined as in Theorem \ref{thm:unified_reuslts}. Then
$$|\mathcal{N}(V_{j,z},\|.\|_{\infty},\varepsilon)| \leq \exp(O(1) (3z)^{z+2}\cdot k \cdot \varepsilon^{-2} (\log n+j\log(j\varepsilon^{-1}))\log \varepsilon^{-1}).$$
\end{lemma*}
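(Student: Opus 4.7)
The plan is to start from the five-term Pythagorean decomposition in Equation~\ref{eq:netsize2} and construct an $\varepsilon$-net for each piece separately. The key structural input is Lemma~\ref{lem:subspacegood-center}, which guarantees that for any candidate subspace $U$ there is a subset $M\subseteq P$ of size $O(j\varepsilon'^{-2})$ whose induced projection $\Pi_M$ onto $\mathrm{span}(M)$ satisfies the per-point bound $\|U^T(I-\Pi_M)p\|\le \varepsilon'\|(I-\Pi_M)p\|$ for every $p\in P$. Once $\Pi_M$ is fixed, terms~(4) and~(5) of the decomposition are $O(\varepsilon')$ automatically (Cauchy--Schwarz together with $\|p\|\le 1$), while terms~(1) and~(3) depend on $\Pi_M$ and $p$ only and thus require no additional net.

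The only nontrivial term is~(2), namely $\|U^T\Pi_M p\|^2$. Because $\Pi_M p$ lies in the $O(j\varepsilon'^{-2})$-dimensional range of $\Pi_M$, I would apply Lemma~\ref{lem:subspacedimCover} in that low-dimensional setting to produce a projective net of size $\exp\bigl(O(1)\cdot j\varepsilon'^{-2}\cdot j\log(j\varepsilon'^{-1})\bigr)$. Union-bounding over the at most $\binom{n}{O(j\varepsilon'^{-2})}\le \exp(O(j\varepsilon'^{-2}\log n))$ admissible choices of $M$ then yields, for a single subspace $U$, a net of size $\exp\bigl(O(j\varepsilon'^{-2}(\log n + j\log(j\varepsilon'^{-1})))\bigr)$. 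Since a solution $\mathcal{U}$ contains $k$ subspaces, I would then raise the bound to the $k$-th power to cover all possible $k$-tuples.

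Finally, the decomposition controls $\|(I-UU^T)p\|^2$ rather than $\|(I-UU^T)p\|^z$. I would invoke Corollary~\ref{cor:costbound} to convert a squared-distance guarantee into a $z$-th-power guarantee; this forces the choice $\varepsilon'=\Theta(\varepsilon/(3z)^{z})$, which after substitution into the above size bound and absorption of constants produces the claimed $\exp\bigl(O(1)(3z)^{z+2}k\varepsilon^{-2}(\log n+j\log(j\varepsilon^{-1}))\log\varepsilon^{-1}\bigr)$ bound.

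The main obstacle is that terms~(4) and~(5) of the decomposition couple $U$ with $\Pi_M$ nonlinearly and therefore cannot be handled by a pre-committed net on $U$ alone. Lemma~\ref{lem:subspacegood-center} sidesteps this by constructing $\Pi_M$ from $U$ in a data-dependent way, yet the combinatorial pool of possible $\Pi_M$ is small because $M$ is a subset of $P$; this is exactly what removes any dependence on the ambient dimension $d$ and keeps the $\log n$ factor in place of a $d$ factor. A secondary subtlety is the squared-to-$z$th-power conversion: a naive additive error on $\|(I-UU^T)p\|$ would blow up by a factor of $(3z)^z$ when raised to the $z$-th power, so the five per-term errors must be aggregated in a form compatible with the $\max(\varepsilon b,\varepsilon^2)$-style guarantee of Corollary~\ref{cor:costbound} rather than via a blunt triangle inequality through Lemma~\ref{lem:weaktri}.
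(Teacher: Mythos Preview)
Your proposal is correct and follows essentially the same route as the paper: fix $\Pi_M$ via Lemma~\ref{lem:subspacegood-center}, use the decomposition~\eqref{eq:netsize2} so that terms~(4) and~(5) become negligible and terms~(1),~(3) depend only on $\Pi_M$, apply Lemma~\ref{lem:subspacedimCover} in the $O(j\varepsilon'^{-2})$-dimensional range of $\Pi_M$ to net term~(2), union-bound over the $\binom{n}{O(j\varepsilon'^{-2})}$ choices of $M$, raise to the $k$-th power, and finally pass from squared costs to $z$-th powers via Corollary~\ref{cor:costbound}. The paper separates the two precision parameters ($\alpha$ for Lemma~\ref{lem:subspacegood-center} and $\beta$ for the projective net) and sets $\alpha^2=\beta=\Theta(\varepsilon^2/(3z)^z)$ rather than your single $\varepsilon'=\Theta(\varepsilon/(3z)^z)$, but this only affects the $z$-dependent constants and not the structure of the argument.
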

\begin{proof}
Let $\alpha,\beta>0$ be sufficiently small parameters depending on $\varepsilon$ that will determined later.
We first describe a construction for nets for a single subspace of rank at most $j$, before composing to $k$ subspaces.

We start by describing the composition of the nets. 
For every subset $M\subseteq P$, with $|M|\in O(j\alpha^{-2})$, we let $\Pi_M$ denote an orthogonal projection matrix of the span of $M$. Note that this implies $\textbf{rank}(\Pi_M) = O(j\alpha^{-2})$. 
Further, let $N(\Pi_M):=\mathcal{N}(B_2^{\textbf{rank}(\Pi_M)},\|.\|_2,\beta)$ be a $(\beta,j)$-projective net of the point set $\cup_{p\in M}\{\Pi_M p\}$ of size at most $\exp(O(1)\cdot \textbf{rank}(\Pi_M) \cdot \log (j\beta^{-1}))$ given by Lemma \ref{lem:subspacedimCover}. 
Finally, let $N:=\cup_{M} N(\Pi_M)$.

We consider an arbitrary orthogonal matrix $U\in \mathbb{R}^{j\times d}$. Denote by $M_U$ the subset of points and by $\Pi_U$ the projection matrix given by Lemma \ref{lem:subspacegood-center}, using $\alpha$ as the precision variable.
We claim that for every $U$, there exists an $U'\in N$ such that for all $p\in P$
$$\left\vert \left(\|\Pi_U p\|_2^2-\|U'^T \Pi_U p\|_2^2 + \|(I-\Pi_U)p\|_2^2\right)^{z/2} - \|(I-UU^T)p\|^z \right\vert \in O(\alpha+\beta).$$
In other words, by enumerating over all $(\beta,j)$-projective nets, we obtain an $O(\alpha+\beta)$-subspace clustering net for $(1,j,z)$-clustering.
The desired error of $\varepsilon$ then follows by choosing $\alpha$ and $\beta$ accordingly. 
For $U$, we construct $U'$ as follows. Let $D=\sqrt{\Pi_U}$, i.e. $DD^T = \Pi_U$. Further, let $V = U^T D$, notice that $V$ has at most $j$ rows that have at most unit norm. Hence, there exists a $U'\in N$ such that
\[ \left\vert\|U\Pi_U p\|_2 - \|U'\Pi_U p\|_2\right\vert \leq \varepsilon \]
that is a $(\beta,j)$-projective net.

We then obtain
\begin{eqnarray*}
\nonumber
& &  \|\Pi_U p\|_2^2-\|U'^T \Pi_U p\|_2^2 + \|(I-\Pi_U)p\|_2^2 \\
& = & \|\Pi_U p\|_2^2-\|U^T \Pi_U p\|_2^2 \pm \beta + \|(I-\Pi_U)p\|_2^2 \\
& = & \|\Pi_U p\|_2^2-\|UU^T \Pi_U p\|_2^2 \pm \beta + \|(I-\Pi_U)p\|_2^2 \\
& = & \|(I-UU^T)\Pi_U p\|_2^2 + \|(I-\Pi_U)p\|_2^2 \pm \beta  \\
(Eq. \ref{eq:netsize2})& = & \|(I-UU^T) p\|_2^2  \pm \beta - \|U^T(I-\Pi_U)p\|^2 - 2p^T\Pi_U UU^T(I-\Pi_U)^Tp \\
(Lem. \ref{lem:subspacegood-center})& = & \|(I-UU^T)p\|_2^2 \pm \alpha^2\cdot \|(I-UU^T)p\|^2 \pm 2\alpha \cdot \|(I-UU^T)p\| \pm \beta
\end{eqnarray*}

Setting $\alpha^2 = \beta = \frac{1}{64(3z)^z}\varepsilon^2$, we then have due to Corollary \ref{cor:costbound}

\begin{eqnarray}
\label{eq:singlecenter}
& & \left\vert \left\vert\|\Pi_U p\|_2^2-\|U'^T \Pi_U p\|_2^2 + \|(I-\Pi_U)p\|_2^2\right\vert^z - \|(I-UU^T)p\|^z \right\vert\leq \varepsilon.
\end{eqnarray}

To extend this from a single $j$-dimensional subspace to a solution $\mathcal{U}$ given by the intersection of $k$ $j$-dimensional subspaces, we define cost vectors $v^{\mathcal{S'}}$ obtained from $\mathcal{N} = \otimes_{i=1}^k N$ as follows. For each $U\in \mathcal{U}$ let $U'$ be constructed as above and let $\mathcal{U'}$ be the union of the thus constructed $U'$. Then, with a slight abuse of notation, letting $\Pi_{U'}$ correspond to the subspace used to obtain $U'$, we define 
$$v^{\mathcal{U'}}_p := \min_{U'\in \mathcal{U'}} \left\vert\|(I-\Pi_{U'})p\|^2 + \|\Pi_{U'}p\|^2 - \|U'\Pi_{U'}p\|^2\|\right\vert^{z/2}.$$
Let $U$ be the subspace to which $p$ is assigned $\mathcal{U}$ and let $U'$ be the center in $\mathcal{U'}$ used to approximate $U$ and let ${U^*}'= \text{argmin}_{U'\in \mathcal{U'}} \left\vert\|(I-\Pi_U')p\|^2 + \|\Pi_{U'}p\|^2 - \|U'\Pi_U'p\|^2\|\right\vert^{z/2}$ 
and let $U^*\in \mathcal{U}$ be the center approximated by ${U^*}'$.
Then applying Equation \ref{eq:singlecenter}, we have
\begin{eqnarray*}
& &    \|(I-UU^T)p\|^z \\
&\leq &\|(I-U^*{U^*}^Tp\|^z \\
&\leq & \left\vert\|(I-\Pi_{{U*}'})p\|^2 + \|\Pi_{{U*}'}p\|^2 - \|{{U*}'}\Pi_{{U*}'}p\|^2\right\vert^{z/2} + \varepsilon \\
&\leq & \left\vert\|(I-\Pi_{{U}'})p\|^2 + \|\Pi_{{U}'}p\|^2 - \|{{U}'}\Pi_{{U}'}p\|^2\right\vert^{z/2} + \varepsilon
\end{eqnarray*}
Thus, the cost vectors obtained from $\mathcal{N}$ are a $(k,j,z)$-clustering net, i.e.
$$ \left\vert v_p^{\mathcal{S'}} - v_p^{\mathcal{S}} \right\vert := \left\vert \min_{s'\in \mathcal{S'}} \|\Pi_{s'} p-[s',0]\|^z - \min_{s\in \mathcal{S}}\|p-s\|^z \right\vert \leq \varepsilon.$$

What remains is to bound the size of the clustering net. Here we first observe that size of the clustering net is equal to $|\mathcal{N}| = |N|^k$.
For $|N|$, we have ${|P| \choose O(\alpha^{-2}\log \alpha^{-1})} \leq n^{O(j\alpha^{-2}\log \alpha^{-1})}$ many choices of $N(\Pi)$. In turn, the size of each $N(\Pi)$ is bounded by $(\beta/j)^{-O(j^2\alpha^{-2})}$ due to Lemma \ref{lem:subspacedimCover}. Thus the overall size of $\mathcal{N}$ is
\begin{align*}
&\exp\left(k\cdot j\cdot O(\alpha^{-2}\log \alpha^{-1} (\log n + j\log \beta/j))\right) \\
& = \exp(O(1) (3z)^{z+2}\cdot k \cdot j \cdot \varepsilon^{-2} (\log n+j\log(j\varepsilon^{-1}))\log \varepsilon^{-1})
\end{align*} as desired.
\end{proof}

\subsection{Tight generalization bounds for projective clustering} \label{sec:improved_subspace2}

For the specific case of $(k,j,2)$ clustering, also known as projective clustering, we obtain an even better dependency on $j$. A similar bound can likely also be derived using the seminal work of \cite{FMN16}, though the dependencies on $\log n$ and $\log 1/\delta$ are slightly weaker. The proof  uses the main result by \cite{FR19}, itself heavily inspired by \cite{FMN16}, and arguments related to bounding the Rademacher complexity of linear function classes. Crucially, it avoids the issue of obtaining an explicit dimension reduction entirely, but the approach cannot be extended to general $(k,j,z)$ clustering.

\begin{theorem}
\label{thm:mainsubspace2}
Let $\mathcal{D}$ be a distribution over $B_2^d$ and let $P$ a set of $n$ points sampled from $\mathcal{D}$. For any set $\mathcal{U}$ of $k$ orthogonal matrices of rank at most $j$, we denote by $v^{\mathcal{U}}$ the $n$-dimensional cost vector of $P$ in solution $\mathcal{U}$ with respect to the $(k,j,2)$-clustering objective, i.e. $v^{\mathcal{U}}_p = \min_{U\in \mathcal{U}} \|(I-UU^T)p\|^2$. Let $V_{j,2}$ be the union of all cost vectors of $P$. Then with probability at least $1-\delta$ for any $\gamma > 0$
$$\mathcal{E}_n(V_{j,2}) \in O\left(\sqrt{\frac{kj}{n} \cdot \log^{3+\gamma} \left( \frac{n}{j} \right)} + \sqrt{\frac{\log 1/\delta}{n}}\right).$$
\end{theorem}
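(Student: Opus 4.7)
The plan is to bypass the need for any explicit dimension reduction by exploiting the bilinear structure afforded by $z=2$ and reducing the problem to controlling the Rademacher complexity of a linear function class in a matrix-lifted space. The key identity is $\|(I-UU^T)p\|_2^2 = \|p\|_2^2 - \langle UU^T, pp^T\rangle_F$. The summand $\|p\|_2^2$ does not depend on $\mathcal{U}$ and concentrates around its expectation at rate $O(\sqrt{\log(1/\delta)/n})$ by a one-dimensional Hoeffding bound, so it can be absorbed into the final confidence term. What remains is uniform control of $\max_{U\in \mathcal{U}}\langle UU^T, pp^T\rangle_F$ over the choice of the $k$-tuple $\mathcal{U}$, i.e.\ a maximum of $k$ linear functionals of the rank-one lift $pp^T$.

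For the base class $\mathcal{F}_1 = \{\, p\mapsto \langle M, pp^T\rangle_F \;:\; M\succeq 0,\ \|M\|_{\mathrm{op}}\leq 1,\ \mathrm{tr}(M)\leq j\,\}$ (which contains the cost functions of any $(1,j,2)$ solution), I would derive the dimension-free Rademacher bound
\[
Rad_n(\mathcal{F}_1) \;=\; \tfrac{1}{n}\mathbb{E}_\sigma \sup_{M} \Bigl\langle M,\; \sum_i \sigma_i\, p_i p_i^T\Bigr\rangle_F \;\leq\; \tfrac{\sqrt{j}}{n}\,\mathbb{E}_\sigma\Bigl\|\sum_i \sigma_i\, p_i p_i^T\Bigr\|_F \;\leq\; \sqrt{\tfrac{j}{n}},
\]
using that the maximum trace inner product with a rank-$j$ projection is at most $\sqrt{j}$ times the Frobenius norm (Cauchy--Schwarz on the top-$j$ eigenvalues), together with $\mathbb{E}_\sigma\|\sum \sigma_i p_ip_i^T\|_F^2 = \sum_i \|p_i\|_2^4 \leq n$. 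To lift this single-subspace bound to the min-of-$k$ composition without losing more than polylog factors, I would invoke the main theorem of Foster--Rakhlin \cite{FR19}, which is itself built on the chaining framework of \cite{FMN16}. Their result converts a base Rademacher rate $R$ into an excess risk bound of the form $\tilde{O}(R\sqrt{k})$ for min-of-$k$ losses over bounded linear classes; instantiated with $R = \sqrt{j/n}$ and tracked carefully through their entropy integrals, it produces the claimed $\sqrt{kj\log^{3+\gamma}(n/j)/n}$ rate. Theorem \ref{thm:radetorisk} then promotes this to a high-probability excess-risk bound, and the $\|p\|_2^2$ contribution is absorbed into the $\sqrt{\log(1/\delta)/n}$ term.

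The main obstacle is combining the $k$-way minimum with tight concentration. A naive union bound across $k$ base-class deviations would require the base bound to hold with exponentially small failure probability at the scale $\sqrt{j/n}$, which is below what elementary matrix Bernstein or vector-concentration arguments can certify; the FR19 offset Rademacher analysis is engineered precisely to overcome this via a local-complexity chaining, and it is this step that produces the $\log^{3+\gamma}$ factor together with the $n/j$ (rather than $n$) inside the logarithm. A secondary subtlety is that the whole argument hinges on the bilinear identity $\|(I-UU^T)p\|_2^2 = \|p\|_2^2 - \langle UU^T, pp^T\rangle_F$, which is why the proof does not extend to general $z\neq 2$; absent this linearization, one is forced back to the covering-number arguments of the earlier sections and loses an extra factor of $\sqrt{j}$.
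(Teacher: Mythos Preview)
Your proposal is correct and follows essentially the same route as the paper: both bound the single-subspace Rademacher complexity by $\sqrt{j/n}$ via the trace/Frobenius-norm argument (the paper attributes this step to Lauer~\cite{Lauer20}, Lemma~\ref{lem:radesub}), lift to the min-of-$k$ class via the Foster--Rakhlin $\ell_\infty$ contraction inequality (Theorem~\ref{thm:contraction}), and conclude via Theorem~\ref{thm:radetorisk}. The only cosmetic difference is that you peel off the $\|p\|_2^2$ term by Hoeffding before passing to Rademacher complexity, whereas the paper keeps it inside and notes that its Rademacher average vanishes.
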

\begin{proof}
    The proof of the theorem is a straightforward application of Theorem \ref{thm:radetorisk} with the following Lemma

\begin{lemma}\label{lem:Gaussj2}
Let $\mathcal{D}$ be a distribution over $B_2^d$, let $P$ a set of $n$ points sampled from $\mathcal{D}$, and let $V$ be defined as in Theorem \ref{thm:mainsubspace2}. Then for any $\gamma>0$
$$Rad_n(V_{j,2}) \in O\left(\sqrt{\frac{kj}{n} \log^{3+\gamma} \left(\frac{n}{j}\right)}\right).$$
\end{lemma}
\begin{proof}
We use the following result due to Foster and Rakhlin \cite{FR19}.
\begin{theorem}[$\ell_{\infty}$ contraction inequality (Theorem 1 by \cite{FR19})] 
\label{thm:contraction}
Let $F \subseteq X\rightarrow \mathbb{R}^k$, and let $\phi:\mathbb{R}^k\rightarrow \mathbb{R}$ be $L$-Lipschitz with respect to the $\ell_\infty$ norm, i.e. $\|\phi(X)-\phi(X')\|_{\infty}\leq L\cdot \|X-X'\|_{\infty}$ for all $X,X'\in \mathbb{R}^k$. For any $\gamma > 0$, there exists a constant $C > 0$ such that if $|\phi_t(f(x))|\vee \|f(x)\|_{\infty} \leq \beta$, then 
$$Rad_n(\phi \circ F) \leq C \cdot L \sqrt{K}\cdot \max_i Rad_n(F|_i)\cdot \log^{3/2 +\gamma} \left(\frac{\beta n}{\max_i R_n(F|_i)}\right).$$
\end{theorem}

We use this theorem as follows. Our functions are associated with candidate solutions $\mathcal{U}$, that is $\phi(f) = \min_{U\in\mathcal{U}}\|(I-UU^T)p\|_2^2$. In other words, $f$ maps a point $p$ to the $k$-dimensional vector, where $f_i(p) = \|(I-U_iU_i^T)p\|_2^2$ and $\phi$ selects the minimum value among all $\|I-U_iU_i^T)p\|_2^2$.

Thus, we require three more steps. First, we have to bound the Lipschitz constant of the minimum operator. Second, we have to give a bound on $\beta$. Third and last, we have to give a bound on the Rademacher complexity
\begin{equation}
\label{eq:radesubspace}
Rad_n(V) = \frac{1}{n}\cdot \mathbb{E}_r \sup_{U} \sum_{p\in P} \|(I-UU^T)p\|_2^2 r_p.  
\end{equation}

The Lipschitz constant of the minimum operator with respect to the $\ell_{\infty}$ norm can be readily shown to be $1$ as for any two vectors $x,y$ with $\min_i y_i = y_j$
\[ \min_i x_i -\min_i y_i = \min_i x_i - y_j \leq x_j-y_j \leq |x_j-y_j|\leq \|x-y\|_{\infty}.\]

Since $U$ is an orthogonal matrices and $p\in B_2^d$, we have $\|(I-UU^T)p\|_2^2 \leq 1$ and thus $\beta$ is bounded by $1$.

Thus, we only require a bound on Equation \ref{eq:radesubspace}. For this, we use a result by \cite{Lauer20}. Since the result is embedded in the proof of another result, we restate it here for the convenience of the reader.
\begin{lemma}[Compare the proof Theorem 3 of \cite{Lauer20}]
\label{lem:radesub}
Let $P$ be an set of $n$ points in $B_2^d$ and let $\mathcal{U}$ be the set of all orthogonal matrices of rank at most $j$. For every $U\in \mathcal{U}$, define $f_U(p)= \|(I-UU^T)p\|_2^2$ and let $F$ be the set of all functions $f_U(p)$ Then.
$$Rad_n(F) := \frac{1}{n}\cdot \mathbb{E}_r \sup_{U\in \mathcal{U}} \sum_{p\in P}\|(I-UU^T)p\|_2^2 \cdot r_p \in O\left(\sqrt{\frac{j}{n}}\right).$$
\end{lemma}
\begin{proof}
We have
\[Rad_n(F) = \mathbb{E}_r \sup_{U} \sum_{p\in P} \|(I-UU^T)p\|_2^2 r_p = \mathbb{E}_r \sum_{p\in P} \|p\|^2 r_p + \mathbb{E}_r \sup_{U} \sum_{p\in P} \|U^Tp\|_2^2 r_p.\]
We observe that the term  $\mathbb{E}_r \sum_{p\in P} \|p\|^2 r_p$  is $0$.
Thus, we focus on the second term. We have
\begin{eqnarray*}
    \mathbb{E}_r \sup_{U} \sum_{p\in P} \|U^Tp\|_2^2 \cdot r_p 
&=& \mathbb{E}_r \sup_{U} \sum_{p\in P} p^TUU^T p \cdot r_p = \mathbb{E}_r \sup_{U} \sum_{p\in P} trace(p^TUU^T p) \cdot r_p\\
&=& \mathbb{E}_r \sup_{U} \sum_{p\in P} trace(UU^T pp^T) \cdot r_p \\
&=& \mathbb{E}_r \sup_{U}  trace\left(UU^T  \sum_{p\in P}\left(r_p \cdot pp^T\right)\right)\\
&\leq & \mathbb{E}_r \sup_{U}  \|U\|_F \left\|\sum_{p\in P}r_p \cdot pp^T\right\|_F.
\end{eqnarray*}    
We have $\|U\|_F\leq \sqrt{j}$, so we focus on $\left\|\sum_{p\in P}r_p \cdot pp^T\right\|_F$. Here, we have
\begin{eqnarray*}
    \left\|\sum_{p\in P}r_p \cdot pp^T\right\|_F^2 &=& trace\left(\left(\sum_{p\in P}r_p \cdot pp^T\right)  \left(\sum_{p\in P}r_p \cdot pp^T\right)\right) \\
&=& \sum_{p\in P} \sum_{q\in P} r_p\cdot r_q \cdot trace\left(pp^Tqq^T\right) = \sum_{p\in P} \sum_{q\in P} r_p\cdot r_q \cdot (p^Tq)^2 .
\end{eqnarray*}
This implies
\begin{eqnarray*}
   n\cdot Rad_n(F) &=& \mathbb{E}_r \sup_{U} \sum_{p\in P} \|U^Tp\|_2^2 r_p \leq   \mathbb{E}_r \sup_{U}  \|U\|_F \left\|\sum_{p\in P}r_p \cdot pp^T\right\|_F\\
    &\leq & \sqrt{j} \cdot \mathbb{E}_r  \sqrt{\sum_{p\in P} \sum_{q\in P} r_p\cdot r_q \cdot (p^Tq)^2}  \\
(\text{Jensen's inequality})    &\leq &  \sqrt{j} \cdot   \sqrt{\mathbb{E}_r \sum_{p\in P} \sum_{q\in P}  r_p\cdot r_q \cdot (p^Tq)^2} \\
&=& \sqrt{j} \cdot   \sqrt{ \sum_{p\in P}  (p^Tp)^2} \leq  \sqrt{j} \cdot   \sqrt{\sum_{p\in P}  1} = \sqrt{nj}.
\end{eqnarray*}
Solving the above for $Rad_n(F)$ concludes the proof.
\end{proof}
We can now conclude the proof. Combining the bounds on $L$ and $\beta$ with Lemma \ref{lem:radesub} and Theorem \ref{thm:contraction}, we have
$$Rad_n(V_{j,2}) \in O\left(\sqrt{k}\cdot \sqrt{\frac{j}{n}}\cdot \log^{3+\gamma}\left(n\right)\right) $$
as desired.
\end{proof}

\end{proof}



Finally, we also show that the bounds from Theorem \ref{thm:mainsubspace2} and \cite{FMN16} are optimal up to polylogarithmic factors.
The rough idea is to define a distribution $\mathcal{D}$ supported on the nodes of a $2kj$-dimensional simplex with some points having more probability mass and some points having smaller mass. Using the tightness of Chernoff bounds, we may then show that the probability of fitting a subspace clustering to a good fraction of the lower mass points is always sufficiently large.

\begin{theorem}
\label{thm:lowerbound}
    There exists a distribution $\mathcal{D}$ supported on $B_2^d$ such that $\mathcal{E}_n(V_{j,2})\in \Omega\left(\sqrt{{(kj)}/{n}}\right)$. 
\end{theorem}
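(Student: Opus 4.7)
The plan is to construct an explicit hard distribution on a symmetric simplex configuration. Let $d = 2kj$ and take $\mathcal{D}$ to be the uniform distribution on the standard basis vectors $e_1,\dots,e_{2kj}\in B_2^d$. I will use equal weights $1/(2kj)$; the overview hints at using unequal masses, but this does not appear to be necessary for the lower bound and the symmetric case is cleaner to analyse.

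The first two steps are purely algebraic. Using the identity $\|(I-UU^T)e_i\|^2 = 1 - \|U^T e_i\|^2$ together with $\sum_{i=1}^{2kj}\|U^T e_i\|^2 = \|U^T\|_F^2 = j$, any collection $\mathcal{U}$ of $k$ orthogonal projectors of rank $j$ satisfies
\[
\cost(\mathcal{D},\mathcal{U}) \;=\; \frac{1}{2kj}\sum_{i=1}^{2kj}\bigl(1 - \max_{U\in\mathcal{U}}\|U^T e_i\|^2\bigr) \;\geq\; \frac{1}{2kj}\bigl(2kj - kj\bigr) \;=\; \tfrac{1}{2},
\]
with equality attained by any axis-aligned choice; hence $OPT = 1/2$. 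Writing $n_i := |\{p\in P : p = e_i\}|$ and letting $T$ index the $kj$ largest counts, evaluating the empirical cost at the axis-aligned clustering that covers the $kj$ most frequent basis vectors gives the upper bound
\[
OPT_P \;\leq\; \frac{1}{n}\Bigl(n - \sum_{i\in T} n_i\Bigr).
\]

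The heart of the proof is then an anti-concentration estimate for the top-half multinomial order statistic sum:
\[
\mathbb{E}\Bigl[\sum_{i\in T} n_i\Bigr] \;\geq\; \frac{n}{2} + c\,\sqrt{nkj}
\]
for an absolute constant $c>0$, valid once $n \gtrsim kj$ (outside this regime the target bound is vacuous). Each $n_i$ is, as a marginal of the multinomial, approximately $\mathrm{Bin}(n, 1/(2kj))$ with mean $\mu = n/(2kj)$ and variance $\sigma^2 \asymp \mu$, so a Berry-Esseen or Paley-Zygmund estimate gives $\mathbb{E}[(n_i-\mu)^+] \geq c_1 \sigma$ coordinate-wise whenever $\mu \geq 1$, hence
\[
\mathbb{E}\Bigl[\sum_i (n_i-\mu)^+\Bigr] \;\geq\; c_1 \cdot 2kj \cdot \sqrt{n/(2kj)} \;=\; \Omega(\sqrt{nkj}).
\]
The main obstacle I foresee is passing from this sum of positive parts to the genuine top-$kj$ order statistic sum: these differ by the mismatch between $kj$ and the random count $N := |\{i: n_i > \mu\}|$. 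Since $\mathbb{E}|N-kj| = O(\sqrt{kj})$ and each borderline coordinate contributes only $O(\sigma)$, the mismatch is $O(\sqrt{kj}\cdot\sigma) = O(\sqrt{n})$ in expectation, which is dominated by the main $\Omega(\sqrt{nkj})$ term as soon as $kj$ exceeds a constant. A cleaner alternative is to \emph{Poissonise} the sample by replacing the fixed size $n$ with an independent $\mathrm{Poisson}(n)$; this makes the $n_i$ exactly i.i.d.\ Poisson variables and costs only a constant factor, simplifying both the anti-concentration and the order-statistic comparison. Either route yields $\mathbb{E}_P[OPT_P] \leq 1/2 - c'\sqrt{kj/n}$, and therefore $|\mathcal{E}_n(V_{j,2})| \geq c'\sqrt{kj/n}$ as required.
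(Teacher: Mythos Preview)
Your uniform-weight construction correctly shows that $\mathbb{E}_P[OPT_P]$ undershoots $OPT=\tfrac12$ by $\Omega(\sqrt{kj/n})$; the order-statistic step can in fact be made clean by observing that $2\sum_{i\in T}(n_i-\mu)=\sum_i|n_i-M|$ with $M$ the sample median of the counts, and then controlling $|M-\mu|$. But this is not the quantity the paper's lower bound is about. Despite the formula given in the preliminaries, the introduction, the upper-bound theorems, and the paper's own proof of Theorem~\ref{thm:lowerbound} all work with the standard generalization error $\mathbb{E}_P[\cost(\mathcal{D},\mathcal{U}_P)]-OPT$, where $\mathcal{U}_P$ is the empirical risk minimizer.

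Under your uniform distribution this quantity is identically zero. The empirical minimizer is, without loss of generality, the axis-aligned clustering covering the $kj$ basis vectors with the largest sample counts (your own relaxation argument shows this achieves $OPT_P$). By the very symmetry you exploit, \emph{every} axis-aligned clustering on $kj$ basis vectors attains $\cost(\mathcal{D},\cdot)=\tfrac12=OPT$. Hence $\mathcal{U}_P$ generalizes perfectly and no lower bound on the excess risk follows from your instance.

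The $\varepsilon$-perturbation you set aside is precisely the device that breaks this symmetry, and it is essential. With mass $p$ on a ``good'' half $G$ and $p(1-\varepsilon)$ on a ``bad'' half $B$, only the clustering covering $G$ is optimal for $\mathcal{D}$; each bad vector that the sample over-represents, and that the empirical minimizer therefore covers in place of a good one, contributes $p\varepsilon\asymp \varepsilon/(kj)$ to the true excess risk. The paper then uses a tightness-of-Chernoff estimate to show that, when $\varepsilon=\Theta(\sqrt{kj/n})$, each bad vector is over-represented with constant probability, giving $\mathbb{E}|B_{ex}|=\Omega(kj)$ and hence excess risk $\Omega(\sqrt{kj/n})$. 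Your anti-concentration machinery could be repurposed for this step, but the two-level weighting cannot be avoided.
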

\begin{proof}
We first describe the hard instance distribution $\mathcal{D}$. We assume that we are given $d = 2kj$ dimensions. Let $e_i$ be the standard unit vector along dimension $i$ with $i\in \{1,\ldots d\}$. Let $p,\varepsilon\in [0,1]$ be a parameters, where $\varepsilon$ is sufficiently small. We set the densities for a point $q$ as follows.
\begin{equation}
\label{eq:good_distribution}
\mathbb{P}[q] = \begin{cases}p & \text{if }  q= e_i, i \in \{1,\ldots ,k\cdot j\} \\
p-\varepsilon\cdot p & \text{if }q=e_i, i\in \{kj+1,\ldots ,d\} \\
0 & \text{otherwise}\end{cases}\end{equation}
We choose $p$ such that integral over densities is $1$, i.e. $kj\cdot p + kj\cdot (p-\varepsilon p) = 1$. It is straightforward to verify that for $\varepsilon$ sufficiently small, $p\in (\frac{1}{kj},\frac{2}{kj})$. We denote the points $\{e_1,\ldots e_{kj}\}$ by $G$ for "good" and the points $\{e_{kj+1},\ldots e_{d}\}$ by $B$ for "bad".

We now characterize the properties of the optimal solution as well as suboptimal solutions.



\begin{lemma}
    \label{cor:lowerboundprop}
Let $\mathcal{D}$ be the distribution described above in Equation (\ref{eq:good_distribution}). Then for any optimal solution $\mathcal{U}=\{U_1,\ldots U_k\}$, we have $e_i\in U_t$ for $i\in \{1,\ldots ,kj\}$ and some $t$ and $\opt = kj\cdot p\cdot (1-\varepsilon)$.
\end{lemma}
\begin{proof}
    We transform the instance into a $d\times d$ diagonal matrix $D$ where $D_{i,i} = \sqrt{\mathbb{P}[e_i]}$.
So $D$ is a $d\times d$ diagonal matrix with diagonal entries equal to $\sqrt{p}$ for the first $k \cdot j$ elements and $\sqrt{p- \varepsilon \cdot p}$ for elements from $ k\cdot j +1 $ to $d$. Now consider any partition of the points into clusters $C_t$ with the corresponding subspace $U_t$ for ($t \in \{1, \dots, k \}$ ). The optimal solution for $U_t$ is simply the right singular vector of the submatrix of $D$ corresponding to points in $C_t$, which by the construction of $D$ is the $j$ points with the largest weight. This means that each cluster can remove at most $\sum_{i=1}^j 1=j$ from the cost, so $k$ clusters can remove at most $\sum_{i=1}^k j$ from the cost.
This imples that the cost of the clustering is lower bounded by 
    $\sum_{i=1}^{d}D_{i,i}^2-\sum_{i=1}^{kj}D_{i,i}^2 = \sum_{i=kj+1}^{d}D_{i,i}^2$. Conversely, the solution $\mathcal{U}$ has exactly this cost, which implies that it must be optimal.
\end{proof}

Using Lemma \ref{cor:lowerboundprop}, we now have to, given $n$ independent samples from $\mathcal{D}$. Control the probability that the sample $P$ will (falsely) put a higher weight on some of the points in $B$ than the points in $G$. Let $B_{ex}$ denote the set of misclassified points in $B$ and let $P_{\opt}$ denote the optimum computed on the sample $P$. We have
$$\mathbb{E}[\cost(\mathcal{D},P_{\opt})] = kj\cdot p \cdot (1-\varepsilon) + p\cdot \varepsilon\cdot |B_{ex}|.$$
and hence an expected excess risk bound of
$$\mathbb{E}[\cost(\mathcal{D},P_{\opt})] - \opt =  p\cdot\varepsilon \cdot \mathbb{E}[B_{ex}].$$

By linearity of expectation, we have $\mathbb{E}[|B_{ex}|] = kj\cdot \mathbb{P}[e_{kj+1} \in B_{ex}].$ Thus, $\mathbb{E}[\cost(\mathcal{D},P_{\opt})] - \opt \in \Theta(1)  \varepsilon \cdot \mathbb{P}[e_{kj+1} \in B_{ex}].$ Define $G_{low}$ to be the set of points from $G$ that are have an empirical density of at most $p$. Let $\widehat{e_{kj+1}}$ denote the empirical density of $e_{kj+1}$. We now claim that 
\begin{eqnarray*}
\nonumber
 \mathbb{P}[e_{kj} \in B_{ex}] &\geq &  \mathbb{P}[\widehat{e_{kj+1}}  > p \wedge e_{kj+1} \in B_{ex}]  \\
 &= &\mathbb{P}[e_{kj+1} \in B_{ex} |\widehat{e_{kj+1}}  > p]  \cdot \mathbb{P}[\widehat{e_{kj+1}}  > p]  \geq  1/2\cdot \mathbb{P}[\widehat{e_{kj+1}} > p]   
\end{eqnarray*}
The first inequality follows because we are considering a subset of the possible events, the second inequality follows because the  number of points with an empirical estimated density greater than $p$ is negatively correlated with the empirical density $\widehat{e_{kj+1}}$ of the point $e_{kj}$. Specifically, conditioned on $\widehat{e_{kj+1}}>p$, the mean and median density of any point $e_i\in G$ is at most $\frac{1}{n}\cdot p \dot (n-p\cdot n)= p\cdot (1-p)< p$. Thus, the (marginal) mean and median density of any other point is below $p$ and therefore the probability that $e_{kj+1}$ will be in  $B_{ex}$ is at least $1/2$.

Thus, what remains to be shown is a bound on $\mathbb{P}[e_{kj} > p].$ Here, we use the tightness of the Chernoff bound (see Lemma 4 of \cite{KleinY15}).
\begin{lemma}[Tightness of the Chernoff Bound]
Let $X$ be the average of $n$ independent, $0/1$ random variables. For any $\varepsilon\in (0,1/2]$ and $\mu\in (0,1/2]$, assuming $\varepsilon^2\mu n\geq 3$ if each random variable is 1 with probability at least $\mu$, then
$$\mathbb{P} [X > (1+\varepsilon)p] > \exp(-9\varepsilon^2\mu n).$$
\end{lemma}
Thus, sampling $n$ elements, we have
\begin{align*}
\mathbb{P}\left[e_{kj} > p\right] &= \mathbb{P}\left[e_{kj} > \left(1+\frac{\varepsilon}{1-\varepsilon}\right)\cdot (1-\varepsilon)\cdot p\right] \\ 
& > \exp\left(-9 \frac{\varepsilon^2}{(1-\varepsilon)^2} (1-\varepsilon)p n\right) \in \Omega(1)\exp\left(-\frac{\varepsilon^2}{kj} n\right).    
\end{align*}

If we require $\mathbb{E}[\cost(\mathcal{D},P_{\opt})] - \opt =\varepsilon \cdot c $ for a sufficiently small absolute constant $c$, we also require $\mathbb{P}\left[e_{kj} > p\right] = c'$ and hence $\sqrt{\frac{kj}{n}}\leq \varepsilon\cdot c''$ for a sufficiently small absolute constants $c'$ and $c''$. Letting $\varepsilon\rightarrow 0$ then shows that the excess risk can asymptotically decrease no faster than $\Omega\left(\sqrt{\frac{kj}{n}}\right)$.
\end{proof}

\section{Experiments}
\label{sec:experiments}


Theoretical guarantees are often notoriously conservative compared to what is seen in practice. 
In this section, we present empirical findings detailing whether the risk bounds from the previous sections are also the risk bounds one can expect when dealing with real datasets.
Indeed, for the related question of computing coresets, experimetal work by \cite{SchwiegelshohnS22} seems to indicate that the worst case bounds by \cite{HLW22} are not what one has to expect in practise for center based clustering.
Generally, two properties can determine the risk decrease. First, the clusters may be well separated \cite{AngelidakisMM17,Cohen-AddadS17}. Indeed, making assumptions to this end, there is also some theoretical evidence that a rate of $O(k/n)$ is possible \cite{AGG05,L15}. The other, somewhat related explanation is that if the ground truth consists of $k'<k$ clusters \cite{BhattacharyyaKK22,OstrovskyRSS12}, the dependency on $k$ will point more towards the smaller, true number of clusters.
We run the experiments both for center based clustering, as well as subspace clustering. While the focus of the paper is arguably more on subspace clustering, the experiments are important in both cases. Although both problems are hard to optimize exactly, center based clustering is significantly more tractable and thus may lend better insight into practical learning rates. For example, we have an abundance of approximation algorithms for $(k,z)$ clustering \cite{4-ArV07,MettuP04} whereas, even in the case of $(k,1,z)$ clustering in two dimensions \cite{KumarAR00} it is not possible to find any finite approximation in polynomial time. 

In the main body, we focus on $(k,1,z)$ clustering, as there already exists a phase transition in terms of the computational complexity between the normal $k$-median and 
$k$-means problems and the $(k,1,1)$ and $(k,1,2)$ clustering objectives, while $j=1$ still admits more positive results than other subspace clustering problems \citet{AgarwalPV05,FeldmanFS06,FeldmanFSS07}.
\paragraph{Datasets}
We use four publicly available real-world datasets: Mushroom \cite{misc_mushroom_73}, Skin-Nonskin \cite{misc_skin_segmentation_229}, MNIST \cite{MNIST}, and Covtype \cite{misc_covertype_31}. Below we show the results on the Covtype dataset, and the remaining experiments are deferred to the appendix. Each dataset was normalized by the diameter, ensuring that all points lie in $B^d_2$. 
\begin{table}[H]
\caption{Datasets used for the experiments}
\centering
\begin{tabular}{llll}
\toprule
Dataset       & Points  & Dim & Labels \\ \midrule 
Mushrooms     & 8,124   & 112 & 2      \\ 
MNIST         & 60,000  & 784 & 10     \\ 
Skin\_Nonskin & 245,057 & 3   & 2      \\ 
Covtype       & 581,012 & 54  & 7      \\ \bottomrule 
\end{tabular}
\end{table}
Mushroom comprises of 112 categorical features of the appearance of mushrooms with class labels corresponding to poisonous or edible. MNIST contains 28x28 pixel images of handwritten digits. Skin\_Nonskin are RGB values given as 3 numerical features used to predict if a pixel is skin or not. Lastly, Covtype consists of a mix of categorical and numerical features used to predict seven different cover types of forests. In the main body, we focus on Covtype because of its large number of points.

\paragraph{Problem parameters and algorithms}
 For both center based clustering as well as subspace clustering, we focus on the powers $z\in\{1,2,3,4\}$. $z=2$ is arguably the most popular and also the most tractable variant. $z=1$ is the objective with the least susceptibility to outliers. Finally, we consider the cases $z=3$, due to it minimizing asymmetry and $z=4$ as a tractable alternative to the coverage objective $z\rightarrow \infty$. The excess risk is evaluated for $k\in \{10,20,30,50\}$ for both center based and subspace clustering. Expectation maximization (EM) type algorithms are used for both center-based and subspace clustering, though this is a severe computational challenge fo $(1,j,z)$ clustering, if $z\neq 2$, see \cite{ClarkW15,DeshpandeTV11}. Given a solution $\mathcal{S}$ we first assign every point to its closest center and subsequently recompute the center. 

\paragraph{Center based clustering}
For each experiment, we use an expectation maximization (EM) type algorithm. Given a solution $\mathcal{S}$, we first assign every point to its closest center and subsequently, we recompute the center. 
For the case $z=2$, we do this analytically and in this case the EM algorithm is more commonly known as Llyod's method \cite{Lloyd82}.
For the cases, $z\in \{1,3,4\}$, the new center is obtained via gradient descent. The initial centers are chosen via $D^z$ sampling, i.e. sampling centers proportionate to the $z$th power of the distance between a point and its closest center (for $z=2$ this is the $k$-means++ algorithm by \cite{4-ArV07}). 

We wrote all of the code using Python 3 and utilized the Pytorch library for implementations using gradient descent. Specifically, we employed the AdamW optimizer to find the closest center with a learning rate set to $0.01$.
All experiments were conducted on a machine equipped with a single NVIDIA RTX 2080 GPU.

\paragraph{Subspace Clustering}
For subspace clustering, we consider $j\in \{1,2,5\}$ to demonstrate the effects of the subspace dimension on convergence rate, taking computational expenses into consideration.
Since there are no known tractable algorithms for these problems with guarantees, we initialize a solution $\mathcal{U}=\{U_1,\ldots,U_k\}$ by sampling $k$ orthogonal matrices of rank $j$, where the subspace for each matrix is determined via the volume sampling technique \cite{DeshpandeV07}.
Subsequently, we run the EM algorithm. As before, the expectation step consists of finding the closest subspace for every point. For $z=2$, the maximization step consists of finding the $j$ principal component vectors of the data matrix induced by each cluster. For the other values of $z$, it is NP-hard even approximate the maximization step \cite{ClarkW15}, so we use gradient descent to find a local optimum. Due to the fact that Skin\_nonskin only has 3 features, we only evaluate the excess risk for $j\in \{1,2\}$. Due to a large computational dependency on dimension, we do not evaluate subspaces on the MNIST dataset.

\begin{figure*}
    \centering
    \begin{subfigure}
        \centering
        \includegraphics[width=0.44\textwidth]{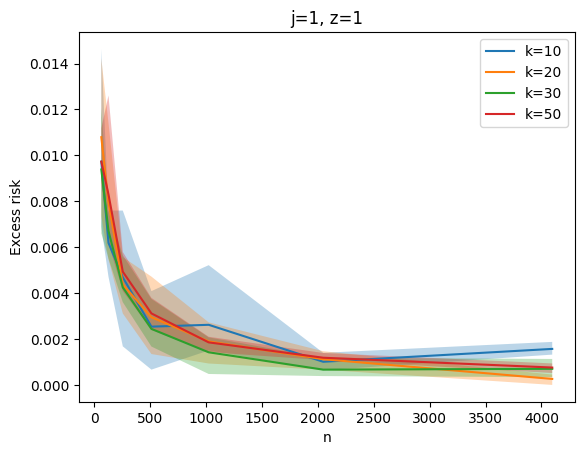}
    \end{subfigure}
    \begin{subfigure}
        \centering
        \includegraphics[width=0.44\textwidth]{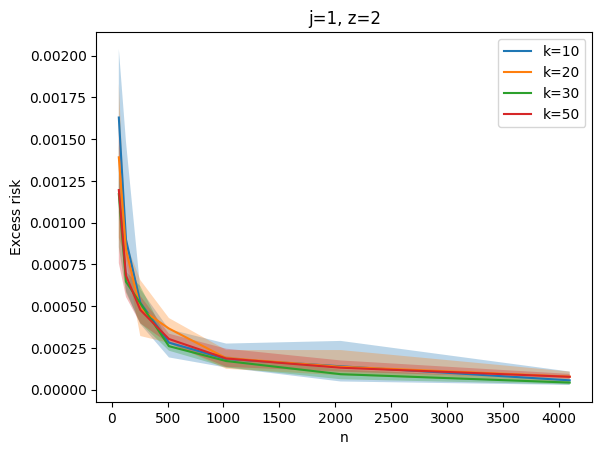}
    \end{subfigure}
    \begin{subfigure}
        \centering
        \includegraphics[width=0.44\textwidth]{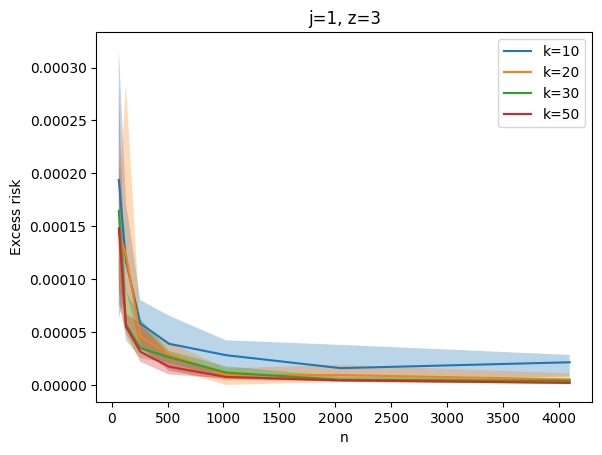}
    \end{subfigure}
    \begin{subfigure}
        \centering
        \includegraphics[width=0.44\textwidth]{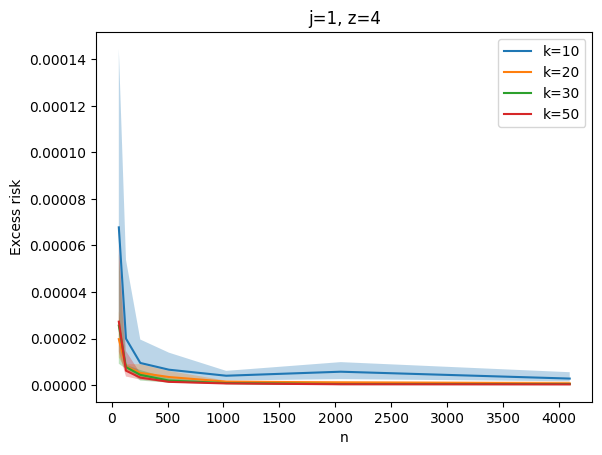}
    \end{subfigure}
\caption{Excess risk for line clustering on Covtyp. Shaded areas show max-min intervals.}
\label{excess_risk_plot_cov_subspace_j1}
\end{figure*}

\paragraph{Experimental setup and results}
To estimate the optimal cost $OPT$  for the two objective functions, we run the corresponding appropriate algorithms mentioned above ten times on the entire  dataset $P$ and use the minimal objective value as an estimate for $OPT$.
We obtain a sample $S_i$ of size $n$ by sampling uniformly at random and estimate the optimal cost for that sample, $OPT_i$.
We repeat this 5 times. 
The empirical excess risk is  calculated as $\mathcal{E}_n=\frac{1}{|P|}\sum_{i=1}^5 \frac{\cost(P,OPT_i)}{5}-OPT.$
 The excess risk for center-based clustering is evaluated on exponential-sized subset sizes $n\in \{2^6,2^7,\dots,2^{12}\}$. 
%
We fit a line of the form $c\cdot \frac{k^{q_1}}{n^{q_2}}$ where $c,q_1,q_2$ are the optimizeable parameters.  Let $y_i$ be the excess risk in run $i$. Let $k_i$ and $n_i$ be the values of $k$ and $n$ in run $i$ and let \textit{r} be the total number of times the excess risk was evaluated for each combination of algorithm and dataset. We use gradient descent on the following loss to optimize the parameters $LSE=\sum_{i=1}^{r}\left(y_i-c\cdot\frac{k^{q_1}}{n^{q_2}}\right)^2.$

The results in Figure \ref{excess_risk_plot_cov_subspace_j1} show that the excess risk for subspace clustering decreases quicker for higher values of $z$, and we see a similar pattern for center-based clustering. The appendix contains more plots on the empirical evaluations of center-based clustering. 
The best-fit lines shown in Tables  \ref{best_fit_center_based} and \ref{best_fit_center_based_2} in the appendix indicate that the empirical excess risk values decrease slightly quicker than predicated by theory. The expected values are $q_1=q_2=0.5$ and we observe $q_1,q_2$ around $0.44, 0.52$ respectively. For $k$ this indicates a slightly favorable dependency in practice. For $q_2$, we consider the difference to the theoretical bound of $0.5$ negligible. The choice of $z$ does not seem to have a significant impact on either finding. For subspace clustering, the dependency on $k$ is a bit more pronounced and increases slightly towards the theoretical guarantees. 
%
Contrary to hopes that margin or stability conditions might occur on practical datasets, the results indicate that the theoretical guarantees of the learning rate are near-optimal even in practice. Moreover, the rates were not particularly affected by either the choice of $z$ or by the dimension $j$ when analyzing subspace clustering. 

\section{Conclusion and open problems}
In this paper, we presented several new generalization bounds for clustering objectives such as $k$-median and subspace clustering. When the centers are points or constant dimensional subspaces, our upper bounds are optimal up to logarithmic terms. For projective clustering, we give a lower bound showing that the results obtained by \cite{FMN16} are nearly optimal. A key novel technique was using an ensemble of dimension reduction methods with very strong guarantees.

An immediate open question is to which degree ensembles of dimension reductions can improve learning rates over a single dimension reduction. Is it possible to find natural problems where there is a separation between the embeddability and the learnablity of a class of problems, or given the ensemble, is it always possible to find a single dimension reduction with the guarantees of the ensemble?
Another open question is motivated by the recent treatment of clustering through the lens of computational social choice \cite{Chierichetti0LV17}. Using current techniques from coresets \cite{BravermanCJKST022} and learning theory \cite{FR19}, it seems difficult to improve over the learning rate of $O\left(\sqrt{\nicefrac{k^2}{n}}\right)$ for the fair clustering problem specifically. It it possible to match the bounds for unconstrained clustering?

\section{Disclosure of Funding Acknowledgements}

Maria Sofia Bucarelli was partially supported by projects FAIR (PE0000013) and SERICS (PE00000014) under the MUR National Recovery and Resilience Plan funded by the European Union - NextGenerationEU. Supported also by the ERC Advanced Grant 788893 AMDROMA,  EC H2020RIA project “SoBigData++” (871042), PNRR MUR project  IR0000013-SoBigData.it.

Chris Schwiegelshohn was supported by the Independent Research Fund Denmark (DFF) under a Sapere Aude Research Leader grant No 1051-00106B.

\bibliography{references}
\bibliographystyle{abbrvnat}

\appendix
\onecolumn

\section{Proof of Lemma \ref{lem:Gauss}}

In this section, we include the proof of Lemma \ref{lem:Gauss} and some preliminary facts that will be useful for the proof.

Let $r$ be a Rademacher vector, i.e. every entry $r_i$ is sampled independently uniformly from $\{-1,1\}$. Further, we say that $g$ is a Gaussian vector if every entry $g_i$ is a standard Gaussian with mean $0$ and variance $1$.
We have the following useful properties of Gaussians. 
\begin{fact}[Appendix B.1 by \cite{RudraW14}]
\label{fact:Gauss}
    Let $g_1,\ldots g_n$ be Gaussians with means $\mu_i$ and variances $\sigma_i^2$.
    \begin{itemize}
        \item If $\sigma_i^2 \leq \sigma^2$ for all $i$, then $\mathbb{E}[\max_{g_i} |g_i|] \leq 2\sigma\sqrt{2\log n}.$
        \item If the Gaussians are independent, then $\sum_{i=1}^n a_i g_i$ is Gaussian distributed with mean $\sum_{i=1}^n a_i \mu_i$ and variance $\sum_{i=1}^n a_i^2 \sigma_i^2$.
        \item If the $g_i$ are independent standard Gaussians with mean $0$ and variance $1$, then $Y:=\sum_{i=1}^n g_i^2$ is Chi-squared distributed with mean $\mathbb{E}[\sqrt{Y}] \in O(\sqrt{n})$.
    \end{itemize}
\end{fact}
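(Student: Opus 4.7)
The three claims are classical facts about Gaussians, so my proposal is to give the standard textbook arguments, one paragraph per item.

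For the first claim (max of (sub)Gaussians), I would use the moment generating function plus a Chernoff-style exponentiation trick. Assume zero means (the bound is only meaningful in that regime, since the first part cites $\mathbb{E}[\max|g_i|]$ only in terms of $\sigma$). For any $\lambda>0$, monotonicity and Jensen give
\begin{equation*}
e^{\lambda\mathbb{E}[\max_i |g_i|]}\;\le\;\mathbb{E}\bigl[e^{\lambda\max_i|g_i|}\bigr]\;=\;\mathbb{E}\bigl[\max_i e^{\lambda|g_i|}\bigr]\;\le\;\sum_{i=1}^n \mathbb{E}\bigl[e^{\lambda|g_i|}\bigr].
\end{equation*}
Using $e^{\lambda|g_i|}\le e^{\lambda g_i}+e^{-\lambda g_i}$ and the Gaussian MGF, $\mathbb{E}[e^{\lambda|g_i|}]\le 2e^{\lambda^2\sigma_i^2/2}\le 2 e^{\lambda^2\sigma^2/2}$. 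Taking logs and dividing by $\lambda$ yields $\mathbb{E}[\max_i|g_i|]\le \lambda^{-1}\log(2n)+\lambda\sigma^2/2$, and optimising $\lambda=\sigma^{-1}\sqrt{2\log(2n)}$ gives $\sigma\sqrt{2\log(2n)}\le 2\sigma\sqrt{2\log n}$ (for $n\ge 2$; the case $n=1$ is trivial).

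For the second claim (linear combinations are Gaussian) I would verify it via characteristic functions, which is cleanest for independent sums. Each $g_i$ has characteristic function $\phi_i(t)=\exp(it\mu_i-t^2\sigma_i^2/2)$, so by independence
\begin{equation*}
\mathbb{E}\!\left[\exp\!\Bigl(it\textstyle\sum_i a_i g_i\Bigr)\right]=\prod_{i=1}^n \phi_i(a_i t)=\exp\!\Bigl(it\textstyle\sum_i a_i\mu_i-\tfrac{t^2}{2}\sum_i a_i^2\sigma_i^2\Bigr),
\end{equation*}
which is the characteristic function of a Gaussian with mean $\sum_i a_i\mu_i$ and variance $\sum_i a_i^2\sigma_i^2$. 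Uniqueness of characteristic functions concludes the argument.

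For the third claim (chi-squared tail), I would simply apply Jensen's inequality to the concave map $y\mapsto\sqrt{y}$: $\mathbb{E}[\sqrt{Y}]\le\sqrt{\mathbb{E}[Y]}=\sqrt{n}$, since $\mathbb{E}[g_i^2]=1$. The reverse inequality needed for the tight $\Theta(\sqrt{n})$ statement isn't required here — the fact only claims $O(\sqrt{n})$.

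The only even mildly subtle point is tracking the factor of two for switching from $\max g_i$ to $\max|g_i|$ in the first bullet; everything else is a standard MGF/characteristic-function/Jensen calculation with no real obstacle.
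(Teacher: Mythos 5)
The paper does not prove this Fact at all---it is cited directly from Appendix~B.1 of Rudra and Wootters---so there is no in-paper argument to compare against; your three arguments are the standard textbook ones (MGF plus Chernoff-style optimisation for the expected maximum, characteristic functions for closure under linear combinations, and Jensen for $\mathbb{E}[\sqrt{Y}]\le\sqrt{\mathbb{E}[Y]}=\sqrt{n}$) and they are correct, including the correct observation that the first bullet only makes sense for centered Gaussians. The one small inaccuracy is the remark that the case $n=1$ is ``trivial'': for $n=1$ the claimed bound reads $\mathbb{E}[|g_1|]\le 0$, which is false for $\sigma>0$, but this is a defect of the Fact as stated (which implicitly assumes $n\ge 2$) rather than of your argument.
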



We are now ready to prove the Lemma \ref{lem:Gauss}.
The proof of Lemma is  similar to arguments used to prove Dudley's theorem.
We also write here the statement of the Lemma for the sake of completeness
\begin{lemma*} [Lemma \ref{lem:Gauss}] 
Let $\mathcal{D}$ be a distribution over $B_2^d$ and let $P$ be a set of $n$ points sampled from $\mathcal{D}$. Suppose that for a set of $n$-dimensional vectors $V$, we have absolute constants $C,\gamma>0$ such that
\begin{align} 
\label{eq:net_size_hypo_appendix} 
\log |\mathcal{N}(V,\|.\|_{\infty},\varepsilon)| \in  O(\varepsilon^{-2} \log^{\gamma}(n\cdot \varepsilon^{-1}) \cdot C).\end{align} 
\textrm{ Then } \begin{align*}  G_n(V) \in O\left(\sqrt{\frac{C\log^{\gamma+2}n}{n}}\right).\end{align*}
\end{lemma*}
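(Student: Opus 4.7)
The plan is to establish the bound via a Dudley-style chaining argument along a dyadic sequence of nets at progressively finer scales. Concretely, for $h = 0, 1, \ldots, H$ with $H = \lceil \log_2 n \rceil$, let $N_h := \mathcal{N}(V, \|\cdot\|_\infty, 2^{-h})$, and take $N_0 = \{0\}$ (which is legitimate since every $v \in V$ has entries in a bounded range given $P \subset B_2^d$). For each $v \in V$, pick a representative $\pi_h(v) \in N_h$ with $\|v - \pi_h(v)\|_\infty \leq 2^{-h}$. I would then write the telescoping identity
\[
v \;=\; \sum_{h=0}^{H-1} \bigl( \pi_{h+1}(v) - \pi_h(v) \bigr) \;+\; \bigl( v - \pi_H(v) \bigr),
\]
so that, by linearity of the inner product with a Gaussian vector $g$ and the triangle inequality,
\[
\mathbb{E}_g \sup_{v \in V} \langle v, g \rangle \;\leq\; \sum_{h=0}^{H-1} \mathbb{E}_g \sup_{v \in V} \bigl\langle \pi_{h+1}(v) - \pi_h(v),\, g \bigr\rangle \;+\; \mathbb{E}_g \sup_{v \in V} \bigl\langle v - \pi_H(v),\, g \bigr\rangle.
\]

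For the $h$-th chaining term, the difference $\pi_{h+1}(v) - \pi_h(v)$ ranges over a set of at most $|N_h| \cdot |N_{h+1}|$ vectors, each with $\|\cdot\|_\infty \leq 2^{-h} + 2^{-(h+1)} \leq 3 \cdot 2^{-(h+1)}$, and hence $\|\cdot\|_2 \leq 3 \cdot 2^{-(h+1)} \sqrt{n}$. Each $\langle \pi_{h+1}(v) - \pi_h(v), g \rangle$ is therefore a centered Gaussian with variance $\sigma_h^2 \leq 9 \cdot 2^{-2(h+1)} n$. By the first bullet of Fact \ref{fact:Gauss} together with the hypothesis on net sizes (and noting $\log(n \cdot 2^{h+1}) \leq 2 \log n$ for $h \leq H$),
\[
\mathbb{E}_g \sup \bigl\langle \pi_{h+1}(v) - \pi_h(v),\, g\bigr\rangle \;\leq\; 2 \sigma_h \sqrt{2 \log(|N_h| \cdot |N_{h+1}|)} \;\in\; O\!\left( 2^{-h}\sqrt{n} \cdot \sqrt{2^{2h} \log^\gamma n \cdot C}\right) \;=\; O\!\left(\sqrt{n \log^\gamma n \cdot C}\right).
\]
Critically, the $2^{-h}$ factor from the shrinking scale exactly cancels the growth $2^{h}$ coming from the net-size exponent, so every level contributes the same amount; summing over the $H = O(\log n)$ levels yields $O\!\left(\sqrt{n \log^{\gamma+2} n \cdot C}\right)$.

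For the residual, $\|v - \pi_H(v)\|_\infty \leq 2^{-H} \leq 1/n$, so $\|v - \pi_H(v)\|_2 \leq 1/\sqrt{n}$ and Cauchy–Schwarz together with $\mathbb{E}\|g\|_2 = O(\sqrt{n})$ (third bullet of Fact \ref{fact:Gauss}) gives $\mathbb{E}_g \sup_v \langle v - \pi_H(v), g\rangle = O(1)$, which is absorbed into the leading term. Dividing through by $n$ gives the claimed bound $G_n(V) \in O(\sqrt{C \log^{\gamma+2}n / n})$.

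The main obstacle I anticipate is bookkeeping rather than a conceptual hurdle: one must carefully check that the hypothesis $\log|\mathcal{N}(V, \|\cdot\|_\infty, \varepsilon)| \in O(\varepsilon^{-2} \log^\gamma(n\varepsilon^{-1}) C)$, evaluated at $\varepsilon = 2^{-(h+1)}$ so that $\varepsilon^{-1} = 2^{h+1} \leq 2n$, indeed produces a $\log^\gamma n$ factor (not a worse $\log^\gamma(2n \cdot 2^h)$ that could spoil the telescoping) and that $N_0 = \{0\}$ is an admissible starting point given the bounded range of cost coordinates on $B_2^d$. Once these are handled, the rest is the standard chaining computation above.
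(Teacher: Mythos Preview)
Your proposal is correct and follows essentially the same argument as the paper's proof: a dyadic chaining decomposition with $v^0 = 0$, the same Gaussian maximal inequality at each level (with the $2^{-h}$ scale cancelling the $2^{h}$ growth in the net-size exponent), the same Cauchy--Schwarz treatment of the residual beyond level $\log n$, and division by $n$ at the end. The only differences are cosmetic (your constants and your explicit discussion of $N_0 = \{0\}$ and the $\log^\gamma(n\varepsilon^{-1})$ bookkeeping), so there is nothing substantive to add.
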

\begin{proof}
For ease of notation, we use solutions $\calS$ induced by points, but the proof carries over without any modifications other than changing the notation to collections of subspaces $\mathcal{U}$.

Consider an arbitrary cost vector $v^{\calS}$. We write $v^{\calS}$ as a telescoping sum
$$v^{\calS} := \sum_{h=0}^{\infty} v^{h+1,\calS} - v^{h,\calS}$$
where $v^0 = 0$ and $v^{i,\calS}$ is a vector from $\mathcal{N}(V,\|.\|_{\infty},2^{-i})$ approximating $v^{\calS}$.
Observe that
\begin{equation}
\label{eq_Gauss} \|v^{h+1,\calS} - v^{h,\calS}\|_{\infty} \leq \| v^{h+1,\calS} - v^{\calS} +v^{\calS} - v^{h,\calS}\|_{\infty} \leq 2\cdot 2^{-h}
\end{equation}
due to the triangle inequality.
Thus we have
\begin{eqnarray}
\nonumber
  n\cdot   G_n(V) &=& \mathbb{E}_{P,g} \left[ \sup_{\calS} (v^{\calS})^Tg\right] = \mathbb{E}_{P,g} \left[ \sup_{\calS} \sum_{h=0}^{\infty} (v^{h+1,\calS}-v^{h,\calS})^Tg\right]  \\ \nonumber     &\leq& \mathbb{E}_{P,g} \sum_{h=0}^{\infty} \left[ \sup_{\calS}  (v^{h+1,\calS}-v^{h,\calS})^Tg\right] \\
\nonumber    
    & = &\mathbb{E}_{P,g} \sum_{h=0}^{\infty} \left[ \sup_{\substack{v^{h+1,\calS},v^{h,\calS} \in\\ \mathcal{N}(V,\|.\|_{\infty},2^{-(h+1)})\times \mathcal{N}(V,\|.\|_{\infty},2^{-h})}}  (v^{h+1,\calS}-v^{h,\calS})^Tg\right]\\
    \nonumber
    & = &\mathbb{E}_{P,g} \sum_{h=0}^{\log n} \left[ \sup_{\substack{v^{h+1,\calS},v^{h,\calS} \in\\ \mathcal{N}(V,\|.\|_{\infty},2^{-(h+1)})\times \mathcal{N}(V,\|.\|_{\infty},2^{-h})}}  (v^{h+1,\calS}-v^{h,\calS})^Tg\right] \\
\nonumber    
    &  & + \mathbb{E}_{P,g} \sum_{h=\log n}^{\infty} \left[ \sup_{\substack{v^{h+1,\calS},v^{h,\calS} \in\\ \mathcal{N}(V,\|.\|_{\infty},2^{-(h+1)})\times \mathcal{N}(V,\|.\|_{\infty},2^{-h})}}  (v^{h+1,\calS}-v^{h,\calS})^Tg\right] 
\label{eq:Gauss1}
    \end{eqnarray}
    \begin{eqnarray}
    & = &\mathbb{E}_{P,g} \sum_{h=0}^{\log n} \left[ \sup_{\substack{v^{h+1,\calS},v^{h,\calS} \in\\ \mathcal{N}(V,\|.\|_{\infty},2^{-(h+1)})\times \mathcal{N}(V,\|.\|_{\infty},2^{-h})}}  (v^{h+1,\calS}-v^{h,\calS})^Tg\right] \\
\label{eq:Gauss2}
    &  & + \mathbb{E}_{P,g} \left[ \sup_{\calS}  (v^{\calS}-v^{\log n,\calS})^Tg\right] 
\end{eqnarray}
We bound the terms \ref{eq:Gauss1} and \ref{eq:Gauss2} differently, starting with the latter.

For every  $\cal{S}$
\[ 
(v^{\calS}-v^{\log n,\calS})^Tg \leq 
\|v^{\calS}-v^{\log n,\calS}\|_2 \cdot \mathbb{E}[\|g\|_2], \]
due to the Cauchy Schwarz inequality.
Further, 
\[ \|v^{\calS}-v^{\log n,\calS}\|_2 \leq \sqrt{n}\cdot \|v^{\calS}-v^{\log n,\calS}\|_{\infty} \leq \sqrt{n}\cdot 2^{-\log n} = \sqrt{\frac{1}{n}},\]
which, combined with the third item in Fact \ref{fact:Gauss}  yields
\begin{equation}
\label{eq:term2}
  \mathbb{E}_{P,g}  \left[ \sup_{\calS}  (v^{\calS}-v^{\log n,\calS})^Tg\right]   \in O\left(\sqrt{\frac{1}{n}}\cdot \sqrt{n}\right) = O(1).  
\end{equation}

We now consider the term \ref{eq:Gauss1}. Due to the second item of Fact \ref{fact:Gauss}, $(v^{h+1,\calS}-v^{h,\calS})^Tg$ is Gaussian distributed with mean $0$ and variance
$$\sum_{i=1}^n (v^{h+1,\calS}-v^{h,\calS})_i^2 \leq 4n\cdot 2^{-2h}.$$
Thus, we have, using the first item in Fact \ref{fact:Gauss}
\begin{eqnarray}
\nonumber
&  &\mathbb{E}_{P,g} \sum_{h=0}^{\log n} \left[ \sup_{\substack{v^{h+1,\calS},v^{h,\calS} \in\\ \mathcal{N}(V,\|.\|_{\infty},2^{-(h+1)})\times \mathcal{N}(V,\|.\|_{\infty},2^{-h})}}  (v^{h+1,\calS}-v^{h,\calS})^Tg\right] \\
\nonumber
&\leq & \sum_{h=0}^{\log n} \sqrt{32n\cdot 2^{-2h} \log \left\vert \mathcal{N}(V,\|.\|_{\infty},2^{-(h+1)})\times \mathcal{N}(V,\|.\|_{\infty},2^{-h}) \right\vert } \\
\end{eqnarray}

Now using equation (\ref{eq:net_size_hypo_appendix}) we obtain that,  $$32 n\cdot 2^{-2h} \log \left\vert \mathcal{N}(V,\|.\|_{\infty},2^{-(h+1)})\times \mathcal{N}(V,\|\dot\|_{\infty},2^{-h}) \right\vert  \in O(n\cdot \log^{\gamma} n) $$
\\So we have that 
\begin{equation}
\label{eq:term1}
\sum_{h=0}^{\log n} \sqrt{32 n\cdot 2^{-2h} \log \left\vert \mathcal{N}(V,\|.\|_{\infty},2^{-(h+1)})\times \mathcal{N}(V,\|.\|_{\infty},2^{-h}) \right\vert } \in O (\sqrt{n\cdot \log^{\gamma+2} n}) 
\end{equation}
\\Adding the bounds (\ref{eq:term1}) and (\ref{eq:term2}) for Terms (\ref{eq:Gauss2}) and (\ref{eq:Gauss1}), respectively yields the claim.
\end{proof}

\section{Plots for the Experiments (Section \ref{sec:experiments})}

In this section, we provide plots of the excess risk and the found parameters of the best-fit lines for each of the datasets.
\begin{figure*}[!ht]
    \centering
    \begin{subfigure}
        \centering
        \includegraphics[width=0.35\textwidth]{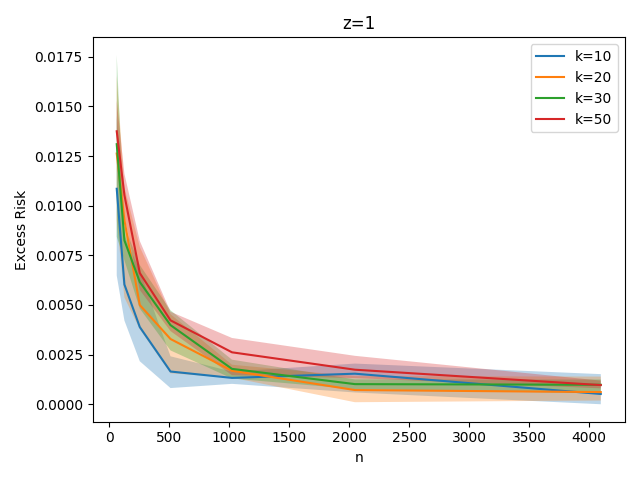}
    \end{subfigure}
    \begin{subfigure}
        \centering
        \includegraphics[width=0.35\textwidth]{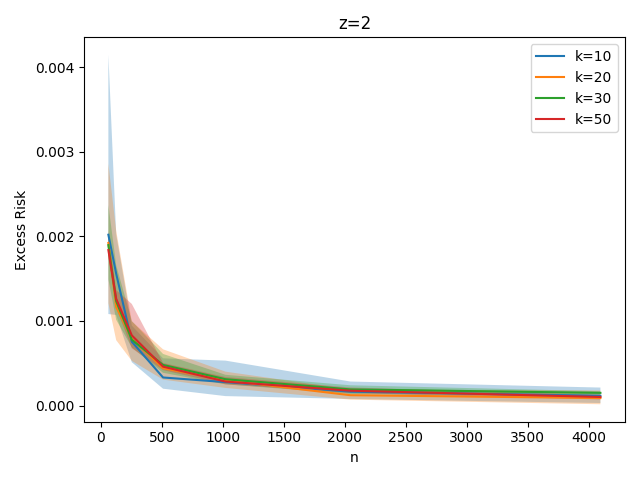}
    \end{subfigure}
    \begin{subfigure}
    \centering
\includegraphics[width=0.35\textwidth]{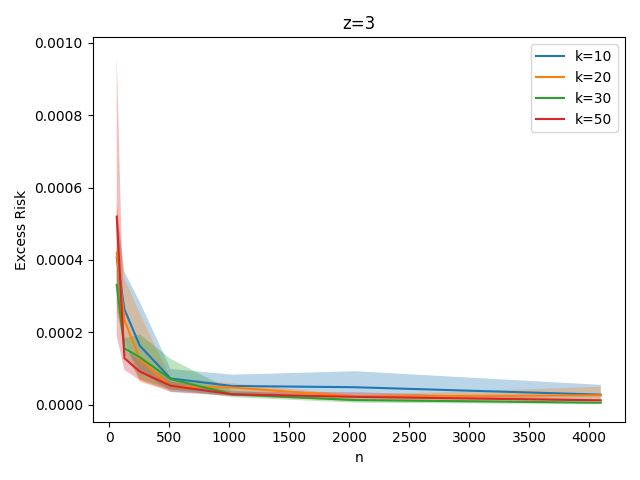}
    \end{subfigure}
    \begin{subfigure}
    \centering
\includegraphics[width=0.35\textwidth]{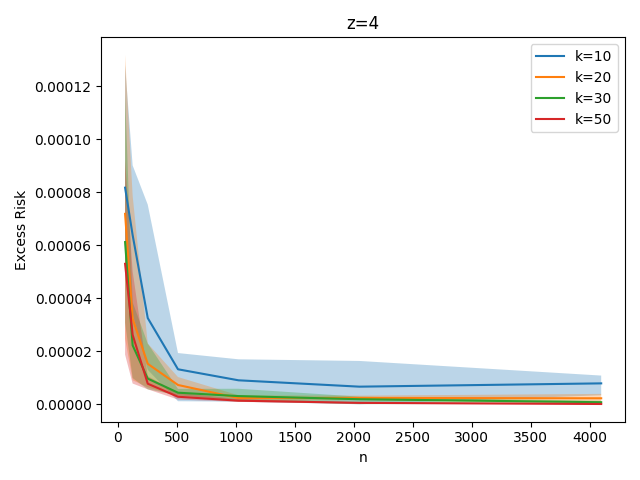}
    \end{subfigure}
\caption{Excess risk for center-based clustering on the Covertype dataset. The shaded areas indicate the maximal and minimal deviation for the respective sample sizes.}
\label{Excess risk center based}
\end{figure*}

\begin{figure*}[ht]
    \centering
    \begin{subfigure}
        \centering
        \includegraphics[width=0.35\textwidth]{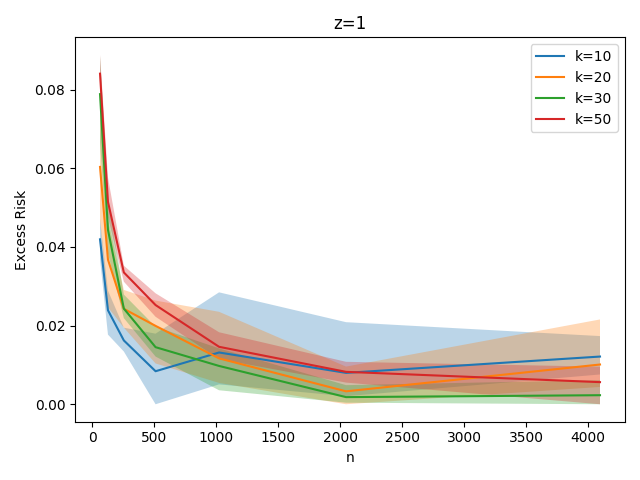}
    \end{subfigure}
    \begin{subfigure}
        \centering
        \includegraphics[width=0.35\textwidth]{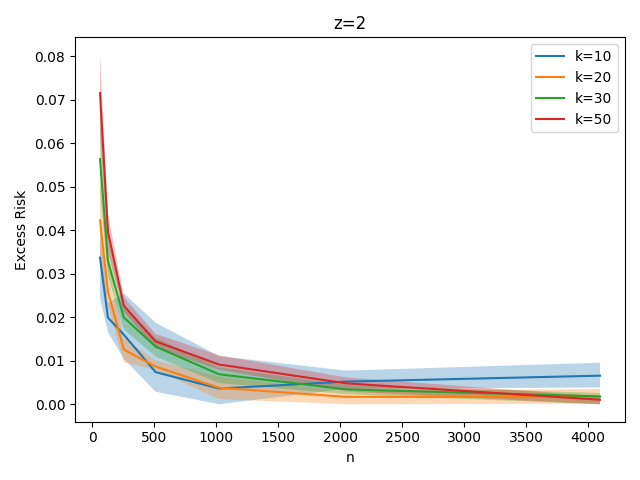}
    \end{subfigure}
    \begin{subfigure}
    \centering
\includegraphics[width=0.35\textwidth]{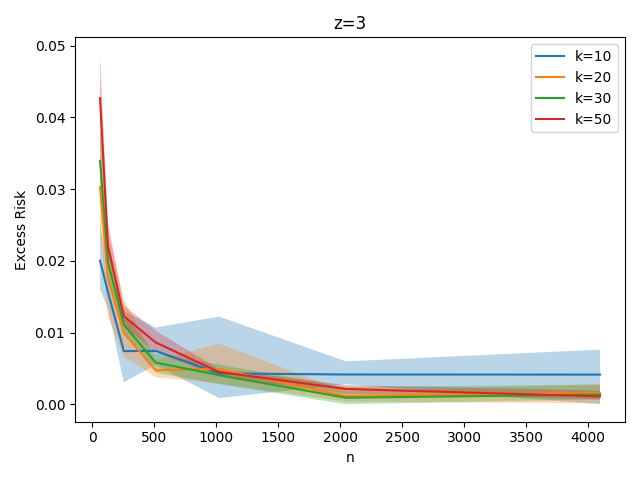}
    \end{subfigure}
    \begin{subfigure}
    \centering
\includegraphics[width=0.35\textwidth]{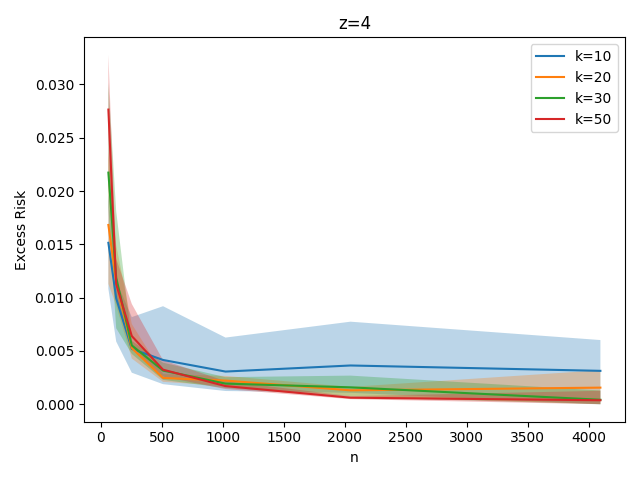}
    \end{subfigure}
\caption{Excess risk for center-based clustering on the Mushroom dataset. The shaded areas indicate the maximal and minimal deviation for the respective sample sizes.}
\end{figure*}

\begin{figure*}[ht!]
    \centering
    \begin{subfigure}
        \centering
        \includegraphics[width=0.35\textwidth]{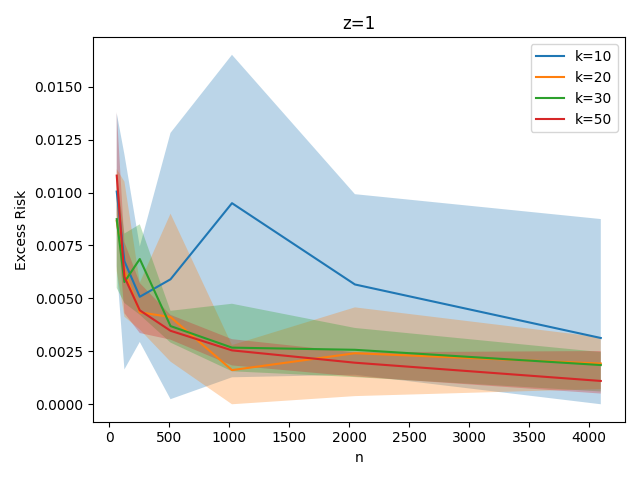}
    \end{subfigure}
    \begin{subfigure}
        \centering
        \includegraphics[width=0.35\textwidth]{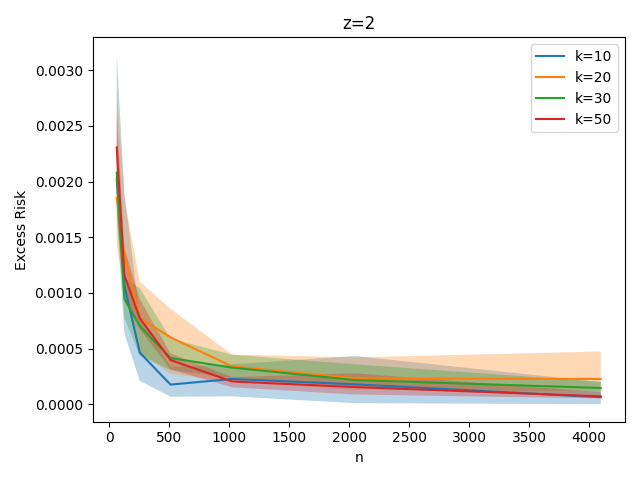}
    \end{subfigure}
    \begin{subfigure}
    \centering
\includegraphics[width=0.35\textwidth]{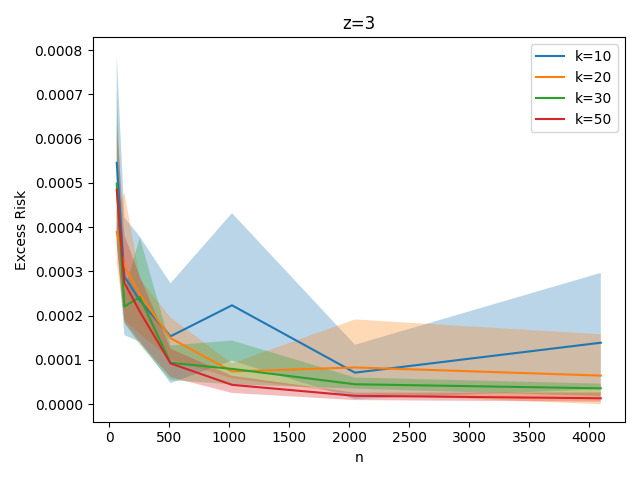}
    \end{subfigure}
    \begin{subfigure}
    \centering
\includegraphics[width=0.35\textwidth]{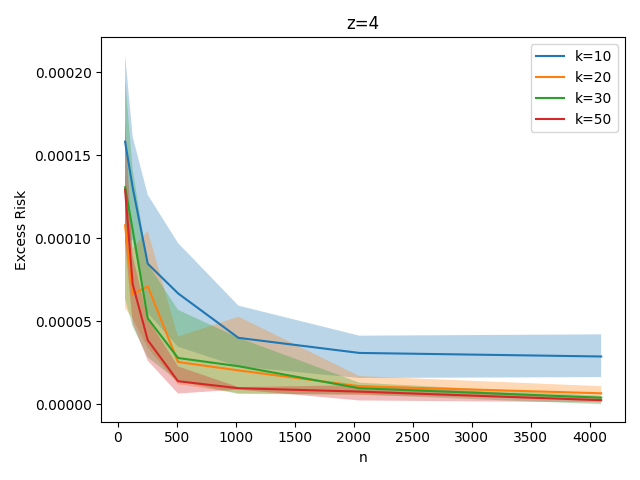}
    \end{subfigure}
\caption{Excess risk for center-based clustering on the Skin\_Nonskin dataset. The shaded areas indicate the maximal and minimal deviation for the respective sample sizes.}
\end{figure*}

\begin{figure*}[ht!]
    \centering
    \begin{subfigure}
        \centering
        \includegraphics[width=0.35\textwidth]{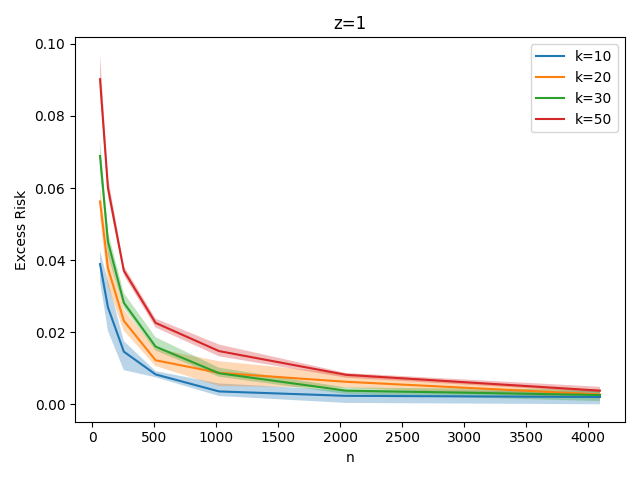}
    \end{subfigure}
    \begin{subfigure}
        \centering
        \includegraphics[width=0.35\textwidth]{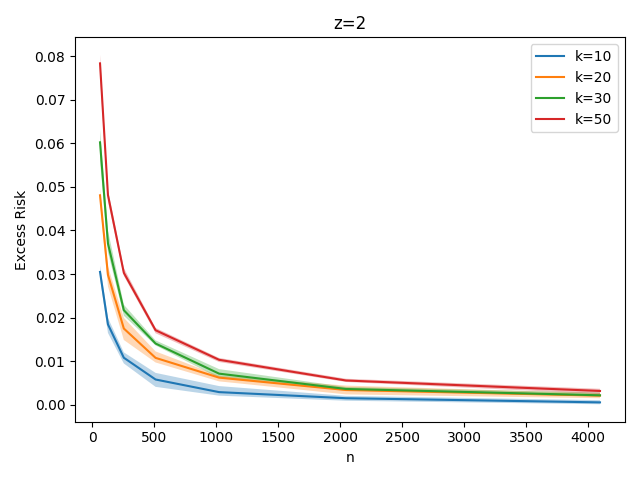}
    \end{subfigure}
    \begin{subfigure}
    \centering
\includegraphics[width=0.35\textwidth]{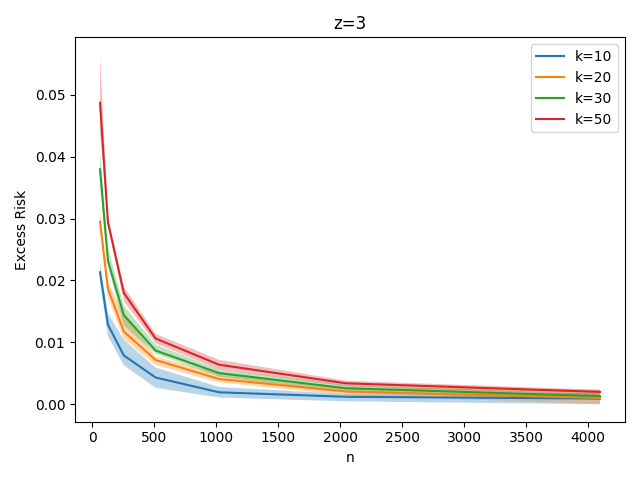}
    \end{subfigure}
    \begin{subfigure}
    \centering
\includegraphics[width=0.35\textwidth]{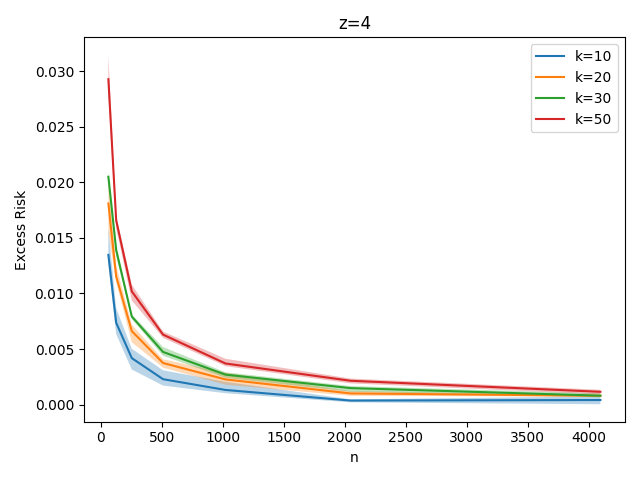}
    \end{subfigure}
\caption{Excess risk for center-based clustering on the MNIST dataset. The shaded areas indicate the maximal and minimal deviation for the respective sample sizes.}
\end{figure*}

\begin{figure*}[ht!]
    \centering
    \begin{subfigure}
    \centering
\includegraphics[width=0.30\textwidth]{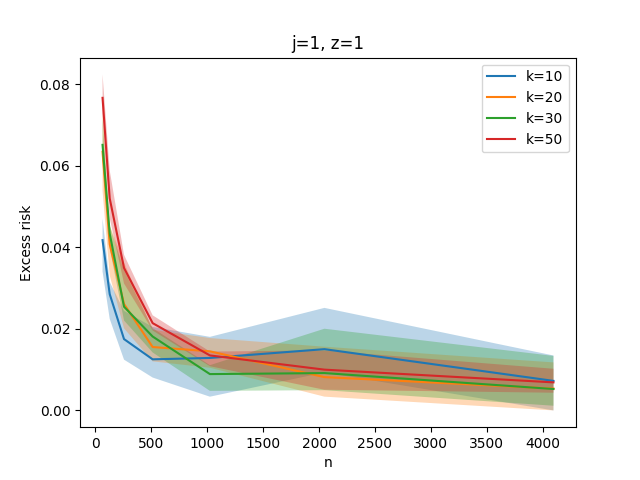}
\end{subfigure}
\begin{subfigure}
    \centering
\includegraphics[width=0.30\textwidth]{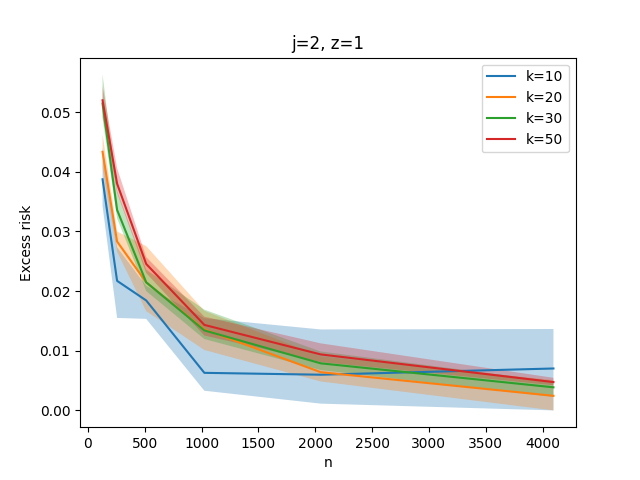}
\end{subfigure}
\begin{subfigure}
    \centering
\includegraphics[width=0.30\textwidth]{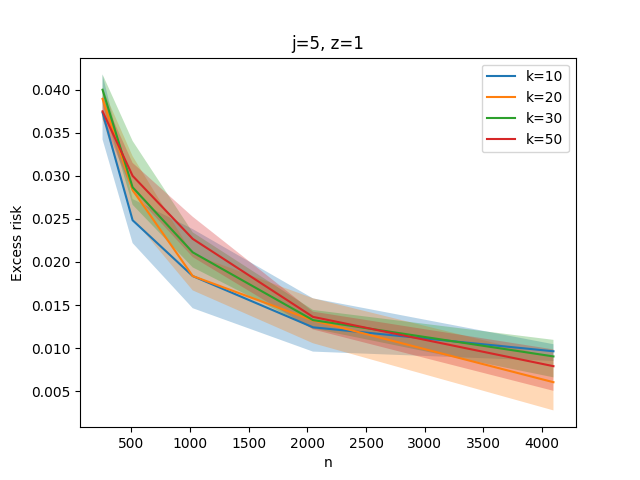}
\end{subfigure}
    \begin{subfigure}
        \centering
        \includegraphics[width=0.30\textwidth]{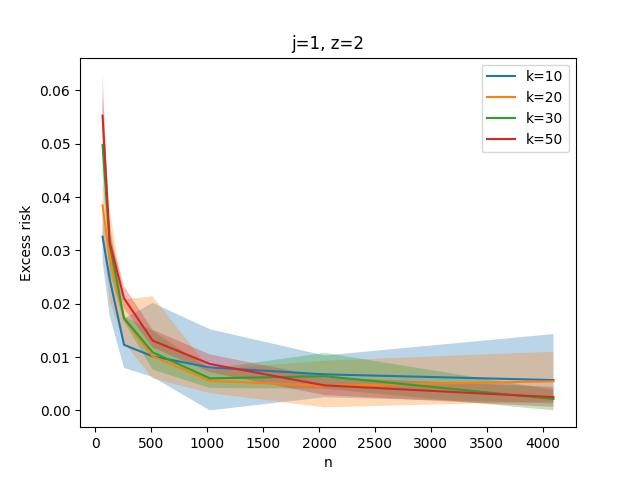}
    \end{subfigure}
    \begin{subfigure}
        \centering
        \includegraphics[width=0.30\textwidth]{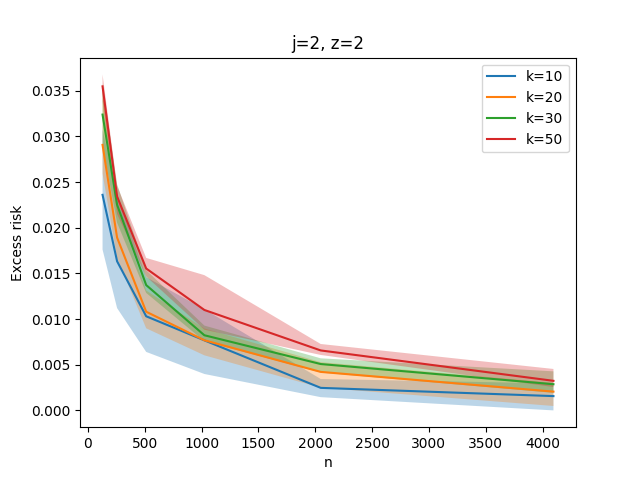}
    \end{subfigure}
    \begin{subfigure}
    \centering
\includegraphics[width=0.30\textwidth]{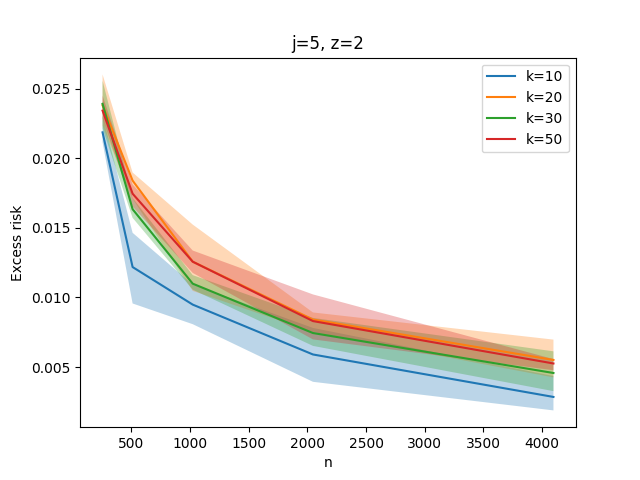}
    \end{subfigure}
    
\begin{subfigure}
    \centering
\includegraphics[width=0.30\textwidth]{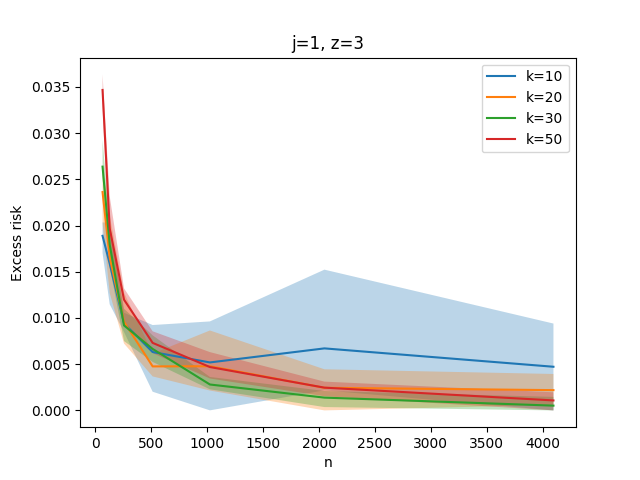}
\end{subfigure}
\begin{subfigure}
    \centering
\includegraphics[width=0.30\textwidth]{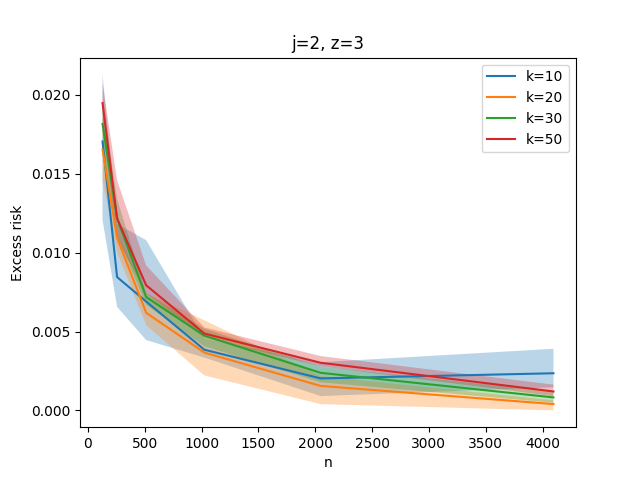}
\end{subfigure}
\begin{subfigure}
    \centering
\includegraphics[width=0.30\textwidth]{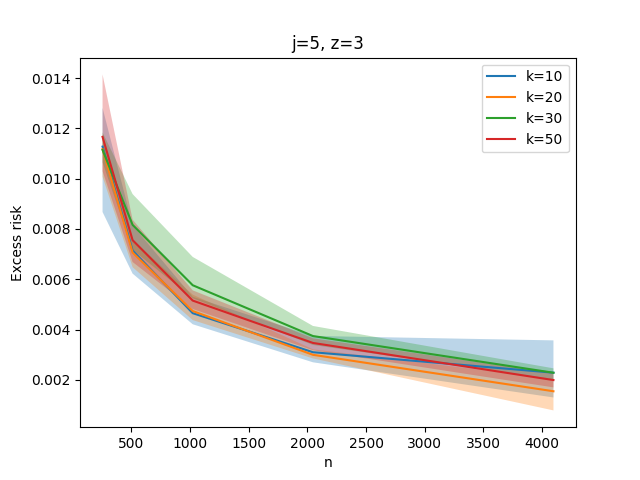}
\end{subfigure}
\begin{subfigure}
    \centering
\includegraphics[width=0.30\textwidth]{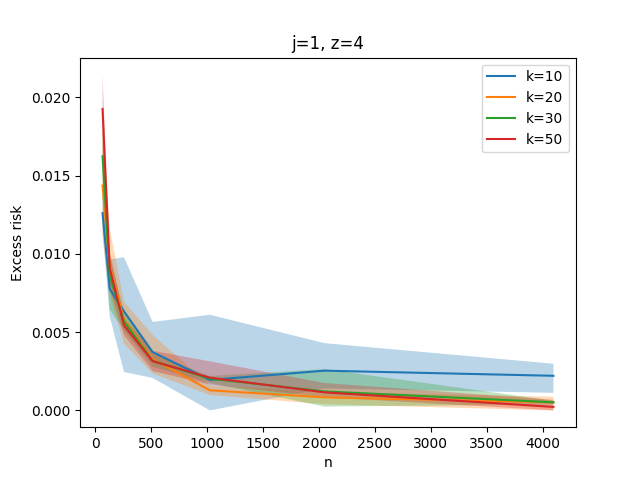}
\end{subfigure}
\begin{subfigure}
    \centering
\includegraphics[width=0.30\textwidth]{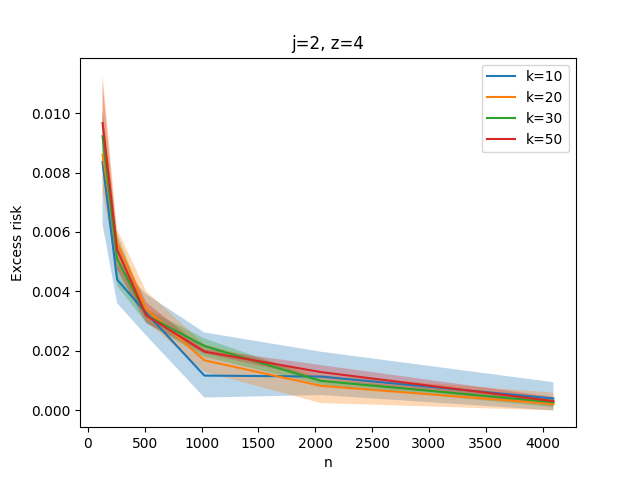}
\end{subfigure}
\begin{subfigure}
    \centering
\includegraphics[width=0.30\textwidth]{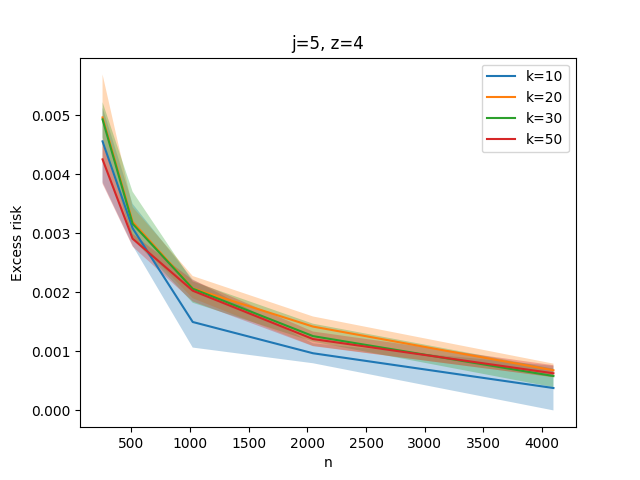}
\end{subfigure}
\caption{Excess risk for subspace clustering on the Mushroom dataset. The shaded areas indicate min/max values}
\label{Excess risk center based_mushroom}
\end{figure*}
\begin{figure*}[ht]
    \centering
    \begin{subfigure}
    \centering
\includegraphics[width=0.40\textwidth]{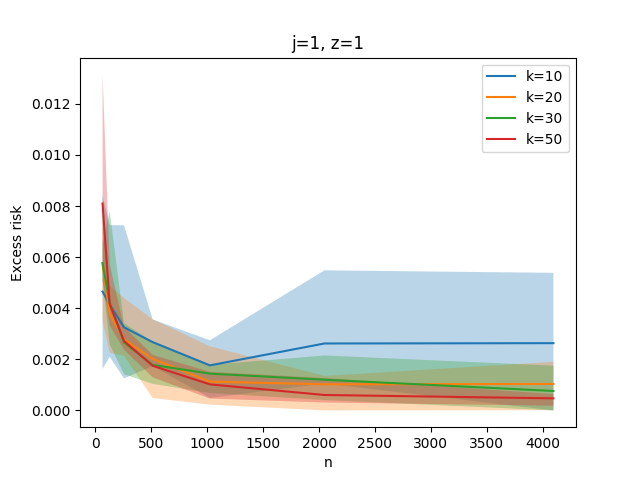}
\end{subfigure}
\begin{subfigure}
    \centering
\includegraphics[width=0.40\textwidth]{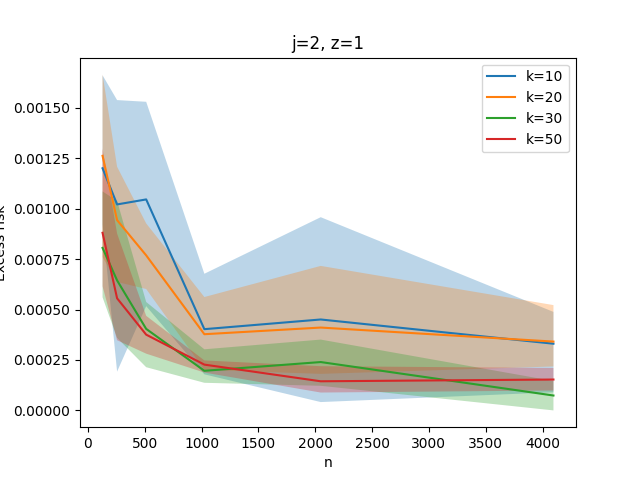}
\end{subfigure}
    \begin{subfigure}
        \centering
        \includegraphics[width=0.40\textwidth]{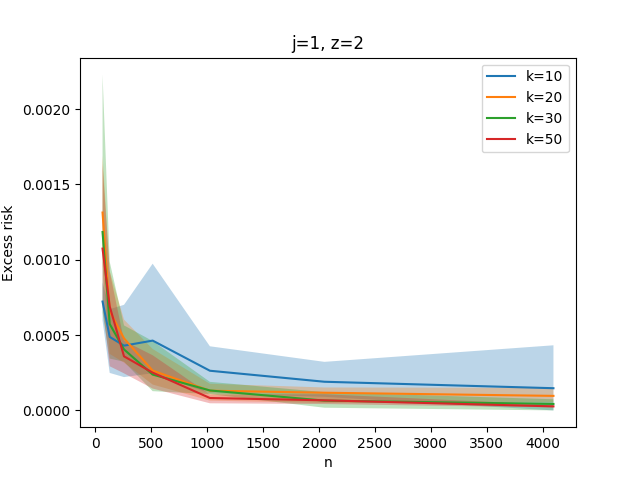}
    \end{subfigure}
    \begin{subfigure}
        \centering
        \includegraphics[width=0.40\textwidth]{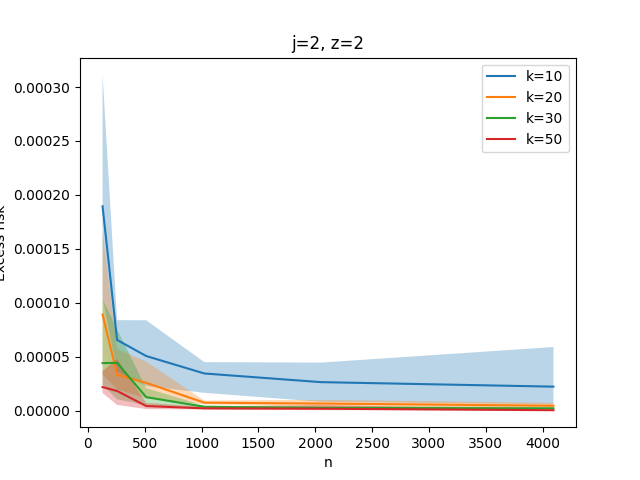}
    \end{subfigure}
    
\begin{subfigure}
    \centering
\includegraphics[width=0.40\textwidth]{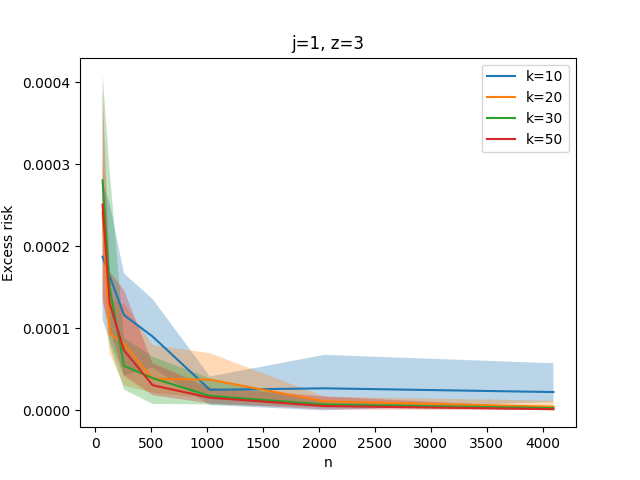}
\end{subfigure}
\begin{subfigure}
    \centering
\includegraphics[width=0.40\textwidth]{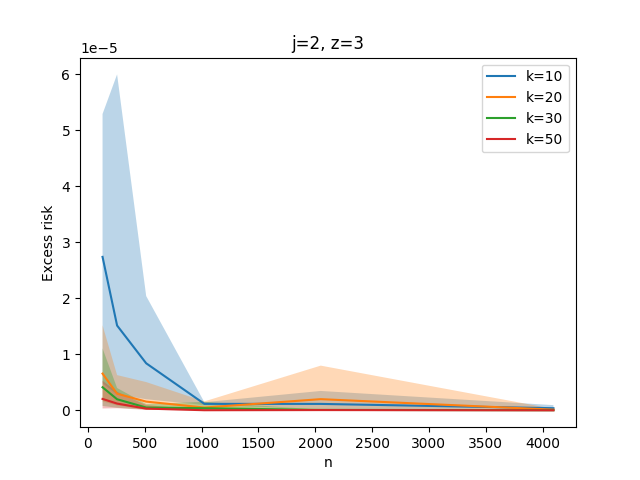}
\end{subfigure}

\begin{subfigure}
    \centering
\includegraphics[width=0.40\textwidth]{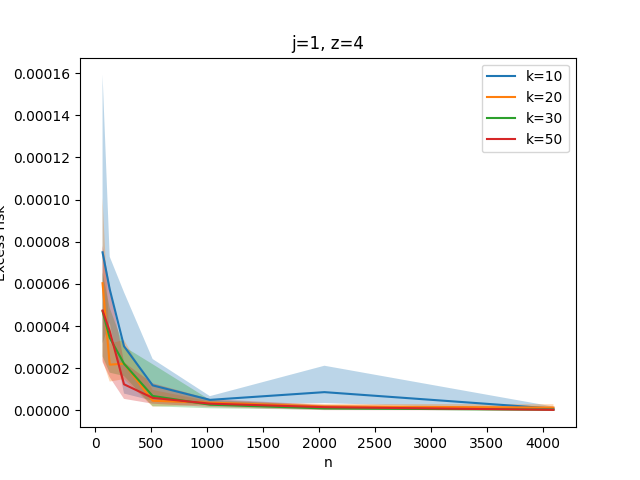}
\end{subfigure}
\begin{subfigure}
    \centering
\includegraphics[width=0.40\textwidth]{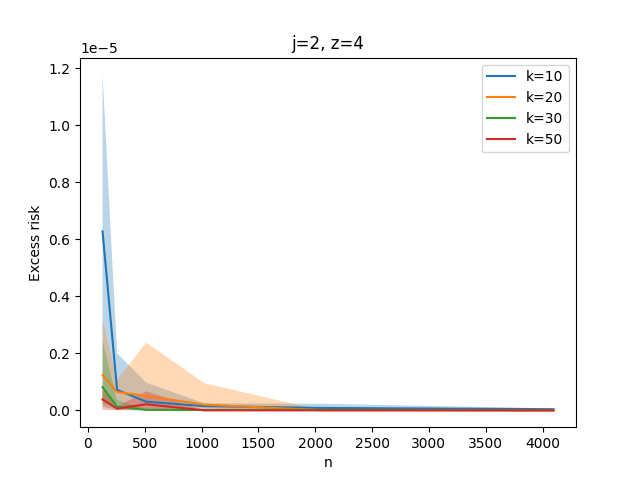}
\end{subfigure}
\caption{Excess risk for subspace clustering on the Skin\_Nonskin dataset. The shaded areas indicate min/max values}
\label{Excess risk center based_skin}
\end{figure*}
\begin{table}[ht]

\caption{Best fit lines on Covtype and Mushroom (left to right) \label{best_fit_center_based} }
\centering
\begin{subtable}
\centering
\begin{tabular}{llll}
\toprule
z & $c$                        & $q_1$ & $q_2$ \\ \midrule 
1 & $3\cdot10^{-2}$  & 0.44  & 0.54  \\ 
2 & $4\cdot10^{-3}$ & 0.42  & 0.52  \\ 
3 & $6\cdot10^{-4}$ & 0.44  & 0.51  \\ 
4 & $1\cdot10^{-4}$ & 0.44  & 0.51  \\ 
\bottomrule
\end{tabular}
\end{subtable}
\hspace{20pt}
\begin{subtable}
\centering
\begin{tabular}{llll}
\toprule
z & $c$                        & $q_1$ & $q_2$ \\ \midrule 
1 & 1 $\cdot 10^{-1}$ & 0.48 & 0.51 \\ 
2 & 8 $\cdot 10^{-2}$ & 0.48 & 0.51 \\ 
3 & 4 $\cdot 10^{-2}$ & 0.49 & 0.50 \\ 
4 & 3 $\cdot 10^{-2}$ & 0.49 & 0.50 \\ 
\bottomrule
\end{tabular}
\end{subtable}
\end{table}

\begin{table}[ht]
\caption{Best fit lines on Skin\_NonSkin and MNIST (left to right) \label{best_fit_center_based_2} }
\centering
\begin{subtable}
\centering
\begin{tabular}{llll}
\toprule
z & $c$                        & $q_1$ & $q_2$ \\ \midrule
1 & $2 \cdot 10^{-2}$ & 0.49 & 0.50 \\ 
2 & $3 \cdot 10^{-3}$ & 0.47 & 0.52 \\ 
3 & $8 \cdot 10^{-4}$ & 0.46 & 0.53 \\ 
4 & $2 \cdot 10^{-4}$ & 0.46 & 0.53 \\ 
 \bottomrule
\end{tabular}
\end{subtable}
\hspace{20 pt}
\begin{subtable}
\centering
\begin{tabular}{llll}
\toprule
z & $c$                        & $q_1$ & $q_2$ \\ \midrule 
1 & $1 \cdot 10^{-1}$ & 0.49 & 0.51 \\ 
3 & $5 \cdot 10^{-2}$ & 0.50 & 0.50 \\ 
4 & $3 \cdot 10^{-2}$ & 0.50 & 0.50 \\ 
2 & $8 \cdot 10^{-02}$ & 0.50 & 0.50 \\ 
\bottomrule
\end{tabular}
\end{subtable}

\end{table}

\begin{table}[ht]
\caption{Best fit line for subspace clustering on Covtype and Mushroom (left to right)}
\centering
\begin{subtable}
\centering
\begin{tabular}{lllll}
\toprule
j & z & $c$                        & $q_1$ & $q_2$ \\ \midrule 
1&1 & $0.1$  & 0.45  & 0.54  \\ \hline
1 & 2 & $2\cdot 10^{-2}$ & 0.48 & 0.51 \\ 
1 &3 & $3\cdot10^{-4}$ & 0.46  & 0.53  \\ 
1&4 & $4\cdot10^{-5}$ & 0.46  & 0.52  \\ 
2 & 1 & $8\cdot10^{-2}$ & 0.48  & 0.51  \\ 
2 & 2 & $2\cdot 10^{-3}$ & 0.47 & 0.51\\ 
2 & 3 & $4\cdot 10^{-5}$ & 0.46 & 0.53 \\ 
2 & 4 & $2\cdot 10^{-6}$ & 0.46 & 0.52 \\ 
5 & 1 & $8\cdot 10^{-3}$ & 0.48 & 0.51\\ 
5 & 2 & $5\cdot 10^{-5}$ & 0.46 & 0.53 \\ 
5 & 3 & $4\cdot 10^{-7}$ & 0.47 & 0.52 \\ 
5& 4 & $3\cdot 10^{-9}$ & 0.47 & 0.51  \\ 
\bottomrule
\end{tabular}
\label{best_fit_subspaces}
\end{subtable}
\hspace{20pt}
\begin{subtable}
\centering
\begin{tabular}{lllll}
\toprule
j & z & $c$                        & $q_1$ & $q_2$ \\ \midrule 
1 & 1 & $1 \cdot 10^{-1}$ & 0.48 & 0.51 \\ 
1 & 2 & $1 \cdot 10^{-1}$ & 0.48 & 0.51 \\ 
1 & 5 & $1 \cdot 10^{-1}$ & 0.49 & 0.49 \\ 
2 & 1 & $7 \cdot 10^{-2}$ & 0.48 & 0.51 \\ 
2 & 2 & $6 \cdot 10^{-2}$ & 0.50 & 0.49 \\ 
2 & 5 & $6 \cdot 10^{-2}$ & 0.49 & 0.48 \\ 
3 & 1 & $4 \cdot 10^{-2}$ & 0.49 & 0.50 \\ 
3 & 2 & $3 \cdot 10^{-2}$ & 0.49 & 0.50 \\ 
3 & 5 & $3 \cdot 10^{-2}$ & 0.49 & 0.49 \\ 
4 & 1 & $2 \cdot 10^{-2}$ & 0.49 & 0.50 \\ 
4 & 2 & $2 \cdot 10^{-2}$ & 0.49 & 0.50 \\ 
4 & 5 & $1 \cdot 10^{-2}$ & 0.48 & 0.50 \\ 
\bottomrule
\end{tabular}
\label{best_fit_subspaces_Mushroom_Covtype}
\end{subtable}
\end{table}

\begin{table}[ht]
\centering
\caption{Best fit line for subspace clustering on Skin-Nonskin}
\centering
\begin{tabular}{lllll}
\toprule
j & z & $c$                        & $q_1$ & $q_2$ \\ \midrule 
1 & 1 & $1 \cdot 10^{-2}$ & 0.48 & 0.50 \\ 
1 & 2 & $3 \cdot 10^{-3}$ & 0.45 & 0.53 \\ 
2 & 1 & $2 \cdot 10^{-3}$ & 0.46 & 0.53 \\ 
2 & 2 & $2 \cdot 10^{-4}$ & 0.46 & 0.53 \\ 
3 & 1 & $4 \cdot 10^{-4}$ & 0.46 & 0.53 \\ 
3 & 2 & $2 \cdot 10^{-5}$ & 0.46 & 0.53 \\ 
4 & 1 & $9 \cdot 10^{-5}$ & 0.46 & 0.53 \\ 
4 & 2 & $3 \cdot 10^{-6}$ & 0.46 & 0.53 \\ 
\bottomrule
\end{tabular}
\label{best_fit_subspaces_Skin}
\end{table}
\end{document}